\documentclass[10pt]{article}
\usepackage[table]{xcolor}
\usepackage{minitoc}
\usepackage{titletoc}
\newcommand\DoToC{%
  \startcontents
  \printcontents{}{1}{\hrulefill\vskip0pt}
  \vskip0pt \noindent\hrulefill
  }
\usepackage{amsmath,amsfonts,eurosym,geometry,graphicx,caption,color,setspace,sectsty,footmisc,pdflscape,subcaption,array,float,amsthm,mathtools,nicematrix,mathrsfs,stmaryrd}

\allowdisplaybreaks
\newcommand{\F}{\mathbb{F}}
\newcommand{\GL}{\mathrm{GL}}

\usepackage{lscape}
\usepackage{booktabs}
\usepackage{placeins}
\usepackage{geometry}
\usepackage{pdflscape}
    \usepackage{indentfirst}
\usepackage{import}
\usepackage{lscape}
\usepackage{graphicx}
\graphicspath{ {./images/} }
\usepackage{ragged2e}
\usepackage{wrapfig}
\usepackage{makecell}
\setcellgapes{0.5pt}
\renewcommand{\arraystretch}{0.9}

\usepackage{csquotes}
\usepackage{tikz}
\usepackage{tikz-3dplot}
\usepackage{pstricks}
\usepackage{pgfplots}
\pgfplotsset{compat=1.18}
\usetikzlibrary{decorations.pathreplacing,  shapes,arrows,backgrounds,fit,positioning}
\definecolor{offwhite}{HTML}{F2EDED}
\tikzset{
    > = stealth,
    every node/.append style = {
        draw = black,
        circle,
        text = black
    },
    every path/.append style = {
        arrows = ->,
        draw = black
    },
    hidden/.style = {
        draw = offwhite,
        circle,
        fill= black!20,
        text = black
    }
}

\numberwithin{figure}{section}
\usepackage{etoolbox}
\AtBeginEnvironment{quote}{\par\singlespacing\small}

\newtheorem{assumption}{Assumption}
\newtheorem{theorem}{Theorem}
\newtheorem{lemma}{Lemma} 
\newtheorem{proposition}{Proposition}[section]
\newtheorem{corollary}{Corollary}[theorem]
\newtheorem{definition}{Definition}
\newtheorem{example}{Example} 
 
\newtheorem{remark}{Remark}
\newtheorem{fact}{Fact}

\newcolumntype{L}[1]{>{\raggedright\let\newline\\arraybackslash\hspace{0pt}}m{#1}}
\newcolumntype{C}[1]{>{\centering\let\newline\\arraybackslash\hspace{0pt}}m{#1}}
\newcolumntype{R}[1]{>{\raggedleft\let\newline\\arraybackslash\hspace{0pt}}m{#1}}

\usepackage{natbib}
\usepackage{titling}
\newcommand{\subtitle}[1]{%
  \posttitle{%
    \par\end{center}
    \begin{center}\large#1\end{center}
    \vskip0.5em}%
}

\usepackage{graphicx}
\usepackage[
    hidelinks,
    colorlinks=true,
    linkcolor= cyan,     % ref, eqref, etc.
    citecolor= red,     % \cite
    urlcolor=red       % \url
]{hyperref}
\usepackage{cleveref}
\usepackage[normalem]{ulem}
\crefname{proposition}{Prop.}{Prop.}
\crefname{aproposition}{Prop.}{Prop.}
\crefname{lemma}{Lem.}{Lem.}
\crefname{alemma}{Lem.}{Lem.}
\begin{document}
\title{Causal Representation Meets Stochastic Modeling \\under Generic Geometry} 

\author{
    Jiaxu Ren,\textsuperscript{1,3} \quad
    Yixin Wang,\textsuperscript{2} \quad and
    Biwei Huang\textsuperscript{3} \\[3mm]
    \textsuperscript{1}CDS, Courant Institute of Mathematics \\
    \textsuperscript{2}Department of Statistics, University of Michigan \\
    \textsuperscript{3}Halıcıoğlu Data Science Institute, UCSD
}

\maketitle
\begin{abstract}
\noindent 

Learning meaningful causal representations from observations has emerged as a crucial task for facilitating machine learning applications and driving scientific discoveries in fields such as climate science, biology, and physics. This process involves disentangling high-level latent variables and their causal relationships from low-level observations. Previous work in this area that achieves identifiability typically focuses on cases where the observations are either i.i.d. or follow a latent discrete-time process. Nevertheless, many real-world settings require identifying latent variables that are continuous-time stochastic processes (e.g., multivariate point processes). To this end, we develop identifiable causal representation learning for continuous-time latent stochastic point processes. We study its identifiability by analyzing the geometry of the parameter space. Furthermore,  we develop MUTATE, an identifiable variational autoencoder framework with a time-adaptive transition module to infer stochastic dynamics. Across simulated and empirical studies, we find that MUTATE can effectively answer scientific questions,  such as the accumulation of mutations in genomics and the mechanisms driving neuron spike triggers in response to time-varying dynamics.

\end{abstract}
%Main texts:
\section{Introduction}

Inferring causal relationships among variables from observations capitalizes the potential of machine learning to advance scientific discovery, as it reveals underlying mechanisms that are not identifiable from observational distributions alone~\citep{pearl2009causality}.  %To determine whether observed data reveals a stable causal mechanism that represents the underlying, unseen truth in a process, one needs to learn a causal graph that aligns with the conditional independence present in the data (i.e., the strong faithfulness assumption). This area of research is referred to as causal discovery, with applications including recommendation systems, gene regulatory networks, and video generation, where large amounts of data are available. 
However, we often do not have access to the causal variables but only the high-dimensional perceptual data, and  causal variables with their structures are unknown and thus need to be learned. Yet, these latent causal variables are often not identifiable~\citep{hyvarinen_nonlinear_1999,sorrenson_disentanglement_2020}.  Recently, a growing number of studies on the disentanglement of latent causal representations have developed identifiability guarantees and proposed methods for estimating latent causal variables. Seminal works among them establish identifiability by leveraging sufficient variability in latent distribution arising from multiple-source data~\citep{yao_temporally_2022,song_temporally_2023}, auxiliary variable~\citep{hyvarinen_nonlinear_1999,hyvarinen_unsupervised_2016,hyvarinen_nonlinear_2017,hyvarinen_nonlinear_2019}, or intervention to a latent causal graph~\citep{ahuja_interventional_2023,pmlr-v202-squires23a,jiang_learning_2023,bing_identifying_2024,buchholz_learning_2023}.

Most recent work mentioned above aims to recover the latent causal variables that follow a discrete-time process~\citep{yao_temporally_2022,song_temporally_2023} and that are mixed by an invertible function. However, many latent causal variables of interest are continuous-time processes in practice; and the study of latent continuous-time causal variables driven by stochastic processes or systems of stochastic differential equations has received little attention, especially when mixing functions are non-invertible and more generic\footnote{Following the standard usage in algebraic geometry, “a generic point of $X$ has property $P$” means that there exists a dense open subset $U \subseteq X$ such that every point of $U$ has property $P$ 
\citep[see][Ch.~I]{EisenbudHarris2000}}. For example, in video surveillance systems, cameras are strategically placed to detect and deter crime, safeguard against potential threats to the public, and manage emergency response situations during natural and man-made disasters~\citep{lima_granger_2020,bacry_hawkes_2014,bacry_non-parametric_2012}. In biology, fatal diseases such as cancer are principally caused by multiple cumulative mutations in driver genes as the colonial expansion proceeds. In neuroscience, the latent event dynamics trigger visible biological signals~\citep{reynaud-bouret_adaptive_2010,pmlr-v238-lorch24a}. Finding cancer-associated mutational genes and tracking their behavior through their representation has been given much more paramount importance in recent few decades~\citep{torkamani_identification_2009,bailey_comprehensive_2018,nourbakhsh_prediction_2024}. Driven by the practical promise across applications, we study \emph{when continuous-time latent stochastic point processes and their causal structure are identifiable}, and develop \emph{algorithms to learn these latent dynamics from high-dimensional data}.

\paragraph{Contributions.}
\begin{itemize}
    \item We establish the first necessary and sufficient conditions that guarantee the full identifiability of latent point processes under generic, non-invertible mixing.
    \item We propose MUTATE, a novel identifiable variational auto-encoding method for learning causal representations of stochastic point processes.
\end{itemize}
%\textcolor{red}{BH: may mention we achieve necessary and sufficient conditions} 

\section{Related Work}\label{extended_relatedwork}

\paragraph{Causal disentanglement and learning time series.}~Although estimating and predicting time series is a classical problem in both traditional statistics and modern machine learning, representation learning has opened new avenues for leveraging latent information to better characterize time series data~\citep{wu_autoformer_2021,liu_itransformer_2024}. Recently, learning causal representations in time series has become a foundational approach for enabling new scientific discoveries. This line of research primarily focuses on establishing identifiability of causal latent variables by exploiting nonstationary data~\citep{yao_learning_2022,yao_temporally_2022} and modular distribution shifts~\citep{song_temporally_2023,cai_orthogonality_2024} with sparsity constraints~\citep{NEURIPS2024_8cef4e4b,zhang_causal_2024} on the latent transition. Those works solve the identifiability problem of latent causal models by leveraging sufficient variability that can come from proper interventions or passive distribution shifts. Another line of research focuses on learning the underlying causal graph among latent variables. 
\vspace{-8pt}
\paragraph{Learning causal influences in stochastic processes.}~While learning causality remains a considerably more challenging task than causal discovery or representation learning, several efforts have been made to bridge these areas. Here, we review existing approaches that link causal learning with stochastic modeling. Our scope is not limited to causal representation learning with stochastic processes, but extends to a broader set of problems that are closely related to either domain. 

One representative direction in causal learning for dynamical systems is the study of Granger causality—a broader and looser notion compared to strictly structured causal models~\citep{achab_uncovering_2018}. It is widely acknowledged that full causal recovery in such systems is impossible. Consequently, even the most recent work on stochastic processes can only determine whether a point process $a$ is Granger-causal or non-causal with respect to another process $b$, typically formalized through \textit{local independence} and the $\delta$-separation rule~\citep{didelez_graphical_2008}. Another active line of work concerns identifiability in dynamical systems~\citep{lippeCausalRepresentationLearning2023}. However, to the best of our knowledge, none of these models provides provable guarantees for highly dynamical systems such as self-exciting or more general stochastic processes. 

Connections between causal representation and dynamical systems have also been explored through ordinary differential equations (ODEs)~\citep{yaoMarryingCausalRepresentation2024}. Technically, these approaches recover only a set of parameters that are difficult to interpret as causal in the latent space, or at best allow stochastic dynamics in the observed variables. More recently, causal diffusion models have been proposed~\citep{karimimamaghanDiffusionBasedCausalRepresentation2024,pmlr-v238-lorch24a}, yet they largely treat diffusion as a standard denoising process and thus do not permit a well-structured stochastic latent causal representation. 

Another important research direction is investigate interventions on stochastic processes and the corresponding post-intervention distributions, which serve as the basis for causal inference~\citep{sokol_intervention_2013,bongers_causal_2018,bongers_causal_2022,boeken_dynamic_2024,pmlr-v238-lorch24a}. The first attempt to introduce a causal interpretation into stochastic differential equations (SDEs) was made by authors of ~\citep{sokol_causal_2014}, where interventions are defined as the removal of single variables in SDEs. They showed that causal principles in SDEs can be formalized as interventions, with the resulting post-interventional distribution identifiable via the infinitesimal generator. However, such interventions are too restrictive to capture more complex dynamical scenarios. Following this initial line of work, \citep{bongers_causal_2018} further develops methods for estimating stationary causal models by minimizing the deviation of stationarity of diffusion. Nevertheless, they consider only observed diffusion processes that model causal effects from soft interventions that change the drifting term.

\section{Problem Setup: Causal Representation with Stochastic Point Process}

\subsection{Preliminaries and notations}

Let $O_t\in \mathbb{R}^n$ be observable data, and $Z_t \in \mathbb{R}^p$ be a latent causal process with independent noise $\epsilon_t \in \mathbb{R}^p$. $O_t$ is being generated from latent point processes $Z_t$ through an unknown, arbitrary mixing function $f$.  A multi-way array $A^{\otimes d}$ denotes the tensor/Kronecker product.  In a time process,  $\Phi \in \mathbb{R}^{p \times p}$ denotes the transition operator (e.g., an autoregressive coefficient matrix or continuous kernel matrix) and the symbol $\star$  represents the convolution operator with kernel effects. We assume a probability space $(S, \mathcal{B}(S), \mathbb{P})$, where $S$ is a Polish space (i.e., a complete separable metric space), $\mathcal{B}(S)$ is the Borel $\sigma$-algebra, and $\mathbb{P}$ is the probability measure, with $\mu$  a generic measure (e.g., for noise or intensity). $\mathcal{F}_t$ is the natural filtration up to the time $t$ of a process. Let $K$ denote an algebraically closed field\footnote{By definition, a field $k$ is \emph{algebraically closed} if every 
non-constant polynomial $f(x) \in k[x]$ has a root in $k$, or equivalently, 
if $k$ admits no proper algebraic extension 
\citep[see][Ch.v]{Lang2002, AtiyahMacdonald1969}.} of characteristic zero. Throughout the theoretical proof, and unless specified otherwise, we work over this field $K$. 

\subsection{A Generative model for stochastic point processes}

Throughout this paper, we consider a branch of non-homogeneous stochastic processes (Hawkes process) with dynamics governed by a conditional intensity defined as follows.  \begin{wrapfigure}{r}{0.3\textwidth}
    \centering
    \begin{tikzpicture}[node distance=0.3cm, align=center,scale=0.3 ]
        \node (Z1) [fill=blue!8,minimum size = 5mm,inner sep = 0.2mm]{$Z_{t_0}$};
        \node (Z2) [right = 1 cm 
 of Z1,fill=blue!8,minimum size = 5mm,inner sep = 0.2mm]{$Z_{t_1}$};
         \node (Z3) [right = 1.5cm of Z2, fill=blue!8,minimum size = 5mm,inner sep = 0.2mm]{$Z_{t_2}$};
        \path[thick] (Z1) edge [out=-30,in= -150](Z2);
        \path[thick] (Z2) edge[out=-30,in= -150] (Z3);
        \path[] (Z1) edge[out=28, in=152] (Z3);
%add filtration
   \node (H0) [below= 0.5cm of Z1, minimum size = 2mm, text height=0.6mm, rectangle, text depth=0.2mm,inner sep = 0.2mm, draw=none]{};
      \node (H1) [right = 1.5cm of H0, minimum size = 2mm, text height=0.6mm, rectangle, text depth=0.2mm,inner sep = 0.2mm,draw=none]{};

      \node (H2) [below= 0.3cm of Z1, minimum size = 2mm, text height=0.6mm, rectangle, text depth=0.2mm,inner sep = 0.2mm, draw=none]{};
      \node (H3) [right = 4cm of H2, minimum size = 2mm, text height=0.6mm, rectangle, text depth=0.2mm,inner sep = 0.2mm,draw=none]{};
      \path[| - >, thick, dashed](H0) edge (H1);
      \path[| - >, thick, dashed](H2) edge (H3);

      %add notation
      \node (h0) [below= 0.3cm of H1, minimum size = 2mm, text height=0.3mm, rectangle, text depth=0.2mm,inner sep = 0.2mm, draw=none]{$\mathcal{F}_{t_1}$};
       \node (h0) [below= 0.3cm of H3, minimum size = 2mm, text height=0.3mm, rectangle, text depth=0.2mm,inner sep = 0.2mm, draw=none]{$\mathcal{F}_{t_2}$};
   
%enclosed
%add stochastic

   \end{tikzpicture}
    \caption{Visualization of information loss in increasing filtration.}
    \label{history_module}
\end{wrapfigure}
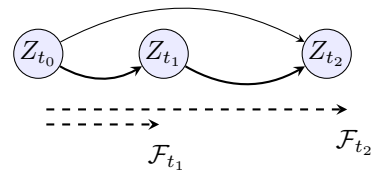
\begin{definition}[Conditional intensity, informal~\citep{bacry_second_2014}]\label{main:def:1}
Suppose a collection of latent processes that evolve stochastically and exhibit self-exciting dynamics over time. Specifically, let $Z_t := N_t$ denote the cumulative count process up to time $t$. We write $i \leftarrow j$ to indicate that the process $j$ exerts an influence on $i$. Accordingly, the conditional intensity of process $i$ at time $t$ is given by
\[
\lambda_t^i = \mu_i + \sum_{j} \int_{0}^{t} \phi_{i\leftarrow j}(t-s)\, N_s(\Delta)^j ,
\]
where $\mu_i\in U$ is the baseline rate and $\phi_{i\leftarrow j}\in \Phi$ characterizes the excitation kernel from process $j$ to $i$. The counting process $N_t^i$ and the conditional intensity $\lambda_t^t$ satisfy: $N_{t+\Delta}^i-N_{t}^i=\ N_{t}(\Delta)^i\ \text{and  }
     \lambda_t^i =\frac{\mathbb{E}{[dN_t^i|\mathcal{F}_t]}}{\ dt}$. $\mathcal{F}_t$ is called a filtration of the stochastic process, i.e., the sigma-algebra representing all information available up to $t$ (cf. \ref{F}).
\end{definition}
For such a point process to be well-defined, some non-trivial constraints are required, one of which is the stationary condition, an assumption widely adopted in most stochastic process literature to ensure the uniqueness of the process.
\begin{assumption}\label{Linear_idf_Assmption_2}
\begin{enumerate}
    \item (Stationary increments) The process \( N_t^{(\Delta)} \) is wide-sense stationary, i.e., its first and second moments exist and are time-invariant. In particular, the intensity process \( \mathbb{E}[\Lambda_t] \) is uniformly bounded and \( dN_t^{(\Delta)} \) has stationary increments.
    
    \item (Kernel Integrability) The convolutional causal kernel \( \Phi_t \in \mathbb{R}^{p \times p} \) is square-integrable, i.e.,
    \[
    \int_0^\infty \|\Phi_t\|_F^2 \, dt < \infty,
    \]
    where \( \|\cdot\|_F \) denotes the Frobenius norm.
\end{enumerate}
\end{assumption}

We now formally state the problem of identifying the generative model for stochastic point processes. We consider a collection of unstructured low-level observations $O=(O_t)_{t\le T}$ generated from the latent process $N_t$ through an arbitrary mixing function $f$. Compactly, by absorbing the kernel matrix $\Phi$ and the counting process $N_t$ into a standard convolution operator, the generative model can be written as
\begin{align}
    O_t = f(N_t(\Delta)), \qquad \ \lambda_t = U+ \Phi_{t}\star N_{t}(\Delta).\label{main:eq.2}
\end{align}
Thus, the central goal is to recover the parameter space $\Theta:= (f,N_t,\lambda_t,\Phi,U)$ given samples or the full distribution of observations $O_t$. Regarding theoretical soundness, we adopt a setting in which the form of the mixing function, the number of latent causal processes, and their causal structure are fully unknown.

\section{Identifiability Theory}\label{sec:identifiability}

In this section, we establish the identifiability of the latent causal stochastic point process. We begin by introducing a family of general equivalent classes, a model that can be maximally identified from the given data. Then, the geometry of the parameter space of this equivalence class is examined to ensure the full recovery of both the mixing map and the kernels, together with the causal structure encoded in the algebraic structure of the parameters. All detailed proofs are deferred to Appendix~\ref{app:prop} and~\ref{Appendix_proof}, and the discussion on generalization of our identifiability can be found in Appendix~\ref{extentded_proof}.

\subsection{Maximally identifiable equivalent classes }\label{4.1}

We begin by introducing the maximal equivalent class that can be identified from discrete-time observations. Suppose we observe a discrete-time observation sequence \(O_{t_0}, O_{t_0+\Delta}, O_{t_0+2\Delta}, \ldots, O_{t_0+k\Delta}\) at times $t_0, t_0+\Delta, t_0+2\Delta, \ldots, t_0+k\Delta$. Given a linear Hawkes-type intensity, we are provided a discretized latent process \( Z_t^{(\Delta)} \) under the subsequence \( \Delta \), with its associated intensity: $\lambda_t^{(\Delta)} = u + \phi(\Delta) \cdot \Delta dN_t^{(\Delta)}$. The discrepancy between realizations arises due to the mismatch between the continuous-time dynamics and its discrete approximation, i.e., $\lambda_t^{(\Delta \to 0)} \neq \lambda_t^{(\Delta)}$, which implies that only latent processes generated under the same discretization scale \( \Delta \) as the observation resolution can be recovered from \( O_t^{(\Delta)} \). Therefore, the identifiability of the underlying latent dynamics is constrained to a discrete-time equivalence class determined by the resolution of observation. To capture distribution-level changes and dynamics, we argue that recovering the distributional behavior of the latents suffices for most scientific tasks and can be used to generate the latents at any other $\Delta$ scale. Accordingly, we can identify only an equivalence class, as defined below.

\begin{definition}[Weakly-convergent equivalent class]\label{maindef_cw}
Let \( (dN_t, \lambda_t) \) denote the ground-truth latent point process and its associated continuous-time intensity function. A pair \( (Z^{(\Delta)}, \lambda^{(\Delta)}) \) is said to belong to the \textbf{Weakly-convergent equivalent class} of \( (dN_t, \lambda_t) \) if it satisfies the following weak convergence condition:
\[
(Z^{(\Delta)}, \lambda^{(\Delta)}) \xRightarrow[\Delta \to 0]{d} (dN_t, \lambda_t),
\]
i.e., the estimated latent process and its discrete-time intensity converge in distribution to the ground-truth continuous-time process as the resolution parameter \( \Delta \to 0 \).  
\end{definition}

Thanks to~\autoref{maindef_cw}, it is sufficient to find such a model belonging to the equivalent class and establish its identifiability. Following~\citet{kirchner_hawkes_2016}, we revisit the close connection between order-$l$ integer-value autoregressive processes (INAR($l$)) and the multivariate stochastic point process through convergence limits when $l\rightarrow\infty$.  An INAR($\infty$) process is defined as the infinite-order autoregressive model with integer variables $Z_t$. In particular, it has the form:
\begin{align}
    Z_t = \sum\limits_{\tau>0}^{\infty}a_{\tau}Z_{t-\tau}+ \epsilon_t,
\end{align}
where $a_{\tau}$ is a constant coefficient and $\epsilon$ is a mutually time-wise independent noise. Our intuition is that replacing the constant coefficient with a time-invariant kernel $\phi_t$ still ensures the weak convergence to a stochastic point process $N_t$.  The main goal is to show that such a replacement belongs to the defined weakly convergent class, as established in the following lemmas.
\begin{lemma}[Bounding point process in Variational approximation]\label{prop:1}
Let \( N_t \in \mathbb{R}^p \) be a multivariate point process whose conditional intensity function $\lambda_t$ is governed by a convolution structure described in Eq.~\eqref{main:eq.2} and \( \epsilon_t \) is a mean-zero and mutually independent noise. Then the intensity model admits the following weak convergence:
\begin{equation}\label{converge1}
    Z_k^{\Delta}:= \lambda_k^{\Delta} + \epsilon_k^{\Delta} \xrightarrow{w} N_k^{\Delta},
\end{equation}
where the subscript $k$ denotes an arbitrary subsequence process and $\lambda_k^{\Delta}$ is the corresponding intensity under the same subsequence.
\end{lemma}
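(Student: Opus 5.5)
The plan is to reduce the statement to the Hawkes-process limit theorem of \citet{kirchner_hawkes_2016}, which says that a suitably scaled INAR($\infty$) process converges weakly to a Hawkes process.

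\textbf{Step 1 (bin counts as autoregression plus noise).} Fix $\Delta>0$ and define the bin counts $X_k^{\Delta}:=N_{k\Delta}-N_{(k-1)\Delta}=N_{(k-1)\Delta}(\Delta)$. Set
\[
\lambda_k^{\Delta}:=\mathbb{E}[X_k^{\Delta}\mid\mathcal{F}_{(k-1)\Delta}], \qquad \epsilon_k^{\Delta}:=X_k^{\Delta}-\lambda_k^{\Delta}.
\]
By construction $\epsilon_k^{\Delta}$ is a martingale-difference sequence, so it is mean zero and uncorrelated across $k$. This gives the independence structure the statement requires for the noise. Then integrate the intensity of Eq.~\eqref{main:eq.2} over the bin and replace the convolution by its Riemann sum:
\[
\lambda_k^{\Delta}= U\Delta+\sum_{\tau\ge 1}\Delta\,\Phi_{\tau\Delta}\,X_{k-\tau}^{\Delta}+R_k^{\Delta}.
\]
This is exactly an INAR($\infty$) recursion $Z_k=\sum_\tau a_\tau Z_{k-\tau}+\epsilon_k$ with $a_\tau=\Delta\,\Phi_{\tau\Delta}$, where we replace the constant coefficients by the kernel as intended before the lemma.

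\textbf{Step 2 (the discretization remainder vanishes).} Show that $R_k^{\Delta}\to 0$ in $L^1$, uniformly in $k$ over compact time windows. Two ingredients give this. The square-integrability of $\Phi$ (part 2 of Assumption~\ref{Linear_idf_Assmption_2}) controls the Riemann-sum error via Cauchy--Schwarz. The bounded mean intensity and stationary increments (part 1) give $\mathbb{E}\|X_k^{\Delta}\|=O(\Delta)$ and $\mathbb{E}\|X_k^{\Delta}\|^2=O(\Delta)$. Combining these bounds the error of the kernel approximation by $o(\Delta)$ per bin, and hence by $o(1)$ after summing over the $O(1/\Delta)$ bins in a window. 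A subcriticality condition (spectral radius of $\int\Phi<1$) is implicit in the stationarity assumption, and I would use it to keep the infinite sums absolutely convergent.

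\textbf{Step 3 (weak convergence).} Form the piecewise-constant cumulative processes $\sum_{k\le t/\Delta}Z_k^{\Delta}$ with $Z_k^{\Delta}=\lambda_k^{\Delta}+\epsilon_k^{\Delta}$. Apply the limit theorem of \citet{kirchner_hawkes_2016} in the multivariate form: an INAR($\infty$) sequence with coefficients $\Delta\Phi_{\tau\Delta}$ and immigration $U\Delta$ converges in finite-dimensional distributions and in Skorokhod $J_1$ to the Hawkes counting process with kernel $\Phi$ and baseline $U$. Step 2 together with Slutsky's lemma transfers this conclusion from the exact INAR process to $Z_k^{\Delta}$. Restricting to an arbitrary subsequence $k$ is harmless, since weak convergence passes to subsequences. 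This yields \eqref{converge1}, and it also places $(Z^{(\Delta)},\lambda^{(\Delta)})$ in the class of Definition~\ref{maindef_cw}.

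\textbf{Main obstacle.} I expect the main difficulty to be the tightness part of Step 3 under only the square-integrability of $\Phi$. Kirchner's result assumes integrable, piecewise-continuous kernels, so extra work is needed here. First I would try to verify Aldous' criterion directly. The needed bound is on increments over short intervals $[s,s+h]$: their conditional mean is $O(h)$ uniformly, which follows from the bounded mean intensity. Their conditional variance is controlled by the second moments from part 1 of Assumption~\ref{Linear_idf_Assmption_2}. If $L^1$ integrability of $\Phi$ is needed to justify the Riemann sums in Step 2, I would add it on compact windows. Square-integrability already implies local integrability, and stationarity forces the tail of $\Phi$ to be integrable.
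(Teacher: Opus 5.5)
\textbf{There is a genuine gap, and it sits in the noise.}

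Your $\epsilon_k^{\Delta}=X_k^{\Delta}-\mathbb{E}[X_k^{\Delta}\mid\mathcal{F}_{(k-1)\Delta}]$ is a martingale difference. That makes it uncorrelated, but not independent. Take $p=1$ for simplicity. For a Hawkes process, $\mathbb{E}[(\epsilon_k^{\Delta})^2\mid\mathcal{F}_{(k-1)\Delta}]\approx\Delta\,\lambda_{(k-1)\Delta}$, which is a non-degenerate random variable as soon as $\Phi\neq 0$. So $\epsilon_k^{\Delta}$ cannot be independent of the past. Concretely, to leading order $\mathbb{E}[(\epsilon_k^{\Delta})^2\epsilon_{k-1}^{\Delta}]\approx\Delta\,\Phi_{\Delta}\,\mathbb{E}[(\epsilon_{k-1}^{\Delta})^2]\neq 0$. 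That is a non-vanishing mixed-time cumulant, exactly the kind of term whose vanishing is used later in Eq.~\eqref{sum_cumulant}. Your sentence ``this gives the independence structure the statement requires'' is therefore false.

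Worse, with your definitions $Z_k^{\Delta}=\lambda_k^{\Delta}+\epsilon_k^{\Delta}=X_k^{\Delta}$ identically. Then \eqref{converge1} reduces to the tautology that the bin counts of $N$ rebuild $N$, which holds pathwise because atoms move by at most $\Delta$. Steps 2 and 3 do no work. The actual content of the lemma is never addressed: that a model of the form ``intensity plus mean-zero, mutually independent noise'' lies in the weakly convergent class of $N$.

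\textbf{Step 3 fails under the other reading.} Suppose instead you take $\epsilon^{\Delta}$ genuinely independent, as the hypothesis demands. You would then define $Z^{\Delta}$ through the recursion $Z_k^{\Delta}=U\Delta+\sum_{\tau}\Delta\Phi_{\tau\Delta}Z_{k-\tau}^{\Delta}+\epsilon_k^{\Delta}$. Kirchner's limit theorem does not apply to this recursion. His theorem concerns the INAR($\infty$) process built from thinning, whose innovations have a conditional variance that depends on past counts, and this is what makes the limit a counting process. A linear AR($\infty$) with additive independent innovations behaves differently. With i.i.d.\ innovations of variance $\sigma^2\Delta$, Donsker's theorem gives a Gaussian Volterra-type limit $Y_t=Ut+\int_0^t\int_0^s\Phi(s-r)\,dY_r\,ds+\sigma W_t$, which has continuous paths and is not $N$.

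\textbf{The Slutsky transfer is invalid under either reading.} $R_k^{\Delta}\to 0$ only compares conditional means. It does not put the Hawkes bin counts and an INAR path on one probability space with a vanishing difference. That would require an explicit coupling, or a compensator or characteristic-functional argument.

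\textbf{How the paper proceeds.} The paper does not attempt a full limit theorem in this lemma. It proves only tightness of the window sums in the space of point measures, using Markov's inequality. The ingredients are Kirchner's Lemma~2 bound $\mathbb{E}[N^{(\delta)}([a,b])]<(b-a+2)(I-G^{(\delta)}(a,b))^{-1}\Lambda$ and a Banach fixed point for $\Lambda^\star$. In the vague topology this first-moment bound is all that tightness requires, so your Aldous/$J_1$ obstacle never arises. The paper then extracts subsequential limits and defers identification of the limit to Lemma~\ref{prop:1_2}.
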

\begin{lemma}[Convergence to latent equivalent classes]\label{prop:1_2}
Assume the weak convergence condition in \autoref{prop:1} is satisfied. Then, there exists a latent process \( Z_t^{\Delta}\)such that the process \( N_t \) and its variational approximation converge in distribution to the same latent causal class. Formally,
\begin{equation}\label{converge_2}
    \lim_{\Delta \downarrow 0} N_k^{\Delta}:=Z^{\Delta} \xrightarrow{d} N_t.
\end{equation}
This implies that, up to infinitesimal resolution \( \Delta \), the estimated process admits a latent representation governed by the same convolution dynamics.
\end{lemma}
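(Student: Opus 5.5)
The plan follows the Hawkes--INAR($\infty$) correspondence of \citet{kirchner_hawkes_2016}, with the constant autoregressive coefficients replaced by a discretized kernel. For a resolution $\Delta>0$, I take the bin counts $N_k^{\Delta}:=N\big((k-1)\Delta,k\Delta\big]\in\mathbb{N}^p$. I define the candidate latent process by the autoregression
\[
Z_k^{\Delta} = \Delta\, U + \sum_{l\ge 1} \Delta\,\Phi_{l\Delta}\, Z_{k-l}^{\Delta} + \epsilon_k^{\Delta}.
\]
This is exactly $\lambda_k^{\Delta}+\epsilon_k^{\Delta}$ from \autoref{prop:1}. The noise $\epsilon_k^{\Delta}$ is centred and mutually independent across $k$. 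To make the recursion integer-valued I would realize it by Poisson/thinning innovations. The key steps below pass from $Z^{\Delta}$ to the bin counts, then to the point process $N_t$, and are taken in order.

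\textbf{Step 1: stability of the discrete model.} Under \autoref{Linear_idf_Assmption_2}, square integrability together with boundedness of $\mathbb{E}[\Lambda_t]$ gives the subcriticality condition $\rho\big(\int_0^\infty \Phi_t\,dt\big)<1$. The Riemann sums $\sum_l \Delta\Phi_{l\Delta}$ converge to $\int_0^\infty\Phi_t\,dt$. Hence, for all small $\Delta$, the spectral radius of $\sum_l\Delta\Phi_{l\Delta}$ is bounded by some $\bar\rho<1$. This ensures that $Z^{\Delta}$ admits a unique wide-sense stationary solution with first and second moments bounded uniformly in $\Delta$.

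\textbf{Step 2: the piecewise-constant process.} Form the process $\tilde N^{\Delta}_t:=\sum_{k\le \lfloor t/\Delta\rfloor} Z_k^{\Delta}$. I would show that its compensator $\sum_k \lambda_k^{\Delta}$ is, up to an $o(1)$ error, a Riemann sum of $\int_0^t \big(U+\Phi\star d\tilde N^{\Delta}\big)_s\,ds$. The error is controlled by the uniform moment bounds from Step~1 and by the Riemann-sum approximation of $\Phi$ in $L^2$. Along the way this shows that $Z_k^{\Delta}$ and $N_k^{\Delta}$ have asymptotically matching first and second moments. Weak convergence of the finite-dimensional laws then follows from a martingale central limit/Poisson limit comparison, which gives \autoref{prop:1}.

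\textbf{Step 3: limit to $N_t$ (this lemma).} I prove tightness of $\{\tilde N^{\Delta}\}$ in the Skorokhod space $D([0,T],\mathbb{R}^p)$. The processes are nondecreasing with uniformly bounded means, so Aldous' criterion applies via the compensators. I then identify every subsequential limit $\tilde N$ as a point process whose compensator is $\int_0^t\big(U+\Phi\star d\tilde N\big)_s\,ds$. Under the stationarity and integrability assumptions, the linear Hawkes process is uniquely characterized by its compensator (a Jacod-type uniqueness for the martingale problem). Therefore $\tilde N\overset{d}{=}N$, and so the whole family converges, $Z^{\Delta}\xrightarrow{d}N_t$. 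Because $Z^{\Delta}$ is generated by the same kernel $\Phi$ and baseline $U$, it lies in the equivalence class of \autoref{maindef_cw}.

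I expect the main obstacle to be the \textbf{identification of the limit} in Step~3. There are two difficulties. First, one must show that the limit is simple, i.e.\ has jumps of size one; for this I would try showing $\mathbb{P}\big(Z_k^{\Delta}\ge 2\big)=O(\Delta^2)$, using intensity $O(\Delta)$ per bin. Second, one must pass the convolution through the limit. Kirchner's argument relies on $\Phi$ being integrable and nonnegative. With only the $L^2$ control of \autoref{Linear_idf_Assmption_2}, I would first truncate the kernel to $[0,M]$ and prove the result there. I would then remove the truncation by letting $M\to\infty$, using the uniform bound $\bar\rho<1$ from Step~1 to obtain geometric control of the tails.
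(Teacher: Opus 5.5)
Your plan is sound and considerably more explicit than the paper's argument, but it identifies the limit by a different mechanism. The paper gets tightness from a Markov bound on the aggregated intensity (inside the proof of \autoref{prop:1}). It then argues that all subsequential limits coincide because the characteristic (moment-generating) functionals of the finite-dimensional laws converge to those of $N$, with the second-moment control of \autoref{app:lem3} supplying uniform integrability. You instead prove tightness in $D([0,T],\mathbb{R}^p)$ via Aldous' criterion on the compensators, and you pin down each limit point through its compensator and Jacod's uniqueness theorem. Your route needs no transform computations. Since $\int_0^t(\Phi\star d\omega)_s\,ds=\int_0^t G(t-u)\,d\omega(u)$ with $G(r)=\int_0^r\Phi_s\,ds$ continuous, the compensator is a continuous functional of the path and passes to weak limits directly. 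You also obtain functional (Skorokhod) convergence rather than only finite-dimensional convergence. The paper's route still owes a proof that $\Psi(N_k^{\Delta})\to\Psi(N)$, which it asserts without computation (one could use the branching fixed-point equation for the Laplace functional). Its advantage is that identifying the full law makes a separate simplicity argument unnecessary.

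Some repairs are needed.

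\emph{Noise.} With Poisson/thinning innovations, $\epsilon_k^{\Delta}=Z_k^{\Delta}-\lambda_k^{\Delta}$ is a martingale difference with state-dependent conditional variance (equal to $\lambda_k^{\Delta}$ for Poisson innovations), not mutually independent noise. Drop that claim: Step~3 only uses the martingale property, and for $\Phi\neq0$ independent additive noise is incompatible with a Hawkes limit.

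\emph{Step~2.} The martingale CLT at the end of Step~2 produces Gaussian, not counting, limits. Drop it, since Step~3 delivers the convergence and \autoref{prop:1} is a hypothesis here anyway.

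\emph{Coefficients and moments.} Use $\alpha_l^{\Delta}=\int_{(l-1)\Delta}^{l\Delta}\Phi_s\,ds$ instead of $\Delta\Phi_{l\Delta}$. Then $\sum_l\alpha_l^{\Delta}=\int_0^\infty\Phi_s\,ds$ exactly, so $\bar\rho<1$ needs no regularity of $\Phi$. Moreover, $\sum_l\Delta^{-1}\|\alpha_l^{\Delta}\|_F^2\le\int_0^\infty\|\Phi_s\|_F^2\,ds$ is what keeps the second moments of $\lambda_k^{\Delta}/\Delta$ bounded uniformly in $\Delta$, as Aldous' criterion and uniform integrability require. That is where square integrability enters. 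Subcriticality comes instead from stationarity with finite mean intensity, $U>0$ and $\Phi\ge0$ (via $\Lambda=U+(\int_0^\infty\Phi_s\,ds)\Lambda$). It already forces $\Phi\in L^1$, so your truncation step is redundant.

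\emph{Initial history.} Start the recursion from empty history to match the $\int_0^t$ in \autoref{main:def:1}. With the stationary solution, the compensator on $[0,T]$ also involves the pre-$0$ history, which your identification step omits.

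\emph{Simplicity.} $\mathbb{P}(Z_k^{\Delta}\ge2)=O(\Delta^2)$ only rules out multiple points inside a single bin. Simplicity of limit points also needs $\lim_{\delta\to0}\limsup_{\Delta\to0}\sum_I\mathbb{E}[\tilde N^{\Delta}(I)(\tilde N^{\Delta}(I)-1)]=0$ over partitions of $[0,T]$ into intervals $I$ of length $\delta$, so that points in neighbouring bins do not merge. This follows from the same uniform second-moment bounds.
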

\autoref{prop:1} and~\ref{prop:1_2} together establish the weak convergence of continuous stochastic processes under the corresponding weak topology. Roughly, for any compact time interval $[a,b]$, a subsequence process of the original process under such an interval converges to a \textit{continuous-time} causal point process $N_t$. This convergence ensures that $Z^{\Delta}$ effectively represents $N_t$ and maintains all causal structures. Without loss of generality, we can therefore directly work with $Z^{\Delta}$ and study its identifiability by analyzing the geometry of the associated parameter space.

%Two challenges of point process form a roadblock to the identifiability of a meaningful causal representation: 1) point process depicts behavior of a dynamic system in \textit{continuous time}. 2) point process does not take an explicit form as traditional causal models such as time-delayed VAR and DAG, which disguises the causality nature in its. 
\subsection{Geometry characterization of model identifiability}

As a stepping stone toward our main results, the geometry of the proposed latent model characterizes the uniqueness of the generative model's parameters. An exploration by \citet{carreno2024linear} has shown that parameter space can be fully recovered if and only if the solution set of the system defined by the available data is zero-dimensional, i.e., it consists of finitely many points consistent with the number of latent variables or parameters. Geometrically, given the finite-dimensional observation distribution $P(O_t)$ (i.e., moments and statistics of the corresponding distribution up to any finite order), we consider the ideal\footnote{Hereafter, we abuse the use of ideal as a general term to describe the finite-order distributional information. Technically, an ideal should be finitely generated over a ring.}
\begin{equation}\label{distributionideal}
    \mathcal{I} \;=\; \langle P(O_t) - P_{\Theta} \rangle,
\end{equation}
and identifiability of the parameter $\Theta$ requires that $\mathcal{I}$ has dimension zero. Intuitively, the parameter space $\Theta:= (f, Z^{\Delta})$ of the generative model (i.e., the mixing map $f$ together with the full parameters of $Z^{\Delta}$) corresponds to the vanishing locus of $\mathcal{I}$, which in this case cannot lie on any higher-dimensional hypersurface, as shown in~\autoref{fig:veronese_hyperplanes_3d}. In practice, the full distribution is inaccessible; hence, we aim to establish identifiability using only partial distributional information. Following~\citet{wang_identifiability_2024}, we choose the cumulant of the observational distribution as an intermediary to study the geometry of parameter space $\Theta$. Cumulant of infinite order is an important algebro-geometric signature, as it precisely encodes the entire distribution, including the component-wise and time-wise dependency among variables~\citep{achab_uncovering_2018,jovanovic_cumulants_2015,landsberg2011tensors}. Higher-order cumulants then capture the causal structure of a distribution at the same orders, enabling fine-grained mathematical analysis of intervention effects beyond traditional mean and variance shifts.  

Under generic (non-Gaussian) conditions, \citet{carreno2024linear} show that if the observed variable satisfies a fully linear causal model of the form $X = FZ$ where $  Z = AZ + \epsilon$, then the $d$-th cumulant of $X$, denoted by $\kappa_d(X)$, admits a unique decomposition as
\begin{align*}
\kappa_d(X) = \sum_{j=1}^{p}\kappa_d(\epsilon) \cdot K_{j} \otimes \cdots \otimes K_{j},\quad K = F(\mathbb{I}-A)^{-1}.
\end{align*}
Compared to Eq.~\eqref{distributionideal}, the above equation induces a simplified ideal $\mathcal{I} := \langle K(\mathbb{I}-A) - F \rangle$, with $A$ and $F$ treated as generic indeterminates. Consequently, the parameter space encoded in $K$ is identifiable up to scaling and permutation. The identifiability of this linear mixture of parameters is equivalent to showing that the algebraic variety defined by $\kappa_d(X)$ is \emph{not} $p$-defective~\citep{chiantini2017generic}. This connection between the geometry of the parameters and identifiability enables us to lay out the identifiability in the linear case of the INAR equivalence class, which we will develop in the next section.

\begin{figure}[htbp]
\centering
\resizebox{\linewidth}{!}{%
% ---------------- Subfigure 1 ----------------
\begin{subfigure}[t]{0.3\textwidth}
\centering
\tdplotsetmaincoords{60}{120}
\begin{tikzpicture}[tdplot_main_coords, scale=2.5]
    % 坐标轴
    \draw[thick,->] (0,0,0) -- (1,0,0);
    \draw[thick,->] (0,0,0) -- (0,1,0);
    \draw[thick,->] (0,0,0) -- (0,0,1);

    % Veronese surface (gradient + transparency)
    \foreach \x in {0,0.05,...,0.95} {
        \foreach \y in {0,0.05,...,0.95} {
            \pgfmathsetmacro{\zA}{\x*\x + \y*\y}
            \pgfmathsetmacro{\zB}{(\x+0.05)^2 + \y*\y}
            \pgfmathsetmacro{\zC}{(\x+0.05)^2 + (\y+0.05)^2}
            \pgfmathsetmacro{\zD}{\x*\x + (\y+0.05)^2}
            \pgfmathsetmacro{\zavg}{(\zA+\zB+\zC+\zD)/4}
            \pgfmathsetmacro{\col}{min(max(round(50 + 50*sqrt(\zavg)),0),100)}
            \pgfmathsetmacro{\alpha}{0.3 + 0.5*\zavg}
            \fill[blue!\col, opacity=\alpha]
                (\x,\y,\zA) -- (\x+0.05,\y,\zB) -- (\x+0.05,\y+0.05,\zC) -- (\x,\y+0.05,\zD) -- cycle;
        }
    }

    % Generic hyperplane
    \fill[red!50, opacity=0.3] (0,0,0.5) -- (1,0,0.5) -- (1,1,0.5) -- (0,1,0.5) -- cycle;

    % Smooth intersection curve
    \draw[purple, thick, opacity=0.85, smooth] 
        plot coordinates {(0,0.7071,0.5) (0.1,0.6708,0.5) (0.2,0.6325,0.5)
                          (0.3,0.5916,0.5) (0.4,0.5477,0.5) (0.5,0.5,0.5)
                          (0.6,0.4472,0.5) (0.7,0.3873,0.5)};
    \draw[purple, thick, opacity=0.85, smooth] 
        plot coordinates {(0,-0.7071,0.5) (0.1,-0.6708,0.5) (0.2,-0.6325,0.5)
                          (0.3,-0.5916,0.5) (0.4,-0.5477,0.5) (0.5,-0.5,0.5)
                          (0.6,-0.4472,0.5) (0.7,-0.3873,0.5)};
\end{tikzpicture}
\caption{}
\end{subfigure}
\hspace{0.01\textwidth}

% ---------------- Subfigure 2 ----------------
\begin{subfigure}[t]{0.3\textwidth}
\centering
\tdplotsetmaincoords{30}{160}
\begin{tikzpicture}[tdplot_main_coords, scale=2.5]
    % 坐标轴
    \draw[thick,->] (0,0,0) -- (1,0,0);
    \draw[thick,->] (0,0,0) -- (0,1,0);
    \draw[thick,->] (0,0,0) -- (0,0,1);

    % Veronese surface
    \foreach \x in {0,0.05,...,0.95} {
        \foreach \y in {0,0.05,...,0.95} {
            \pgfmathsetmacro{\zA}{\x*\x + \y*\y}
            \pgfmathsetmacro{\zB}{(\x+0.05)^2 + \y*\y}
            \pgfmathsetmacro{\zC}{(\x+0.05)^2 + (\y+0.05)^2}
            \pgfmathsetmacro{\zD}{\x*\x + (\y+0.05)^2}
            \pgfmathsetmacro{\zavg}{(\zA+\zB+\zC+\zD)/4}
            \pgfmathsetmacro{\col}{min(max(round(50 + 50*sqrt(\zavg)),0),100)}
            \pgfmathsetmacro{\alpha}{0.3 + 0.5*\zavg}
            \fill[blue!\col, opacity=\alpha]
                (\x,\y,\zA) -- (\x+0.05,\y,\zB) -- (\x+0.05,\y+0.05,\zC) -- (\x,\y+0.05,\zD) -- cycle;
        }
    }

    % Two generic hyperplanes
    \fill[red!50, opacity=0.3] (0,0,0.3) -- (1,0,0.3) -- (1,1,0.3) -- (0,1,0.3) -- cycle;
    \fill[green!50, opacity=0.3] (0,0,0.6) -- (1,0,0.6) -- (1,1,0.6) -- (0,1,0.6) -- cycle;

    % Intersection points
    \draw[purple, thick, opacity=0.85, smooth] 
        plot coordinates {(0.2,0.4,0.3) (0.5,0,0.3) (0.1,0.5,0.3)};
    \draw[purple, thick, opacity=0.85, smooth] 
        plot coordinates {(0.3,0.3,0.6) (0.6,0,0.6) (0.4,0.4,0.6)};
\end{tikzpicture}
\caption{}
\end{subfigure}
\hspace{0.01\textwidth}

% ---------------- Subfigure 3 ----------------
\begin{subfigure}[t]{0.3\textwidth}
\centering
\tdplotsetmaincoords{180}{90}
\begin{tikzpicture}[tdplot_main_coords, scale=2.5]
    % 坐标轴
    \draw[thick,->] (0,0,0) -- (1,0,0);
    \draw[thick,->] (0,0,0) -- (0,1,0);
    \draw[thick,->] (0,0,0) -- (0,0,1);

    % Homogeneous Veronese surface
    \foreach \u in {0,0.05,...,0.95} {
        \foreach \v in {0,0.05,...,0.95} {
            \pgfmathsetmacro{\x}{\u*\u}
            \pgfmathsetmacro{\y}{\u*\v}
            \pgfmathsetmacro{\z}{\v*\v}
            \pgfmathsetmacro{\zA}{\z}
            \pgfmathsetmacro{\zB}{(\x+0.05*\u)^2 + \y^2}
            \pgfmathsetmacro{\zC}{(\x+0.05*\u)^2 + (\y+0.05*\v)^2}
            \pgfmathsetmacro{\zD}{\x^2 + (\y+0.05*\v)^2}
            \pgfmathsetmacro{\zavg}{(\zA+\zB+\zC+\zD)/4}
            \pgfmathsetmacro{\col}{min(max(round(50 + 50*sqrt(\zavg)),0),100)}
            \pgfmathsetmacro{\alpha}{0.3 + 0.5*\zavg}
            \fill[blue!\col, opacity=\alpha]
                (\x,\y,\zA) -- (\x+0.05*\u,\y,\zB) -- (\x+0.05*\u,\y+0.05*\v,\zC) -- (\x,\y+0.05*\v,\zD) -- cycle;
        }
    }

    % Generic hyperplane
    \fill[red!50, opacity=0.3] (0,0,0.5) -- (1,0,0.5) -- (1,1,0.5) -- (0,1,0.5) -- cycle;

    % Intersection points
    \draw[purple, thick, opacity=0.85, smooth] 
        plot coordinates {(0.2,0.1,0.5) (0.5,0.0,0.5) (0.1,0.3,0.5)};
\end{tikzpicture}     
\caption{}
\end{subfigure}
} % end resizebox
\caption{surfaces with generic hyperplanes (smooth gradient grid, 3D effect). (a)  surface with one hyperplane;(b)Veronese with two hyperplanes;(c) surfaces with finite points}
\label{fig:veronese_hyperplanes_3d}
\end{figure}
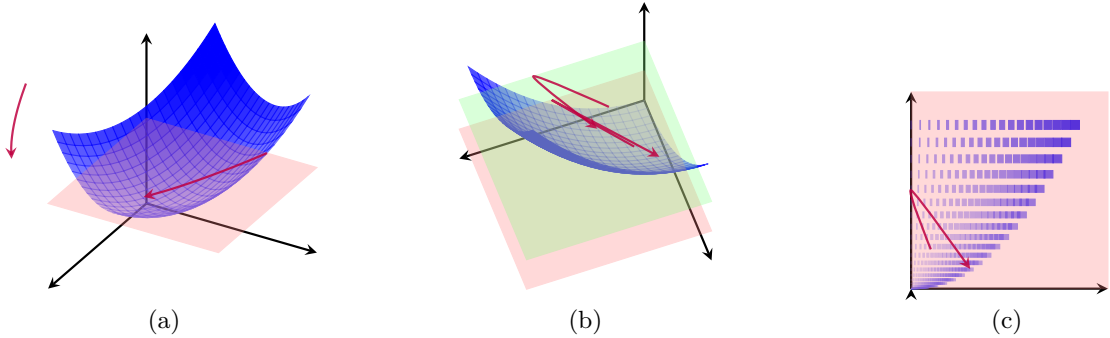

\subsection{Identifying linear mixtures}

We begin with the identifiability result in the linear case. Specifically, we show that the full generative model $\Theta := (f, \Phi, U)$ can be recovered from a linear mixture $O_t = FZ_t^{\Delta}$, where $Z_t^{\Delta}$ denotes the weakly equivalent class introduced in Section~\ref{4.1}. For clarity, we drop the subsequence notation whenever the context is unambiguous. By \autoref{prop:1_2}, this latent process satisfies $Z_t = \lambda_t + \epsilon$ with $\lambda_t = U + \Phi \star Z_t$.

In general, when causal variables are identified from the full distribution $P(O_t)$, sufficient variability of the latent distribution is required \citep{yao_learning_2022,song_temporally_2023,zhang_causal_2024}. Unlike this setting, where only linear mixtures of $Z_t$ are observed, any variability in the latent distribution can arise solely from changes in the parameter 
$\Theta := (F,\Phi,U)$. This observation motivates the use of the algebraic structure of cumulants, which naturally captures distributional variability and provides a transparent interpretation of identifiability. 

If the $d$-th order cumulant of $O_t$, denoted by $\kappa_d(O_t)$, also admits a unique decomposition, the reduced ideal $\mathcal{I}':=\langle K(\mathbb{I}-\Phi)-F\rangle$, where $K= F(\mathbb{I}-\Phi)^{-1}$, has degree at most one and admits a decomposition into a finite composition of prime ideals. The dimension of the solution space coincides with the dimension of the associated ideal $\mathcal{I}'$, thereby determining the identifiability of the full generative model. To this end, we need to control the geometry of the associated parameter space.  \textcolor{blue}{}
\begin{assumption}\label{Linear_idf_Assmption}
\leavevmode
    \begin{enumerate}
        \item \label{cond1}$F$ is generic with the possible maximum rank almost surely.
    \item \label{cond2} There exists a set $D := \{ d \in \mathbb{Z} \mid \kappa_{d+1}(O_t) = 0 \}$ such that the collection $\{ \kappa_d(O_t) \}_{d \in D}$ contains at least $p$ non-zero elements.

    \item \label{cond3} The ideal $\mathcal{I} := \langle F - K^{(k)} (\mathbb{I}_{p} - \Phi^{(k)}),\; k=1,2,\dots,p \rangle$ is such that the space $\Theta$ is zero-dimensional.
    \end{enumerate}
\end{assumption}
The three conditions are neither independent of nor parallel to one another. Instead, they are sequentially related, with each condition building on the preceding one, thereby establishing a stepwise progression toward full identifiability of the generative model.  That is, each observed component $\kappa_d(O_t)$ has finite depth $d$ and the non-vanishing information up to this order is sufficiently rich to ensure identifiability via tensor decomposition: The proposed rank condition~(\ref{cond1}) is classic and results in a generically unique decomposition of each order $d$ tensor $\kappa_d (O_t)$, which uniquely recovers the component $K_j:= (F(\mathbb{I}-\Phi)^{-1})_{t-s}^{(:,j)} $, for all $j\in [p]$ and all $s\in S:=\{s:s<t\}$, up to permutation and rescaling. This holds for the Kruskal rank condition $\operatorname{krank}(V)$~\citep{kruskal1977three,lovitz2023generalization}, which requires that each rank-1 component $\left(K_j\right)^{\otimes d}$ has no collinear columns in the ambient space\footnote{The \emph{ambient space} is the higher-dimensional space in which a given variety or scheme is embedded, typically $\mathbb{A}^n$ or $\mathbb{P}^n$ in algebraic geometry. See details in Appendix~\ref{geometrynotation}}\citep{wang_identifiability_2024,wang_contrastive_2025}. Condition~(\ref{cond2}) guarantees that there are at least $p$ such points in the linear span of the outer space of $\left(K_j\right)^{\otimes d}$ that do not vanish, so that $F$ and $\Phi$ form a zero-dimensional parameter space. As a direct consequence of the combined results of condition~(\ref{cond1}),~(\ref{cond2}), and~(\ref{cond3}), we achieve full identifiability of the model.
\begin{theorem}[Linear identifiability of equivalent classes]\label{linear_idf}
Under \autoref{Linear_idf_Assmption_2} and \ref{Linear_idf_Assmption},  the weakly-convergent equivalent class of the latent point process and the causal structure are identifiable up to component-wise scaling and permutation.
\end{theorem}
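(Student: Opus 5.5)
We reduce the statement, via \autoref{prop:1} and \autoref{prop:1_2}, to an identifiability question for the discretized latent model $Z_t = U + \Phi\star Z_t + \epsilon_t$ observed through $O_t = F Z_t$. Under \autoref{Linear_idf_Assmption_2} the process is wide-sense stationary and the kernel is square integrable. Hence the convolution operator $\mathbb{I}-\Phi$ is invertible as an operator on the stationary lag-indexed sequence, and the moving-average representation
\[
O_t - \mathbb{E}[O_t] = \sum_{s\le t} \bigl(F(\mathbb{I}-\Phi)^{-1}\bigr)_{t-s}\,\epsilon_s
\]
is well defined. Because the noises are mutually independent across components and time, multilinearity of cumulants gives, for each order $d$,
\[
\kappa_d(O_t,\dots,O_{t}) = \sum_{s\le t}\sum_{j=1}^p \kappa_d(\epsilon^j)\,\bigl(K_{t-s}^{(:,j)}\bigr)^{\otimes d},
\qquad K = F(\mathbb{I}-\Phi)^{-1}.
\]
Cross-time cumulants $\kappa_d(O_{t_1},\dots,O_{t_d})$ are handled the same way and isolate each lag $t-s$ separately. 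This yields one symmetric tensor decomposition per lag and per order, of the type treated by \citet{carreno2024linear}.

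Next we invoke condition~(\ref{cond1}). Because $F$ is generic of maximal rank, the columns $K_{t-s}^{(:,j)}$ satisfy a Kruskal-type condition with $\operatorname{krank}\ge 2$ for $d\ge 3$. The tensor $\sum_j \kappa_d(\epsilon^j) (K^{(:,j)})^{\otimes d}$ is then a generic rank-$p$ point on the $d$-th Veronese variety, which is not $p$-defective \citep{chiantini2017generic}. Kruskal's theorem \citep{kruskal1977three} then recovers each $K_{t-s}^{(:,j)}$ up to a common permutation $\pi$ and scalings. The permutation is common across lags because the same noise component $\epsilon^j$ indexes all lags, and it can be matched through the cross-time cumulants.

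Condition~(\ref{cond2}) is used to fix the scalings and to guarantee that this step is non-vacuous. It supplies at least $p$ nonvanishing cumulant orders, so every component $j$ has some order with $\kappa_d(\epsilon^j)\neq 0$. Without this, a component could be invisible, for instance Gaussian-like. Comparing the recovered rank-one terms across two such orders pins the scale of each column up to the usual $d$-th roots of unity, which we absorb into component-wise scaling.

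Having $K^{(k)}$ for lags $k=1,\dots,p$ up to $\Lambda P$ (diagonal times permutation), we solve the polynomial system $F = K^{(k)}(\mathbb{I}_p - \Phi^{(k)})$. Condition~(\ref{cond3}) states that this ideal is zero-dimensional. Its variety is therefore a finite set of points, and we argue it is a single orbit of the scaling–permutation group: any two solutions differ by an element of the stabilizer of the decomposition, which is exactly $\Lambda P$. This gives $(F,\Phi)$ up to $\Lambda P$, and then $U$ follows from the first moment, $\mathbb{E}[O_t] = F(\mathbb{I}-\Phi)^{-1}U$. The causal structure is the support of $\Phi$, and it is preserved under $\Lambda P$ up to relabeling. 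Finally, \autoref{prop:1_2} transfers the result from $Z^{\Delta}$ to the weakly-convergent class.

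The main obstacle is the last step. Zero-dimensionality alone gives finitely many solutions, not uniqueness modulo $\Lambda P$. We would first try the argument for the lag-zero block $K_0 = F$, treating $\Phi$ as strictly causal (lag $\ge 1$) so that $\Phi_0=0$. Then $F$ is read off directly from the lag-zero decomposition, and each $\Phi^{(k)}$ is obtained linearly from the relation $K = F + \Phi\star K$ by recursive deconvolution, which makes the solution unique. If instantaneous effects are allowed, we would instead appeal to acyclicity of $\Phi_0$, which forces the unique permutation making $\mathbb{I}-\Phi_0$ unit lower triangular, as in \citet{carreno2024linear}.
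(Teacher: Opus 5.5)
\textbf{Main gap: there is no per-lag decomposition.} Your proposal fails at the claim that cross-time cumulants ``isolate each lag $t-s$ separately.'' The same multilinearity you used gives
\[
\kappa_d(O_{t_1},\dots,O_{t_d})=\sum_{s}\sum_{j=1}^{p}\kappa_d(\epsilon^j)\,K^{(:,j)}_{t_1-s}\otimes\cdots\otimes K^{(:,j)}_{t_d-s},
\]
which still sums over every $s$. The filter $H=(\mathbb{I}-\Phi)^{-1}$ generally has an infinite impulse response (e.g.\ for exponential kernels). So both this tensor and $\kappa_d(O_t)$ carry $p$ rank-one terms for each of unboundedly many $s$. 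Neither is a rank-$p$ tensor, so neither Kruskal's theorem nor non-defectivity of the Veronese variety applies. Slices that single out one $s$ exist for a finite-order moving average, not for an infinite impulse response.

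Even if the lags were separated, $K_\tau=FH_\tau$ has a zero column for every component with no influence at lag $\tau\ge1$. Genericity of $F$ therefore fails to give the Kruskal rank exactly when $\Phi$ is sparse. Also, for $d=3$ the required Kruskal rank is $\lceil(2p+2)/3\rceil$, not $2$.

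The paper avoids the lag-by-lag route. It passes to the frequency domain, where convolution becomes a product and each frequency carries a decomposition with exactly $p$ terms indexed by $j$. The factor there is $F(\mathbb{I}-\hat\Phi(\omega))^{-1}$, which is generically of maximal rank. The paper then applies its lemma on generic linear sections of the Veronese variety, and its permutation-and-phase lemma to return to the time domain. Done rigorously, the object to decompose is the $d$-th order polyspectrum, with rank-one terms $\hat K^{(:,j)}(-\omega_1-\cdots-\omega_{d-1})\otimes\hat K^{(:,j)}(\omega_1)\otimes\cdots\otimes\hat K^{(:,j)}(\omega_{d-1})$; hats denote Fourier transforms over lags.

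\textbf{Second gap: no common scaling across lags.} Your deconvolution needs one $\Lambda P$ shared by all lags, and nothing in the proposal supplies it. Condition~(\ref{cond2}) cannot. Comparing orders at a fixed lag leaves the scale free, because rescaling $\epsilon^j$ is absorbed by $\kappa_d(\epsilon^j)$ at every order. It also relates no two lags. With lag-dependent $\hat K_\tau=K_\tau\Lambda_\tau P$, your recursion gives $\hat\Phi_1=P^{-1}\Lambda_0^{-1}\Phi_1\Lambda_1P$. But $\hat\Phi_2$ acquires the extra term $P^{-1}\Lambda_0^{-1}\Phi_1\bigl(\Phi_1\Lambda_2-\Lambda_1\Lambda_0^{-1}\Phi_1\Lambda_1\bigr)P$, which generically adds spurious lag-$2$ edges along length-two paths of $\Phi_1$.

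If $n\ge p$, you can get exactly what your second stage needs from innovations instead of cumulant tensors:
\begin{itemize}
\item the covariance identifies the column space of $F$;
\item the projected $p$-dimensional process has one-step prediction innovations $C\epsilon_t$ for an unknown invertible $C$, since $\Phi$ is strictly causal and the noise is white;
\item ICA on these non-Gaussian innovations gives $C$, hence every $K_\tau$, up to a single $\Lambda P$.
\end{itemize}

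\textbf{Second stage compared with the paper.} Granting such a first stage, your second stage is a genuinely different and more elementary route than the paper's. The paper reads the index $k$ in condition~(\ref{cond3}) as $p$ environments sharing $F$, not as lags. Read per lag, $F=K^{(k)}(\mathbb{I}_p-\Phi^{(k)})$ is false, because convolution mixes lags. The paper embeds $\Phi$ in the strictly upper-triangular $2p\times 2p$ kernel-DAG matrix $\mathcal{M}_{\mathscr{G}}$. It uses rank-one differences $K^{(k)}_{\mathscr{G}}-K^{(0)}_{\mathscr{G}}$ across environments to cut the variety to dimension zero, then reads off $F_{\mathscr{G}}=K_{\mathscr{G}}(\mathbb{I}_{2p}-\mathcal{M}_{\mathscr{G}})$.

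Your $K_0=F$ plus recursive deconvolution needs no interventions and bypasses condition~(\ref{cond3}). The price is that $F$ must be injective, i.e.\ $n\ge p$; ``maximal rank'' in condition~(\ref{cond1}) grants injectivity only in that regime. It also needs full-lag recovery with a common scaling. Use the relation $K=F+K\star\Phi$, i.e.\ $K_\tau=\sum_{s=1}^{\tau}K_{\tau-s}\Phi_s$ for $\tau\ge1$; your $\Phi\star K$ is not even defined when $n\neq p$.

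Your instantaneous-effects fallback does not work. With $F$ unknown, $K_0=F(\mathbb{I}-\Phi_0)^{-1}=\bigl(K_0(\mathbb{I}-\Phi_0')\bigr)(\mathbb{I}-\Phi_0')^{-1}$ for every DAG $\Phi_0'$, with the lagged kernels adjusted accordingly. Acyclicity therefore cannot separate $F$ from $\Phi_0$, which is why \citet{carreno2024linear} need interventions. That case lies outside the theorem's model, though.
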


We further remark that,  once the generative model is identified, the causal variables can be sampled from the model $\Theta:=(F,\Phi, U)$.

\subsection{Identifying generic nonlinear equivalent classes }

In this section, we relax the linearity assumption and demonstrate that the algebraic structure of the observed manifold can also ensure the identifiability under arbitrary nonlinear transformations (\textit{cf.} the generic $F$ in \autoref{Linear_idf_Assmption}). Specifically, we now assume that $f$ is generic (potentially injective), so that the mapping $f: Z_t \mapsto O_t$ is well-defined and preserves the distinguishability of the latent representation. We formalize this in \autoref{injective_ifd_assumption}. 
\begin{assumption}\label{injective_ifd_assumption}
    \leavevmode
\begin{enumerate}
    \item Let $f$ be a generic $C^d$ map.
    \item On the mixed cumulant manifold, the system admits a linear degeneration for which \autoref{Linear_idf_Assmption} holds for some order $d$ and p.
\end{enumerate}
\end{assumption}
The $C^d$ regularity assumption is strictly weaker than requiring $f$ to be a diffeomorphism, required by a spectrum of prior works, such as in~\citep{song_temporally_2023}. Even in the presence of directional collapse within the latent space, the induced algebraic structure may still faithfully transmit the essential dependency relations to the observed domain. Unsurprisingly, one observes that the genericity of the mixing function $f$ is naturally satisfied when the elements of its Jacobian $J_f$ are sufficiently free functions (e.g., polynomials or $C^d$ functions). Indeed, the set of functions for which $J_f$ fails to be full rank corresponds to a proper algebraic subvariety of the function space, so that almost all choices of $f$ yield a full-rank Jacobian. Consequently, having functional (rather than constant) Jacobian entries increases the likelihood that $f$ is generic in the algebraic-geometric sense.
\begin{theorem}[Fully nonlinear identifiability of equivalent classes]\label{mainthm1}
    Under \autoref{Linear_idf_Assmption_2} and \autoref{injective_ifd_assumption}, the weakly-convergent equivalent class of the latent point process and the causal structure are identifiable up to component-wise transformation and permutation.
\end{theorem}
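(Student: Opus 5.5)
The proof reduces the nonlinear problem to the linear one already settled in \autoref{linear_idf}. First, \autoref{prop:1} and \autoref{prop:1_2} let us replace the continuous-time point process $N_t$ by its weakly-convergent representative $Z_t=\lambda_t+\epsilon_t$ with $\lambda_t=U+\Phi\star Z_t$. Identifying this representative therefore identifies the equivalence class of \autoref{maindef_cw}.

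\textbf{Step 1: linearize the mixing map.} Since $f$ is $C^d$, we Taylor-expand $O_t=f(Z_t)$ around a generic base point $z_0$ up to order $d$:
\[
O_t = f(z_0) + J_f(z_0)(Z_t-z_0) + \sum_{m=2}^{d}\tfrac{1}{m!}D^m f(z_0)[(Z_t-z_0)^{\otimes m}] + o(\|Z_t-z_0\|^d).
\]
We then expand the cumulants $\kappa_d(O_t)$ by multilinearity into mixed cumulants of the tensors $(Z_t-z_0)^{\otimes m}$. This gives the ``mixed cumulant manifold'' of \autoref{injective_ifd_assumption}.

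\textbf{Step 2: degenerate to the linear case.} We consider a one-parameter family, for example rescaling $Z_t-z_0$ by $s$ and letting $s\to 0$, or localizing on a small neighbourhood. In the limit the leading term of $\kappa_d(O_t)$ is $\sum_j \kappa_d(\epsilon_j)\,(J_f(z_0)K_j)^{\otimes d}$, with $K=(\mathbb{I}-\Phi)^{-1}$. This is exactly the linear model $O=FZ$ with $F=J_f(z_0)$. By the second item of \autoref{injective_ifd_assumption}, \autoref{Linear_idf_Assmption} holds for this degeneration. Genericity of $f$ makes $J_f(z_0)$ a generic, maximal-rank matrix off a proper subvariety. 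So \autoref{linear_idf} recovers $J_f(z_0)$, $\Phi$ and $U$ up to permutation and scaling at each generic $z_0$.

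\textbf{Step 3: propagate back to the nonlinear model.} Zero-dimensionality is an open condition: upper semicontinuity of fibre dimension holds on a dense open set. Hence the ideal $\langle P(O_t)-P_\Theta\rangle$ of the genuine nonlinear family is also zero-dimensional at generic parameters. Its solutions are exactly the orbit of the symmetries, so any two solutions differ by a permutation together with pointwise rescalings.

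Letting $z_0$ vary, we take two admissible models $(f,\Phi,U)$ and $(\tilde f,\tilde\Phi,\tilde U)$. Their Jacobians satisfy $J_{\tilde f}=J_f P D(z)$, where the permutation $P$ is locally constant by continuity and hence constant on the connected generic locus. Integrating $J_{h}=PD(z)$ for $h=\tilde f^{-1}\circ f$ shows that $h$ is a permutation composed with a component-wise map. The causal structure, namely the support of $\Phi$, is preserved because scaling and permutation preserve zero patterns.

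\textbf{Main obstacle.} The hardest part is Step 2. We must justify that the higher-order Taylor terms do not create spurious solutions, so that the degeneration is flat and the zero-dimensionality of the special fibre lifts to generic fibres. I would first try a Gröbner degeneration argument: view the scaling parameter $s$ as a term order and check that the initial ideal equals the linear ideal of \autoref{Linear_idf_Assmption}. I would then invoke semicontinuity of dimension for flat families.
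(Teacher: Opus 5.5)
\textbf{Gap in Steps 2--3.} The linear degeneration you use cannot be read off from the data, and the semicontinuity argument meant to lift it back does not work. The observed $\kappa_d(O_t)$ is one fixed tensor, not indexed by a base point $z_0$. Since $Z_t$ ranges over its whole support, $Z_t-z_0$ is not small, and every term $D^m f(z_0)[(Z_t-z_0)^{\otimes m}]$ enters at comparable order. Rescaling $Z_t-z_0$ by $s$ changes the generative model (it shrinks the noise and shifts the baseline), not the observations. Localizing would mean conditioning on $O_t$, which destroys the independence of the $\epsilon_s^{(j)}$ on which the CP form rests. So Step 2 only identifies the $s\to 0$ limit.

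The lift in Step 3 then fails for a concrete reason. After normalizing by $s^{-d}$, the fibre at $s=0$ depends on $f$ only through $J_f(z_0)$, so it is positive-dimensional in $f(z_0)$ and in every higher Taylor coefficient of $f$. Upper semicontinuity therefore only bounds the dimension of the $s=1$ fibre by that positive number. Nothing excludes a curve in the $s=1$ fibre along which higher-order coefficients of $f$ are traded against $\Phi$ at rate $O(s)$; such a curve collapses at $s=0$ onto the harmless curve of free Taylor coefficients. Your proposed check ``initial ideal $=$ linear ideal'' has the same defect: the linear ideal does not involve the higher-derivative coordinates, so it is not zero-dimensional in the ring where the nonlinear ideal lives. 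There are two further issues. With $f$ an arbitrary $C^d$ map the parameter space is not of finite type, so ideals, flatness and semicontinuity first need a finite-dimensional model of $f$. And a zero-dimensional fibre can contain spurious points beyond the symmetry orbit. Consequently, the pointwise relation $J_{\tilde f}=J_fPD(z)$ that you integrate at the end is never established.

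\textbf{How the paper avoids this.} The paper has no lifting step because it puts the linearization into the hypothesis. It expands in time, $f(Z_{t+\Delta t})=f(Z_t)+J_f\Delta Z_t+\cdots$, and keeps the truncated increment $\Delta f(Z_t)-\mathcal{R}(1)=J_f\sum_s H_{t-s}\epsilon_s$. The remainder $\mathcal{R}(1)$ is regarded as determined by $f$ and fixed by an optimization. The $d$-th cumulant of this increment is $\sum_s\sum_j\kappa_d^{(j)}(\epsilon)\,(J_fH_{t-s}^{(:,j)})^{\otimes d}$. The restated \autoref{injective_ifd_assumption} imposes the non-vanishing condition on exactly these difference cumulants $\bar{\kappa}_d(\Delta O_t)$, and zero-dimensionality of $\langle J_f-K^{(k)}_{\mathscr{G}}(\mathbb{I}_{2p}-\mathcal{M}^{(k)})\rangle$. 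The argument of \autoref{linear_idf} then runs with $F$ replaced by the generic matrix $J_f$, and $f$ is read off from $J_f$ up to a constant. In particular, item 2 refers to this temporal truncation, not to a base-point rescaling, so invoking it for your degeneration is not automatic.

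To salvage your route you have two options. Either adopt the paper's reading of items 2--3, which makes Step 3 unnecessary. Or prove zero-dimensionality of the genuine $s=1$ fibre directly, e.g.\ by a rank computation for the differential of the cumulant map on a finite-dimensional family of $f$, rather than inferring it from a degenerate fibre.
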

\paragraph{The intuition of our theorem.} The cumulant propagates the causal structure through nonlinear transformations, which enables the recovery of latent dependencies from partial algebro-geometric information 
on the mixed manifold $O_t$. We distinguish two cases: access to the full observational distribution $P(O_t)$, or access only to realizations drawn from a restricted distribution $P^{R}(O_t)$. In either case, the geometry of the cumulant can be checked: allowing tolerance of loss in distribution information up to a certain order, the cumulant admits a unique linear degeneration. This degeneration canonically determines a projective embedding of the Veronese variety, identical up to a component-wise scaling and permutation. Additionally, given multiple environments (interventions or variability) in condition~(\ref{cond2}), it follows that the full generative model cannot lie within any hypersurface of positive dimension.

Importantly, the conditions stated in \autoref{Linear_idf_Assmption} are not merely technical assumptions, but collectively form a set of \textit{necessary and sufficient conditions} for identifiability. Under the given data, there exists no alternative order $d_0 < d$ such that a strictly simpler cumulant manifold would still guarantee identifiability, unless additional data or interventions are introduced, as formalized below:
\begin{theorem}\label{mainthm3}
    The identifiability result stated in \autoref{linear_idf},~\ref{mainthm1} holds if and only if the conditions in \autoref{Linear_idf_Assmption} are satisfied.
\end{theorem}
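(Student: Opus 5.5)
The plan is to treat the two directions of \autoref{mainthm3} separately and, in both, reduce the nonlinear statement of \autoref{mainthm1} to the linear one of \autoref{linear_idf}. \autoref{injective_ifd_assumption}(2) supplies a linear degeneration of the mixed cumulant manifold. Along that degeneration the cumulants $\kappa_d(O_t)$ coincide, up to a componentwise reparametrization of the latents, with those of a linear model $O_t = FZ_t^{\Delta}$. It therefore suffices to prove the equivalence for the linear model, with the nonlinear case following by pulling back through the degeneration. By \autoref{prop:1_2}, we may work throughout with the INAR-type representative $Z_t = U + \Phi\star Z_t + \epsilon_t$ of the weakly-convergent class.

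\emph{Sufficiency} is essentially \autoref{linear_idf}, and I would only re-organize it in three stages. First, condition (\ref{cond1}) together with Kruskal's criterion gives a generically unique rank-$p$ decomposition of each $\kappa_d(O_t)$ into terms $\kappa_d(\epsilon)_j K_j^{\otimes d}$ with $K = F(\mathbb{I}-\Phi)^{-1}$; equivalently, the $p$-secant variety of the $d$-th Veronese is not defective (\citet{chiantini2017generic}). Second, condition (\ref{cond2}) guarantees that at least $p$ of these terms are nonvanishing, so every column $K_j$ is actually recovered. Third, condition (\ref{cond3}) says the ideal $\langle F - K^{(k)}(\mathbb{I}-\Phi^{(k)})\rangle$ cuts out a zero-dimensional set. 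Since the symmetries of the decomposition are exactly the finite group of permutations times scalings, the fibre is a single orbit of that group. This yields identifiability up to scaling and permutation. The causal structure is then read off from the support of $\Phi$.

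\emph{Necessity} is proved contrapositively, one condition at a time, by exhibiting a positive-dimensional family of parameters with identical observational cumulants.
\begin{itemize}
\item If (\ref{cond1}) fails, $F$ lies on the rank-deficient locus. Then some column combination of $K$ lies in a proper subspace, the Kruskal rank drops, and the $p$-secant decomposition acquires a positive-dimensional fibre. A continuous rotation inside the degenerate span preserves all $\kappa_d(O_t)$.
\item If (\ref{cond2}) fails, fewer than $p$ cumulant components are nonzero. Some $K_j$ never appears in any available $\kappa_d$ (only up to the order where $\kappa_{d+1}$ vanishes), so that column can be varied freely; this is a Gaussian-like obstruction.
\item If (\ref{cond3}) fails, the ideal has positive Krull dimension. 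By the dimension theory of affine varieties over the algebraically closed field $K$, its vanishing locus then contains an irreducible curve. Every point of that curve yields the same $K$, hence the same cumulants, but a different pair $(F,\Phi)$.
\end{itemize}
For the nonlinear statement, a non-identifiable linear degeneration lifts to a non-identifiable $f$ by composing with the continuous family. Separately, I would show that no lower order $d_0<d$ suffices: at order $d_0$ the number of nonvanishing components is below $p$, which places us in the failure case of (\ref{cond2}).

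The main obstacle is the necessity of (\ref{cond1}), and more delicately its interaction with ``generic''. Failure of genericity is a measure-zero event, and non-uniqueness of tensor decomposition on that locus is not automatic, since Kruskal's condition is sufficient but not necessary. I would first try to restrict to the case where the rank drop causes two columns $K_i, K_j$ to become collinear. In that case an explicit one-parameter family of $2\times 2$ mixings within their span preserves the symmetric tensor $\kappa_i K_i^{\otimes d}+\kappa_j K_j^{\otimes d}$ only when $d=2$. For higher $d$, I would argue via the defectivity criterion instead, accepting that ``only if'' may have to be interpreted as ``fails on a nonempty Zariski-closed set''.
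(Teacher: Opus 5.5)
The necessity direction has a genuine gap, and it cannot be closed: conditions (\ref{cond1}) and (\ref{cond2}) are not actually necessary, so the ``only if'' is false as stated. Your sufficiency direction follows the paper's route, which simply defers to the proofs of \autoref{linear_idf} and \autoref{mainthm1}.

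\emph{Condition (\ref{cond1}).} Failure of (\ref{cond1}) does not put $F$ on the rank-deficient locus. Take any injective but non-generic $F$, e.g.\ $[\mathbb{I}_p;\,0]$. It leaves $\operatorname{krank}F(\mathbb{I}-\Phi)^{-1}=p$. Writing $F'=FG$ in the ideal of (\ref{cond3}) shows the fibre is the same for every injective $F$, so such an $F$ behaves exactly like a generic one.

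Even under genuine rank deficiency your mechanism fails. Take $\operatorname{rank}F=p-1$, $d=3$, $p\ge 5$. The $K_j$ are then $p$ general vectors in a hyperplane, $\operatorname{krank}K=p-1$, and $3(p-1)\ge 2p+2$, so Kruskal still makes the rank-$p$ decomposition unique. Rank deficiency does destroy identifiability, but through $\ker F$. Replacing $(\mathbb{I}-\Phi)^{-1}$ by $(\mathbb{I}-\Phi)^{-1}+vw^{\top}$ with $Fv=0$ and small $w$ leaves every $K^{(k)}$ unchanged. That is a failure of (\ref{cond3}), not of the decomposition.

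\emph{Condition (\ref{cond2}).} One component with vanishing higher-order cumulants gives no continuous family. Once the other $p-1$ columns are fixed, the last is pinned down by second-order information (the at-most-one-Gaussian phenomenon). A rotation family needs two such components. Restricting to orders below the first vanishing cumulant is also unlicensed, since identifiability refers to the full law of $O_t$. Read literally, (\ref{cond2}) fails for skewed noise with all cumulants nonzero ($D=\emptyset$), although $\kappa_3$ alone identifies $K$.

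Your fallback reading (``fails on a nonempty Zariski-closed set'') therefore proves a different theorem. The paper offers nothing to borrow here either: its contradiction rests on asserting that a unique decomposition exists only when all the conditions hold, which is the claim itself.

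\emph{Further repairs.} In sufficiency, zero-dimensionality gives only finitely many points, not a single orbit; also, scalings form a torus, not a finite group. Uniqueness comes from linearity. Once the representatives $K^{(k)}$ are fixed with their per-environment permutations and scalings aligned (a step neither you nor the paper addresses), the generators $F-K^{(k)}(\mathbb{I}-\Phi^{(k)})$ are affine-linear in $(F,\Phi^{(k)})$. A zero-dimensional solution set is then a single point.

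The same linearity makes your necessity argument for (\ref{cond3}) work over $\mathbb{R}$. A linear system with real coefficients has real and complex solution spaces of equal dimension, so you get a real line of alternatives through the true parameter, not merely a complex curve. You must still check that the perturbed kernels stay admissible.

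Lifting non-identifiability to a nonlinear $f$ needs an argument, e.g.\ that the alternative latent is a fixed invertible linear image $Z'=AZ$, so that $f'=f\circ A^{-1}$ works. The first-order truncation (with $J_f$ evaluated at the random $Z_t$ and $\mathcal{R}(1)$ discarded) does not supply this. Finally, the claim that no $d_0<d$ suffices does not follow, since all $p$ components can be nonzero at a lower order.
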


\subsection{From identifiability to testable conditions}

Our theoretical proof suggests that identifiability can be substantially improved by excluding a sublinear span from the ambient parameter space, thereby controlling the expected dimension of the algebraic parameter variety. All conditions are initially proposed in~\ref{sec:identifiability} within the framework of algebraic geometry, which maintains mathematical rigor but somewhat hinders empirical evidence. To link our results to practice, we intend to present several demonstrations that allow one to test when the assumption of a zero-dimensional ideal is satisfied and when it cannot.  Since our results are, by definition, generalizations to sufficient variability conditions under fully algebraic conditions, we next focus on containment for interventions.

\begin{proposition}[Almost surely bounded $\mathrm{dim}(I(V))$]
    The dimension of the ideal associated with variety $\mathrm{dim}(I(V))$ is bounded to zero if there is hard intervention on each element of the generative kernel matrix $\Phi$. 
\end{proposition}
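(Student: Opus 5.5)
The plan is to reduce the proposition to condition~(\ref{cond3}) of \autoref{Linear_idf_Assmption}. We show that hard interventions supply enough independent equations $F = K^{(k)}(\mathbb{I}_p-\Phi^{(k)})$ to cut the parameter variety down to finitely many points. Throughout we take conditions~(\ref{cond1}) and~(\ref{cond2}) as given. By the tensor-decomposition argument behind \autoref{linear_idf}, in each environment $k$ the matrix $K^{(k)} = F(\mathbb{I}-\Phi^{(k)})^{-1}$ is then recovered from $\kappa_d(O_t)$ up to column scaling and permutation. The unknowns are therefore $F$ (generic, full column rank), the observational kernel $\Phi$, and the per-environment scalings $\Lambda^{(k)}$ and permutations. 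Permutations form a finite set, so they cannot raise the dimension, and we fix one branch at a time.

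\emph{Step 1: model a hard intervention.} A hard intervention on the entry $\phi_{i\leftarrow j}$ replaces it with a known value (say zero) and leaves all other entries unchanged. Writing $\Phi^{(ij)} = \Phi - \phi_{i\leftarrow j}E_{ij}$, the ideal becomes
\[
\mathcal{I} = \langle F - K^{(ij)}\Lambda^{(ij)}(\mathbb{I}-\Phi^{(ij)}) \rangle_{(i,j)} .
\]
Subtracting the observational equation from the $(i,j)$-th equation yields a rank-one relation that isolates $\phi_{i\leftarrow j}$. Because $F$ has full column rank, we can left-multiply by a pseudo-inverse. The resulting relation has the form $(\mathbb{I}-\Phi)(\Lambda^{(ij)})^{-1}\cdots = \cdots + \phi_{i\leftarrow j}E_{ij}$. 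Comparing off-$(i,j)$ entries then forces the unknown scaling $\Lambda^{(ij)}$ to be determined up to finitely many choices. We take the matrix of the resulting linear system to be generically invertible, since $K^{(ij)}$ comes from generic $F$ and $\Phi$.

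\emph{Step 2: count dimensions via the Jacobian.} The parameter space has dimension $np + p^2$ plus the scalings. Each intervened environment contributes $np$ polynomial equations while adding only $p$ new scaling unknowns. The plan is to show that the Jacobian of the generators of $\mathcal{I}$ has full column rank at a single explicit generic point; we would take $\Phi$ strictly lower triangular and $F$ with independent columns. Lower semicontinuity of rank then makes full rank hold on a dense open set. Full rank at a point implies that every irreducible component through it is zero-dimensional, which gives $\dim V(\mathcal{I}) = 0$ almost surely.

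\emph{Main obstacle.} The hard step is ruling out a positive-dimensional fibre coming from the scaling freedom. This freedom is a torus orbit $(\mathbb{G}_m)^p$ that could survive in every environment at once. To handle it, we would first normalise, for example by fixing the diagonal of $\mathbb{I}-\Phi$ to one. We would then check that the interventions on the diagonal entries $\phi_{i\leftarrow i}$ pin down each scale factor individually. The off-diagonal interventions then determine the remaining kernel entries linearly. If this normalisation is not compatible with the model, the fallback is to bound the dimension up to the torus action and state the result modulo scaling, which agrees with the equivalence class in \autoref{linear_idf}.
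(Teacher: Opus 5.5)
\paragraph{Gap 1: the scaling torus cannot be normalised away as you propose.} Step~2 cannot succeed as written. The scaling freedom you flag as the ``main obstacle'' is a genuine symmetry of $V(\mathcal{I})$. Neither your normalisation nor any zero-setting hard intervention removes it.

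For diagonal invertible $S$, the map $(F,\Phi,\Lambda^{(k)})\mapsto(FS^{-1},\,S\Phi S^{-1},\,\Lambda^{(k)}S^{-1})$ sends solutions to solutions in every environment, because zeroing an entry of $\Phi$ commutes with diagonal conjugation. Consequences:
\begin{itemize}
\item Every point of $V(\mathcal{I})$ lies on a $p$-dimensional orbit, so the Jacobian always has a $p$-dimensional kernel and can never have full column rank.
\item The Step~1 system is never ``generically invertible''. Environment $(i,j)$ only fixes the ratios $\Lambda^{(ij)}_k/\Lambda^{(0)}_k$ for $k\neq i$. The $i$-th ratio stays entangled with unknowns.
\item Fixing the diagonal of $\mathbb{I}-\Phi$ to one does not help. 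That diagonal is invariant under this conjugation, so normalising it cannot cut the orbit.
\item That normalisation also forbids the self-excitation $\phi_{i\leftarrow i}$. The diagonal interventions you then rely on would have nothing to act on.
\end{itemize}
Your fallback therefore has to be the main line. Slice the orbit, e.g.\ by $\Lambda^{(0)}=\mathbb{I}$, and prove dimension zero on the slice. That is exactly identifiability up to component-wise scaling, as in \autoref{linear_idf}.

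\paragraph{Gap 2: the diagonal interventions control a different scaling, and your test point breaks them.} What the diagonal interventions actually control is $(F,\mathbb{I}-\Phi)\mapsto(FH,(\mathbb{I}-\Phi)H)$ with $H$ diagonal. This map leaves $K^{(0)}$ and every off-diagonal intervention unchanged. An intervention on $\phi_{j\leftarrow j}$ kills the $j$-th factor of $H$ only if node $j$ has a parent.

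Your test point fails this condition. With $\Phi$ strictly lower triangular, node $1$ has no parent and $\phi_{1\leftarrow 1}=0$. Take $H=\Lambda^{(11)}=\mathrm{diag}(h,1,\dots,1)$ and all other $\Lambda^{(k)}=\mathbb{I}$. This gives a curve of solutions through the true point even on the slice $\Lambda^{(0)}=\mathbb{I}$. Hence the Jacobian is rank-deficient there, and semicontinuity gives nothing.

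To repair the argument:
\begin{itemize}
\item Use a test point at which every node has a parent, e.g.\ all entries of $\Phi$ nonzero.
\item Conclude via the fibre-dimension theorem for the parametrisation map. Full rank at the true point only shows that this point is isolated, not that all of $V(\mathcal{I})$ is zero-dimensional.
\end{itemize}

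\paragraph{How the paper avoids both issues.} The paper gives no standalone proof of this proposition. Its supporting argument sits inside the proof of \autoref{linear_idf}:
\begin{itemize}
\item It lifts $\Phi$ into the bipartite nilpotent $\mathcal{M}_{\mathscr{G}}\in\mathbb{R}^{2p\times 2p}$. Self-excitation becomes an ordinary lagged edge, and $\mathbb{I}_{2p}-\mathcal{M}_{\mathscr{G}}$ has unit diagonal automatically, which kills the $H$-scaling.
\item It takes the column scaling of the cumulant decomposition as the quotient, which handles the torus.
\item It uses the rank-one differences of \citet{carreno2024linear} to reduce to per-node linear systems $M[j]x=b[j]$. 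The bipartite structure, $\mathrm{de}(j)=\mathrm{ch}(j)$, makes these systems trivial.
\end{itemize}
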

\begin{proposition}[Positive possibility of bounded $\mathrm{dim}(I(V))$]
        The dimension of the ideal associated with variety $\mathrm{dim}(I(V))$ is bounded to zero if there is soft intervention on multiple elements of the generative kernel matrix $\Phi$. 
\end{proposition}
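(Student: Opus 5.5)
The plan is to reduce the statement to the zero-dimensionality criterion already used in \autoref{linear_idf}, namely condition~(\ref{cond3}) of \autoref{Linear_idf_Assmption}. A soft intervention on an entry $\phi_{ij}$ of $\Phi$ changes its value without removing it. Each such intervention therefore produces a new regime $k$ with kernel $\Phi^{(k)} = \Phi + \Delta^{(k)}$, where $\Delta^{(k)}$ is supported on the intervened entries. The mixing map $F$ and the baseline $U$ are shared across regimes. By the generic uniqueness of the cumulant decomposition, which rests on conditions~(\ref{cond1}) and~(\ref{cond2}), each regime determines
\[
K^{(k)} = F(\mathbb{I}_p-\Phi^{(k)})^{-1}
\]
up to a common permutation and column scaling. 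So the unknowns are $(F,\Phi,\{\Delta^{(k)}\})$, subject to the polynomial equations $F - K^{(k)}(\mathbb{I}_p-\Phi^{(k)}) = 0$. The claim becomes: for generic parameters and enough soft interventions, the variety $V$ of these equations is zero-dimensional.

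\textbf{Step 1: counting.} I will compare unknowns with equations. Each regime contributes $np$ equations, and introduces only as many new unknowns as there are intervened entries. The scaling and permutation ambiguity contributes $p$ extra parameters, which are fixed by a normalization such as $\mathrm{diag}$ constraints. From this I expect a sufficient condition of the form: the supports of the $\Delta^{(k)}$ together cover all entries of $\Phi$, and the number of regimes $m$ satisfies $mnp \ge np + p^2 + \sum_k |\mathrm{supp}\,\Delta^{(k)}|$. Counting alone only gives the expected dimension, so the next step is needed.

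\textbf{Step 2: the Jacobian.} I will show that the Jacobian of the map $(F,\Phi,\{\Delta^{(k)}\}) \mapsto (K^{(k)})_k$ has full rank at one explicit point. Full rank at a single point implies it holds on a dense open set, hence generically, and then the generic fiber is finite. A natural test point is $\Phi$ strictly lower triangular, $F=[\mathbb{I}_p;0]$, and $\Delta^{(k)}$ equal to single-entry matrices $E_{ij}$. At this point, differentiating $(\mathbb{I}-\Phi-\Delta)^{-1}$ gives linear forms of the shape $(\mathbb{I}-\Phi)^{-1}E_{ij}(\mathbb{I}-\Phi)^{-1}$. Intervening on distinct entries makes these linearly independent, and their span is what pins down each $\phi_{ij}$. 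It then remains to compare regimes: $K^{(k)}(\mathbb{I}-\Phi^{(k)}) = K^{(0)}(\mathbb{I}-\Phi)$ gives $F$, and subtracting gives $K^{(k)}\Delta^{(k)} = (K^{(k)}-K^{(0)})(\mathbb{I}-\Phi)$, which is linear in the unknowns once $F$ is fixed.

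\textbf{Main obstacle.} A soft intervention leaves the intervened entry still unknown in the new regime, unlike a hard intervention, which sets it to a known value (e.g.\ zero). So each regime adds unknowns as well as equations. With a single soft intervention on one entry, the fiber can be a curve, along which $\phi_{ij}$ and $\delta_{ij}$ trade off against each other. This is exactly why the proposition only says ``positive possibility'' and requires multiple elements. To handle it, I will require that each entry be intervened on in at least two regimes with generically distinct shifts. Then $\phi_{ij}$ is the common intersection of the resulting linear conditions, and the Jacobian in Step 2 becomes nonsingular. If the explicit rank computation proves too messy, I will argue instead by induction on the number of intervened entries. The inductive step shows that each additional regime cuts the dimension of $V$ by at least one, using a Bertini-type genericity argument that applies because the new equations are generic with respect to the existing components of $V$.
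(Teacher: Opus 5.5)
\paragraph{Main gap.} You set up the right ideal, $\langle F-K^{(k)}(\mathbb{I}_p-\Phi^{(k)})\rangle$. But the obstacle is misdiagnosed, and as a result the sufficient condition you aim for (coverage of all entries, your count, and each entry hit twice) is false as stated.

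The positive-dimensional part of the fiber is not a trade-off between $\phi_{ij}$ and $\delta_{ij}$. The real degeneracy is a $\GL_p$ gauge. Write $A=\mathbb{I}_p-\Phi$. If $(F',\Phi',\Delta'^{(k)})$ reproduces every $K^{(k)}$, then regime $0$ forces $F'=FG$ and $A'=AG$ with $G=A^{-1}A'$. Cancelling the full-column-rank $F$, regime $k$ is then equivalent to
\[
\Delta'^{(k)}\,G^{-1}=\Delta^{(k)} .
\]
In particular, $F'=F$ forces $G=\mathbb{I}_p$ and hence the true point, so there is no curve along which only $\phi_{ij},\delta_{ij}$ move.

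This equation has three consequences for your argument.
\begin{itemize}
\item If the support of $\Delta^{(k)}$ is a union of full rows (e.g.\ all entries), it is solved by $\Delta'^{(k)}=\Delta^{(k)}G$ for every $G$. So two regimes that each perturb all of $\Phi$ meet your coverage, ``twice with distinct shifts'' and counting conditions (the latter once $n\ge 2p$). Yet the fiber still contains all of $\GL_p$.
\item A single-entry intervention at $(i,j)$ forces row $j$ of $G^{-1}$ to be a multiple of $e_j^{\top}$. So single-entry interventions whose column indices exhaust $[p]$ leave only diagonal rescalings. This is an exact statement and needs no Jacobian or Bertini argument. Under single-entry interventions your hypothesis does suffice, but it asks for far more than needed, and not for the reason you give.
\item Your other steps fail for the same reason. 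The Step~1 count overestimates: $K^{(k)}-K^{(0)}=K^{(k)}\Delta^{(k)}(\mathbb{I}_p-\Phi)^{-1}$ has rank one with known column space, so a regime adds about $p$ independent equations, not $np$. In Step~2, independence of the forms $(\mathbb{I}_p-\Phi)^{-1}E_{ij}(\mathbb{I}_p-\Phi)^{-1}$ is not the relevant criterion. The kernel of the regime-$0$ differential is the $p^2$-dimensional space $\{(FX,AX)\}$, and what must be checked is which $X$ survive the other regimes. The induction also fails: the new equations are structured, not generic, and a regime whose intervened column is already covered leaves the fiber unchanged.
\end{itemize}

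\paragraph{Further points.} The common permutation and scaling across regimes is asserted, not derived; each regime's decomposition returns its own $P_kD_k$. Alignment can be argued: noise cumulants are shared under soft interventions on $\Phi$, and for an intervention at $(i,j)$ column $l$ of $K^{(k)}$ lies in $\mathrm{span}\{K^{(0)}_l,K^{(0)}_i\}$. But the choice matters. With a free $D_k$, the constraint above weakens to row $j$ of $G^{-1}$ lying in $\mathrm{span}\{e_j^{\top},(\mathbb{I}_p-\Phi)_{i,\cdot}\}$. Then repeating an intervention on the same entry adds nothing; what helps is intervening on two rows $i\neq i'$ of the same column $j$.

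Also, the residual diagonal gauge $A\mapsto AG$ sends $\phi_{jj}$ to $1-(1-\phi_{jj})g_j$. So in your static $p\times p$ model the self-excitation terms are confounded with latent scale, and a ``diag'' normalization fixes them by convention rather than identifying them. The paper's $2p\times 2p$ bipartite embedding, where self-effects sit off the diagonal, avoids this.

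\paragraph{Comparison with the paper.} The paper gives no separate proof of this proposition. Its closest argument is at the end of the proof of Theorem~\ref{linear_idf}. There it subtracts the observational equation, invokes the rank-$\le 1$ difference from \citet{carreno2024linear}, and reads off $\dim\mathcal{V}(\mathcal{I}^{\star})=\sum_j(|\mathrm{ch}(j)|-\operatorname{rank}M[j])=0$ from the bipartite structure. The gauge computation above is more transparent. It also makes explicit the sparsity requirement on intervention supports, which both the proposition's wording and your proposal leave out.
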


Our bounds on the maximal dimension of the parameter variety suggest that, even in the absence of sufficient internal variability, identifiability can be recovered through the design of an appropriate intervention regime. The intervention is realized, for instance, by modulating the generic behavior of the kernel operator at an arbitrary level. Assuming that the intervention perturbs the generative parameters of each kernel function, we show that a systematic intervention suffices, regardless of the temporal duration or window of the intervention effect~\citep{lippeCausalRepresentationLearning2023}.

\section{MUTATE: Estimating Equivalent Stochastic Causal Process}

Inspired by identifiability theory, we aim to develop a practical algorithm to recover the underlying generative causal mechanism in the observational data. We present MUTATE (MUlti-Time Adaptive Transition Encoder), a novel Variational Auto-encoding (VAE) framework for estimating latent multivariate stochastic point processes. Our goal is to obtain the optimal representation of both the generative distribution and its parameters, consistent with our theoretical results.

As is conventional in VAEs, we use a pair of an encoder and a decoder to parametrize the marginal distributions of $O_t$ and $Z_t$. Thanks to~\autoref{prop:1} and~\autoref{prop:1_2}, we can generate the latents $Z_t$ through a self-convolution causal mechanism $s_{\phi}(R_t, \Phi):= s_{\phi}(\epsilon_t,U,\Phi )$. As a consequence, the encoder $q_{\phi}(R_t|O_t)$ encodes the observation $O_t$ as independent noise $\epsilon$ and a constant baseline intensity $U$ and then sent to be parametrized by $s_{\phi}$. Finally, a decoder $p_{\theta}(O_t|s)$ reconstructs the virtual observational sample $\hat{O}_t$ from $s_{\phi}$. There are several methods to address the convolution in $s_{\phi}$ for modeling the dynamic causal mechanism. For example, if generative modeling is assumed to be equivalent to neural networks, a linear convolutional network can be used~\citep{kohn_geometry_2021,kohn_function_2023}. We instead leverage the time-frequency connection to tackle this problem~\citep{lima_granger_2020,bacry_hawkes_2014}. Given a sequence for noise embedding $\epsilon_{t_i},\cdots,\epsilon_{t_k}$ from the encoder , the Fourier transformation maps it to $\epsilon_{s_i},\cdots,\epsilon_{s_k}$ and hence $Z_t = (I-\Phi)\star R_t$ to $Z_s = (I-\Phi)^{-1}R_s$ where $R_s$ combines the effect the independent noise $\epsilon_s$ and the additional effect $2\pi U\delta(k)$ for a constant $U$.  The transformation parametrizes the latent variable $Z_s$ only in the frequency domain and cannot trivially map back to the prior construction in the time-domain distribution, which is what we actually need to optimize the evidence for the observational sample $O_t$.

Therefore, we introduce a proof-of-concept module that learns a kernel-encoded causal structure using a neural network and evaluates the time-domain prior distribution via a simple summation. The Neural PSD leverages both the neural network's capacity and the algebraic structures on which we primarily rely in the identifiability proof: by assuming the independent noise distribution, parameterizing $s_{\phi}$ is equivalent to seeking a group effect $G$ that acts on the variance matrix of the noise such that:
\begin{align}
   S_{Z_s}
   =
   G^{-1}\,
   \Sigma_{R_s}\,
   G^{-H},
   \label{module_3}
\end{align}
where $\Sigma_{R_s}$ denotes the spectral density of the driving process $R_s$, which combines the stochastic noise $\epsilon_t$ and the baseline intensity $U$ inferred by the encoder. In particular, the baseline term contributes exclusively to the zero-frequency component of the spectrum. Here, $A^H$ denotes the Hermitian conjugate (conjugate transpose) of $A$. Solving~\eqref{module_3} can be interpreted either as a neural network optimization problem over the parameterization of $\Phi_s$, or equivalently as an algebraic factorization problem of a Hermitian quadratic form in the frequency domain. 

Following the latter formulation, we design the Neural PSD module to support the joint optimization of other VAE components via a higher-order variety-identifiability problem. Neural PSD first constructs an order-2 spectrum variety of the embedding vector $\epsilon_s$ at each frequency $s$ by applying a randomly selected element $g \in G$, an aforementioned group acting in the frequency domain, to its PSD matrix. In the Appendix, we prove that this second-order variety admits a canonical polyadic decomposition in which the latent embedding directions $G_i$ are generically identifiable up to minimal-phase. After obtaining $G$, the order-2 spectrum is computed from the components of the total energy in the time domain, which are parametrized as another Gaussian distribution by the Wiener–Khinchin Theorem. As the final step, the transformed Gaussian distribution can be evaluated with respect to the prior distribution $p(Z_t)$. We optimize our training objective by minimizing the evidence of the  observational sample $O_t$, denoted as $\mathrm{ELBO}$:
\begin{align}
\mathcal{L}_s=\mathbb{E}_{q_{\phi}(R_t)} \log p_{\theta}(O_t \mid R_t)- D_{\mathrm{KL}}\bigl(q_{\phi}(Z_t)\,\|\,p(G,R_t)\bigr).
\end{align}
Then the total loss for the sample batch during the training is:
\begin{align*}
    \mathbb{E}_{O}^{\lambda}\{\mathcal{L}_s\} + \lambda \lVert G \rVert
\end{align*}

The Neural PSD formulation has several desirable properties. First, the latent variable $Z_t$ is parameterized by $s_{\phi}$ and is also updated by the reconstruction error, which varies dynamically during training. Second, the group effect $G$ defines a projective variety that is fully identifiable under our standard assumptions. Lastly, Neural PSD extends the Wilson decomposition~\citep{WILSON1978222} to the case where the variance matrix is not the identity matrix and does not require an additional regularization loss for this broader class of identifiability.

Unlike prior frameworks that rely primarily on time-stamp conditional independence to enforce latent structure, our approach accounts for the nature of progressively adaptive stochastic processes. In such systems, the filtration $\mathcal{F}_T$, which captures the intrinsic history of the process, is defined as $\sigma\left( \bigcup_{0 < t < T} \sigma(Z_t^{\Delta}) \right)$ and grows strictly over time. As shown in~\autoref{history_module}, this dynamically expanding information structure poses unique challenges for both identifiability and representation learning, which \textsc{MUTATE} is explicitly designed to address. In addition, to leverage mutually independent noise, the Neural PSD automatically enforces global whiteness of the noise.

\section{Simulation Study}

%\subsection{Evaluation metrics}
%\paragraph{Evaluation metrics} We validate our method on both synthetic and real-world datasets. For selected baseline models, we adapt the factorized inference module—commonly employed in nonlinear ICA—to support a deeper composition of intrinsic filtration, enabling the modeling of complex real-world dynamics. On synthetic datasets, we assess identifiability using the Mean Correlation Coefficient (MCC), a standard metric that quantifies the recovery accuracy of latent variables. Specifically, MCC is computed by averaging the absolute correlations between the ground-truth and inferred latent components after solving a linear assignment problem to handle permutation indeterminacy.
We simulate multivariate point processes and their convergent equivalent class $\mathcal{Z}_t$, which is extensively studied in our identifiability theory.  We sample all point processes using the Poisson Superposition method (rejection sampling from the upper bound of the conditional intensity~\citep{cinlar1968superposition,albin1982poisson}) to mimic highly dynamic changes in the conditional intensity and to capture denser information in stochastic processes. Then we create corresponding converging classes as a proof-of-concept validation: A total of 20,000 latent trajectories are sampled for each of the five kernel functions—exponential, power-law, rectangular, simple nonlinear, and flexible mixing—under two noise regimes: heterogeneous noise and Gaussian mixture noise. To illustrate the latent events underlying the unstructured data, we also simulate stochastic dynamics for biological data using SERGIO~\citep{dibaeinia2020sergio}, a GRN-guided gene expression simulator used in Lorch et al.’s \cite{pmlr-v238-lorch24a}  causal modeling as well. All observations $O_t$ is obtained from latents $Z_t$ through MLP and LeakyReLU nonlinearity mixing. A detailed simulation procedure is included in \ref{detail_simulation}.

To validate our identifiability results, we evaluate against several representative baselines, including TDRL~\citep{yao_temporally_2022}, BetaVAE~\citep{higgins2017beta}, SlowVAE~\citep{klindt_towards_2021}, and PCL~\citep{hyvarinen_nonlinear_2017}. Among them, PCL and TDRL incorporate temporal dependencies by leveraging historical information and explicitly enforcing conditional independence among latent variables to recover underlying dynamics. In contrast, BetaVAE and SlowVAE assume independent latent components and disregard any time-delayed mechanisms. 

\begin{table}[h]
    \raggedleft
    \hspace{-5pt}
    \setlength{\tabcolsep}{3pt}
    \caption{MCC Scores with standard deviations for five kernels}
    \label{tab:performance}
    \begin{tabular}{llcccll}
        \toprule
        \textbf{Method}&  Ave.($\mathbf{\uparrow}$ better)&Exponential & Powerlaw & Rectangular   &nonlinear&nonparametric\\
        \midrule
        TDRL&  0.599&0.593$\pm$0.028&  0.609$\pm$0.043& 0.618$\pm$0.056&0.556$\pm$0.016&\cellcolor{gray!20}\textbf{0.616}$\pm$0.043\\
        BetaVAE&  0.141&0.153$\pm$0.863& 0.128$\pm$ 0.077& 0.128$\pm$0.078&0.146$\pm$0.108&0.149$\pm$0.096\\
        SlowVAE&  0.115&0.108$\pm$0.075& 0.104$\pm$0.073& 0.104$\pm$0.073& 0.126$\pm$0.074&0.131$\pm$0.076\\
        PCL&  0.375&0.395$\pm$0.034& 0.330$\pm$0.029& 0.330$\pm$0.029&0.414$\pm$0.028&0.404$\pm$0.028\\
        \textbf{MUTATE}(ours) &  \cellcolor{gray!20}\textbf{0.837}&\cellcolor{gray!20}\textbf{0.853}$\pm$0.218& \cellcolor{gray!20}\textbf{0.938}$\pm$0.036& \cellcolor{gray!20}\textbf{0.879}$\pm$0.102&\cellcolor{gray!20}\textbf{0.921}$\pm$0.029&0.598$\pm$0.013\\
        \bottomrule
    \end{tabular}
\end{table}

\paragraph{Evaluation metrics} We validate our method on both synthetic and real-world datasets. For selected baseline models, we adapt the factorized inference module—commonly employed in nonlinear ICA—to support a deeper composition of intrinsic filtration, enabling the modeling of complex real-world dynamics. On synthetic datasets, we assess identifiability using the Mean Correlation Coefficient (MCC), a standard metric that quantifies the recovery accuracy of latent variables. Specifically, MCC is computed by averaging the absolute correlations between the ground-truth and inferred latent components after solving a linear assignment problem to handle permutation indeterminacy.

\paragraph{Results} Performance of all baselines and our model is shown in \autoref{tab:performance} with extended results reported in \autoref{tab:performance_best}. During training, both BetaVAE and SlowVAE tend to converge prematurely, typically reaching a local optimum within the first epoch and triggering early stopping. This behavior highlights their limitations in modeling temporal structures essential for identifying latent event-driven processes. TDRL performs reasonably when the lag module is set to a longer one (we use $L=9$ in experiments) since it can harness shorter temporary contextual information. We observe that our identifiability can be readily applied to the prior framework by either adding a domain index to synthetic datasets or modulating the distribution shifts that alter pairs of edges in the latent space. However, we also recognize that the fully nonparametric setting is difficult to interpret, as our identifiability does not address this.

%\subsubsection{Mutation tree}

%To illustrate applicability of our method, we deploy our model on real biological data. For mutation tree data, we first sent the original data to a pseudo-time module to create a high-resolution trajectory.  

%\subsubsection{Neuro-spike}

%\subsection{Prediction of the latent dynamics}

%We also demonstrate that our proposed model can be reduced to any autoregressive-based latent dynamics. By nature of stochastic process, recovering all parameters that generate the latents suffice in that there are countless time-wise variables. However, by discretizing the latent dynamics or random masking some intervals, the latents can be generated using all parameters.

\section{Concluding Remark}

This work extends causal representation learning framework to stochastic causal dynamics (i.e., multivariate Hawkes Processes), a topic not yet covered in current CRL literature. We propose a new perspective that a branch of stochastic processes can be viewed as the corresponding equivalent class through INAR($\infty$) representation and weak convergence onto a weak topology. We show that, under sufficiently generic conditions, the generative model together with the latent causal model can be identified in full up to a component-wise transformation and permutation. Our theoretical result bridges the gap between stochastic modeling and causal representation. We also propose a novel framework called time-adaptive transition encoding to faithfully estimate the latent processes. However, our work avoids the worst, the most complex scenario for a fully nonparametric kernel, which, in empirical practice, can be replaced with a simpler kernel. Future directions may include solving this condition and causal representation learning for stochastic differential processes that manifest in rich scientific questions.

\bibliographystyle{plainnat}
\bibliography{bib}
\newpage
\appendix

\theoremstyle{plain}
\numberwithin{equation}{section}
\newtheorem{atheorem}{Theorem}[section]
\newtheorem{alemma}{Lemma}[section]
\newtheorem{aassumption}{Assumption}[section]
\newtheorem{adefinition}[aassumption]{Definition}
\newtheorem{aremark}[atheorem]{Remark}
\newtheorem{aproposition}{Proposition}[section]
\newtheorem{acorollary}{Corollary}[section]
\newtheorem{aexample}{Example}
\newtheorem{aclaim}{Claim}[section]
\newcommand{\atheoremautorefname}{Theorem}
\newcommand{\alemmaautorefname}{Lemma}
\newcommand{\aassumptionautorefname}{Assumption}
\newcommand{\adefinitionautorefname}{Definition}
\newcommand{\aremarkautorefname}{Remark}
\newcommand{\apropositionautorefname}{Proposition}
\newcommand{\acorollaryautorefname}{Corollary}
\newcommand{\aclaimautorefname}{Claim}
\newcommand{\aexampleautorefname}{Example}

\clearpage
\appendix
  \textit{\large Supplement to}\\ \ \\
      % {\Large \bf ``\texorpdfstring{\raisebox{-1mm}{\includegraphics[scale=0.45]{figures/icon.pdf}}}{NSCtrl}: Causal Temporal Representation Learning with Nonstationary Dynamics''}\
      {\Large \bf ``Causal Representation Meets Stochastic Modeling''}\
\newcommand{\beginsupplement}{%
\setcounter{tocdepth}{2} % set depth of shown sections
	\renewcommand{\thetable}{A\arabic{table}}%

	\renewcommand{\thefigure}{A\arabic{figure}}%

	\setcounter{section}{0}\renewcommand{\theHsection}{A\arabic{section}}%

    \setcounter{theorem}{0}
    \renewcommand{\thetheorem}{A\arabic{theorem}}%

    \setcounter{corollary}{0}
    \renewcommand{\thecorollary}{A\arabic{corollary}}%

    %     \setcounter{lemma}{0}
    % \renewcommand{\thelemma}{A\arabic{lemma}}%

    % Add any other environments as needed
}

{\large Appendix organization:}

{
\DoToC
}
\clearpage

\section{Useful Lemmas}\label{useful_lemma}

\subsection{Preliminary lemmas}
The identifiability stated in \autoref{mainthm1} builds upon  \autoref{prop:1} and \autoref{prop:1_2} that discuss a wide range of convergence conditions under a space metric and a topological space.  For seamless understanding, we introduce those basic but crucial concepts with illustrations.

\begin{alemma}[Weak Convergence~\cite{billingsley_convergence_1999-1}]\label{app:lem1}
Let $(S, \mathcal{S})$ be a Polish space equipped with its Borel $\sigma$-algebra, and let $\{Z_n\}_{n \in \mathbb{N}}$ and $Z$ be $S$-valued random elements defined on a common probability space. Then the sequence $\{Z_n\}$ converges in distribution (i.e., weakly) to $Z$, denoted $Z_n \Rightarrow Z$, if and only if
\[
\lim_{n \to \infty} \mathbb{E}[f(Z_n)] = \mathbb{E}[f(Z)]
\]
for all bounded continuous functions $f : S \to \mathbb{R}$.
\end{alemma}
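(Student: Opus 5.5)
The plan is to reduce the statement to a property of the laws $P_n := \mathbb{P}\circ Z_n^{-1}$ and $P := \mathbb{P}\circ Z^{-1}$ on $(S,\mathcal{S})$. By the change-of-variables formula, $\mathbb{E}[f(Z_n)] = \int_S f\,dP_n$ and $\mathbb{E}[f(Z)] = \int_S f\,dP$ for every bounded measurable $f$. If ``convergence in distribution'' is taken to mean weak convergence of the laws in the sense of Billingsley, then the displayed condition is exactly $\int f\,dP_n \to \int f\,dP$ for all $f \in C_b(S)$, and the lemma holds by definition.

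To make the statement non-tautological, I will instead take the more primitive definition $P_n(A)\to P(A)$ for every $P$-continuity set $A$, i.e.\ every $A\in\mathcal{S}$ with $P(\partial A)=0$. I then prove the equivalence through the portmanteau chain.

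\emph{($C_b$ condition $\Rightarrow$ continuity sets).} Fix a closed set $F$ and define the bounded Lipschitz functions
\[
g_k(x) = \max\bigl(0,\,1-k\,d(x,F)\bigr).
\]
Since $g_k \ge \mathbf{1}_F$, we get $\limsup_n P_n(F) \le \lim_n \int g_k\,dP_n = \int g_k\,dP$. Because $F$ is closed, $g_k \downarrow \mathbf{1}_F$ pointwise, so dominated convergence lets $k\to\infty$ and yields $\limsup_n P_n(F)\le P(F)$. Taking complements gives $\liminf_n P_n(G)\ge P(G)$ for every open $G$. For a continuity set $A$, apply these bounds to the closure $\bar A$ and the interior $A^\circ$; since $P(\bar A)=P(A^\circ)=P(A)$, the two bounds squeeze $P_n(A)$ to $P(A)$.

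\emph{(Continuity sets $\Rightarrow$ $C_b$ condition).} Let $f\in C_b(S)$; after an affine change we may assume $0<f<1$. Continuity of $f$ gives $\partial\{f>t\}\subseteq\{f=t\}$. The sets $\{f=t\}$ are disjoint and $P$ is finite, so $P(\{f=t\})>0$ for at most countably many $t$. Hence we may choose a partition $0=t_0<t_1<\dots<t_m=1$ of mesh less than $\varepsilon$ with every $\{f>t_i\}$ a $P$-continuity set. The step function $h=\sum_i t_{i-1}\mathbf{1}_{\{t_{i-1}<f\le t_i\}}$ satisfies $\|f-h\|_\infty<\varepsilon$, and by hypothesis $\int h\,dP_n\to\int h\,dP$. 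Letting $\varepsilon\downarrow 0$ gives $\int f\,dP_n\to\int f\,dP$.

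The main obstacle is this last approximation step: it needs the countability argument to place partition points off atoms of the pushforward $P\circ f^{-1}$. The Polish assumption is only needed so that $\mathcal{S}$ is generated by the metric topology and the limit law is uniquely determined by its integrals against $C_b(S)$, which closed-set regularity provides.
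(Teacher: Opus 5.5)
Your proposal is correct and takes essentially the standard route. The paper gives no proof of its own; it calls the lemma classical and defers to Billingsley, under whose convention the statement is definitional, exactly as you observe. Your portmanteau chain (closed and open sets, then continuity sets, then $C_b(S)$) is Billingsley's, except that you approximate $f$ by step functions with partition points off the atoms of $P\circ f^{-1}$, whereas Billingsley writes $\int f\,dP=\int_0^1 P[f>t]\,dt$ and applies bounded convergence.

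One correction to your closing remark: nothing in the argument uses completeness or separability. Borel generation by the topology and closed-set regularity hold in every metric space, so the Polish hypothesis is superfluous rather than needed.
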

The proof and demonstration of this lemma is classic in basic probability that we omit here. The weak convergence, in most cases, corresponds to the convergence of finite dimension distribution of a process or a variable. 
\smallskip
\begin{alemma}[Tightness of the Measure $\mathbb{P}_{Z^{(\Delta)}}$]\label{app:lem2}
Let $\{Z^{(\Delta)}_n\}_{n \in \mathbb{N}}$ be a sequence of $S$-valued random elements (e.g., stochastic processes or path evaluations) indexed by $\Delta$ and defined on a Polish space $S$ with Borel $\sigma$-algebra. Then the sequence of corresponding probability measures $\{\mathbb{P}_{Z^{(\Delta)}_n}\}$ is tight. In particular, any subsequence admits a further weakly convergent subsequence.
\end{alemma}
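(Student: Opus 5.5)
The plan is to obtain tightness from uniform moment bounds and then apply Prokhorov's theorem. The lemma as literally stated (an arbitrary sequence on a Polish space) is not true without some uniform control, so I read $Z^{(\Delta)}_n$ as the discretized INAR-type approximations of \autoref{prop:1}, $Z_k^{\Delta}=\lambda_k^{\Delta}+\epsilon_k^{\Delta}$ with $\lambda^{\Delta}=U+\Phi\star Z^{\Delta}$. I view these as piecewise-constant (càdlàg) paths in the Skorokhod space $D([0,T],\mathbb{R}^p)$, which is Polish. By Prokhorov's theorem, tightness of $\{\mathbb{P}_{Z^{(\Delta)}_n}\}$ gives relative compactness in the weak topology of \autoref{app:lem1}, and hence the subsequence claim. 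So the work reduces to verifying Billingsley's tightness criterion in $D$:
\begin{enumerate}
\item \emph{compact containment:} $\sup_n \mathbb{P}(\sup_{t\le T}\|Z^{(\Delta)}_n(t)\|>M)\to 0$ as $M\to\infty$;
\item \emph{modulus control:} for every $\eta>0$, $\lim_{\delta\downarrow 0}\limsup_n \mathbb{P}(w'(Z^{(\Delta)}_n,\delta)>\eta)=0$.
\end{enumerate}

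For condition 1, I will use the stationarity part of \autoref{Linear_idf_Assmption_2}. The intensity is uniformly bounded in expectation, and the cumulative counts are nondecreasing. Hence $\mathbb{E}\sup_{t\le T}\|Z^{(\Delta)}_n(t)\|\le \mathbb{E}\|Z^{(\Delta)}_n(T)\|\le C T$, with $C$ depending only on $U$ and $\int_0^\infty\|\Phi_t\|\,dt$. The constant $C$ is uniform in $\Delta$ because the stability condition (spectral radius of $\int\Phi<1$), implicit in wide-sense stationarity, bounds the mean via $(\mathbb{I}-\int\Phi)^{-1}U$. Markov's inequality then gives compact containment. For the noise component $\epsilon^{\Delta}$, being mean zero and independent, a Doob maximal inequality handles the supremum.

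For condition 2, which I expect to be the main obstacle, I will aim for the increment bound
\[
\mathbb{E}\bigl[\|Z^{(\Delta)}_n(t)-Z^{(\Delta)}_n(s)\|^2\bigr]\le C'\,(|t-s|+\Delta)
\]
uniformly in $n$. I would derive it from the second-moment stationarity in \autoref{Linear_idf_Assmption_2}. The covariance of increments of a stationary Hawkes/INAR process is $\int_s^t\int_s^t c(u-v)\,du\,dv$, with covariance density controlled by $(\mathbb{I}-\hat\Phi)^{-1}$. Square-integrability of $\Phi$ makes this covariance density locally integrable, which yields the linear-in-$|t-s|$ bound. The $+\Delta$ slack accounts for discretization; because of it Kolmogorov–Chentsov does not apply directly, so I would instead use Aldous' criterion for stopping times $\tau$:
\[
\mathbb{E}\,\|Z(\tau+h)-Z(\tau)\|\le C''(h+\Delta).
\]
Since the intensity is uniformly bounded in expectation and the compensator is $\int\lambda$, this follows from the Doob–Meyer decomposition and the optional stopping theorem. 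Combining both conditions with Prokhorov's theorem concludes the proof.
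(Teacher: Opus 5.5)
The paper gives no standalone proof of this lemma, which, as you correctly observe, is false for an arbitrary sequence. The only tightness argument it actually makes is inside the proof of \autoref{prop:1}. There it applies Markov's inequality to the bin sums over a compact interval $A=[a,b]$, with the uniform-in-$\Delta$ bound $(b-a+2\delta)\Lambda^\star$ taken from Kirchner's Lemma~2 and the fixed point $\Lambda^\star=\psi(u+\|\phi\|_1\Lambda^\star)$, and then invokes Prokhorov. Your compact-containment step is essentially that argument. Your separate Doob treatment of the noise is the right way to handle the signed part, which the paper's Markov bound glosses over.

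The real difference is the target. The paper only needs tightness of the real-valued counts $N^{\Delta}(A)$ for bounded $A$, which already gives tightness of the random counting measures (the setting of Kirchner's convergence). You aim for functional tightness in $D([0,T],\mathbb{R}^p)$. That is stronger and buys convergence of path functionals, but it forces the modulus step, and that step has a gap as written.

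You infer $\mathbb{E}\|Z(\tau+h)-Z(\tau)\|\le C''(h+\Delta)$ from ``intensity uniformly bounded in expectation'' plus Doob--Meyer and optional stopping. Optional stopping only gives $\mathbb{E}[N(\tau+h)-N(\tau)]=\mathbb{E}\int_\tau^{\tau+h}\lambda_s\,ds$. A bound on $\mathbb{E}\lambda_t$ at deterministic $t$ says nothing about $\lambda$ just after a stopping time: for a self-exciting process, stopping at an event time lands on the post-jump intensity. Take a kernel that is square-integrable but unbounded at $0$, e.g.\ $\Phi(t)=c\,t^{-1/3}\mathbf{1}_{\{t\le 1\}}$. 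Stopping at the first event then makes the expected increment at least of order $h^{2/3}$, so the linear bound is false under the paper's assumptions.

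Aldous' criterion only needs a uniform $o(1)$, and you can get it pathwise. Each past event contributes at most $\omega(h):=\sup_a\int_a^{a+h}\|\Phi(r)\|\,dr$ to the compensator increment, so $\int_\tau^{\tau+h}\lambda_s\,ds\le\|U\|h+\omega(h)\,\|N(T+1)\|$. Here $\omega(h)\to0$ because $\Phi\in L^1$, which comes from the stability condition rather than from square-integrability. Taking expectations and reusing your step-1 bound on $\mathbb{E}\|N(T+1)\|$ gives the criterion; the martingale part is handled the same way through $\mathbb{E}\langle M\rangle$.

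For the discretized processes you need this estimate for the Riemann sums $\sum_{k\Delta\in[a,a+h]}\Delta\|\Phi(k\Delta)\|$ uniformly in $\Delta$. The same applies to the spectral radius of $\sum_k\Delta\Phi(k\Delta)$, which is what actually controls your step-1 constant. This is where regularity of $\Phi$ of the kind Kirchner assumes must enter.

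Finally, say explicitly that $\epsilon_k^{\Delta}$ is the compensated bin count, with conditional variance of order $\lambda_k^{\Delta}\Delta$. \autoref{prop:1} literally allows mean-zero noise of $\Delta$-independent variance. In that case $\sum_{k\le T/\Delta}\epsilon_k^{\Delta}$ has variance of order $T/\Delta$, and both your Doob step and tightness fail.
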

Tightness of a sequence of probability measures ensures the existence of well-behaved subsequences: every subsequence admits a further weakly convergent subsequence. This property is particularly useful in Polish spaces, where tightness is equivalent to relative compactness (precompactness) under the weak topology.
However, it is important to note that precompactness does not imply full compactness; in general, a tight sequence need not converge without an additional uniqueness or limit identification argument. Thus, tightness provides necessary control over subsequential behavior, but does not guarantee full convergence of the entire sequence.
\smallskip
\begin{alemma}[Higher-Order Moment Bound Implies Lower-Order Bounds]\label{app:lem3}
Let $\{Z_n\}_{n \in \mathbb{N}}$ be a sequence of real-valued random variables defined on a common probability space. Fix an integer $d > 0$. Suppose there exists a constant $C > 0$ such that
\[
\sup_{n \in \mathbb{N}} \mathbb{E}[|Z_n|^d] \leq C.
\]
Then for any \( 0 < p < d \), there exists a constant \( C_p > 0 \) such that
\[
\sup_{n \in \mathbb{N}} \mathbb{E}[|Z_n|^p] \leq C_p.
\]
\end{alemma}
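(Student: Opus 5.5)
The plan is to derive the bound from monotonicity of $L^r$ norms (Lyapunov's inequality), which in turn follows from Jensen's or Hölder's inequality on a probability space. Fix $0<p<d$. Apply Hölder's inequality to the product $|Z_n|^p\cdot 1$ with conjugate exponents $r=d/p>1$ and $r'=d/(d-p)$:
\[
\mathbb{E}\bigl[|Z_n|^p\bigr]\;\le\;\bigl(\mathbb{E}\bigl[|Z_n|^{p r}\bigr]\bigr)^{1/r}\,\bigl(\mathbb{E}[1]\bigr)^{1/r'}\;=\;\bigl(\mathbb{E}\bigl[|Z_n|^{d}\bigr]\bigr)^{p/d}.
\]
The key point is that the factor $\mathbb{E}[1]=1$ because $\mathbb{P}$ is a probability measure. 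This is exactly where finiteness of the measure enters; on an infinite measure space the statement would fail.

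Alternatively, the same inequality comes from Jensen's inequality. Apply it to the convex function $x\mapsto x^{d/p}$ on $[0,\infty)$ with the nonnegative variable $X=|Z_n|^p$. This gives $(\mathbb{E}X)^{d/p}\le \mathbb{E}[X^{d/p}]=\mathbb{E}|Z_n|^d$.

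Either route gives a bound that is uniform in $n$. Taking the supremum over $n$ yields
\[
\sup_n \mathbb{E}\bigl[|Z_n|^p\bigr]\le \Bigl(\sup_n\mathbb{E}\bigl[|Z_n|^d\bigr]\Bigr)^{p/d}\le C^{p/d}.
\]
So we may take $C_p:=C^{p/d}$, which is positive since $C>0$. One may also take $\max(1,C)$ to avoid any concern about the direction of the exponent.

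There is no genuine obstacle. The only points to check are that $\mathbb{E}|Z_n|^d<\infty$, so that Hölder's inequality applies with finite quantities, and that the case $p\in(0,1)$ is covered. The Jensen route handles $p\in(0,1)$ without change, since only the ratio $d/p>1$ matters.

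As a sanity check, one can also argue elementarily via the pointwise bound $|z|^p\le 1+|z|^d$. This holds by splitting into the cases $|z|\le1$ and $|z|>1$, and it gives the cruder constant $C_p=1+C$, again using only that the measure is a probability measure.
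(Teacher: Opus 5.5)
Your proof is correct and complete. Hölder with exponents $d/p$ and $d/(d-p)$ (equivalently Jensen, i.e.\ Lyapunov's inequality) gives $\mathbb{E}|Z_n|^p \le (\mathbb{E}|Z_n|^d)^{p/d} \le C^{p/d}$ uniformly in $n$. The pointwise bound $|z|^p \le 1+|z|^d$ is a valid, cruder alternative giving $C_p = 1+C$.

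The paper states this lemma without any proof, treating it as a classical fact; the adjacent weak-convergence lemma is likewise said to be classic and its proof omitted. Your Lyapunov-inequality argument is exactly the standard justification it implicitly relies on.

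One small technical point: the Jensen route presupposes $\mathbb{E}|Z_n|^p<\infty$. Your Hölder route already supplies this, since Hölder holds for nonnegative functions with values in $[0,\infty]$, and so does your pointwise bound.
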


\subsection{Discussion of stochastic point process}

First, we recall the key assumptions required to well define a stochastic point process.

\begin{assumption}[Stability and stationary Increment, Proposition 1 in \citep{bacry_hawkes_2015}]\label{asmp: 1}
    The process $N_t$ has asymptotically stationary increments, and intensity $\lambda_t$ is asymptotically stationary if the kernel satisfies the assumption:
\begin{align}
    \rho_{\Phi(t)}=\|\Phi(t)\|=\int_{0}^{t}|\Phi(t)|\ dt ~ \textnormal{has spectral radium smaller than 1}
\end{align}
\end{assumption}
Asm. \ref{asmp: 1} gives a necessary condition so that the point process has stable, stationary increments in its intensity.  In particular, it means the entire process tends to be stable with an unknown but fixed expectation of the conditional intensity $\mathbb{E}[\lambda_t^i]=\Lambda^i$. Restricted by the stationary increment assumption, the existence of the corresponding process is ensured by \autoref{lemma1}. To illustrate those conditions, we show a simpler version kernel in \autoref{exp1}. 

\begin{aexample}\label{exp1}
    Consider a point process whose kernel functions relay causal influence with an exponential decay to other processes. The generating process thus be accordingly
    \begin{equation*}
        \lambda_t^i = u^i +\sum\limits_{j=1}^{p}\int_{0}^{t}\alpha ^{ij}e^{-\beta (t-t')}\ dN_{t'}^j
    \end{equation*}
shows the exponential kernel triggers influences that are sustaining but decaying as time proceeds. Technically, the induced causal influences, although decaying from inside the system dynamics, will not disappear unless the causal strength $\alpha=0$ for all $j$. 
\end{aexample}

\begin{lemma}[Proposition 6 in \cite{kirchner_hawkes_2016}]\label{lemma1}
If all conditions and results in Asm.  \ref{asmp: 1} hold almost everywhere, there exists only one determined process whose dynamics match observations with regard to $\Lambda^i$.
\end{lemma}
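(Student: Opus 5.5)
The statement to prove is Lemma~\ref{lemma1} (Proposition~6 of \cite{kirchner_hawkes_2016}). It says that under Assumption~\ref{asmp: 1} there is exactly one stationary process whose dynamics match the observed mean intensities $\Lambda^i$. The proof has two parts, existence and uniqueness. Existence comes from a cluster (Poisson branching) construction. Uniqueness comes from a fixed-point argument for the first-moment equation together with a coupling argument at the level of paths.

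\textbf{Existence.} Write $\|\Phi\|:=\int_0^\infty|\Phi(t)|\,dt$ entrywise, so $\|\Phi\|$ is a nonnegative $p\times p$ matrix. By Assumption~\ref{asmp: 1} its spectral radius satisfies $\rho(\|\Phi\|)<1$.

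1. Build the process as a multitype Galton--Watson cluster process. Immigrants of type $i$ arrive as a homogeneous Poisson process with rate $\mu_i$. Each event of type $j$ at time $s$ produces offspring of type $i$ according to an inhomogeneous Poisson process with intensity $\phi_{i\leftarrow j}(\cdot-s)$.
2. The mean offspring matrix is $\|\Phi\|$. Since $\rho(\|\Phi\|)<1$, the branching process is subcritical, so every cluster is almost surely finite.
3. The expected cluster size is given by the Neumann series $\sum_{n\ge0}\|\Phi\|^n=(\mathbb{I}-\|\Phi\|)^{-1}$, which converges. The superposition of all clusters is therefore a well-defined, locally finite point process $N$ on $\mathbb{R}$.
4. $N$ is stationary because the immigration process is time-homogeneous.
5. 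The Poisson thinning representation shows that $N$ has conditional intensity $\mu+\Phi\star dN$, as required by Definition~\ref{main:def:1}.
6. Taking expectations in the intensity equation, stationarity gives $\Lambda=\mu+\|\Phi\|\Lambda$. Hence $\Lambda=(\mathbb{I}-\|\Phi\|)^{-1}\mu$, and this vector is uniquely determined by the parameters.

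\textbf{Uniqueness.} Let $N'$ be any other stationary process with the same intensity equation and finite mean intensity.

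1. Realize $N$ and $N'$ on a common probability space by thinning one Poisson random measure $M$ on $\mathbb{R}\times\mathbb{R}_+$: a point $(t,x)$ of $M$ is kept in type $i$ if $x\le\lambda^i_t$. This is the standard Brémaud--Massoulié coupling.
2. With this coupling, $\mathbb{E}|dN-dN'|(t)$ is bounded by $\mathbb{E}|\lambda_t-\lambda'_t|$, since the two processes differ only at points of $M$ lying between the two intensities.
3. The intensities differ only through the convolution term, so $\mathbb{E}|\lambda_t-\lambda'_t|\le \int|\Phi(t-s)|\,\mathbb{E}|dN-dN'|(s)$.
4. By stationarity of the joint coupled process, the difference $\delta^i:=\mathbb{E}|dN^i-dN'^i|/dt$ is constant in time. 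The inequality then becomes, componentwise, $\delta\le\|\Phi\|\delta$.
5. Iterating gives $\delta\le\|\Phi\|^n\delta\to0$, because $\rho(\|\Phi\|)<1$. So $\delta=0$, which means $N=N'$ almost surely. In particular the law is unique and is pinned down by $\Lambda$.

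\textbf{Main obstacle.} The hard step is point 4 of the uniqueness part: justifying that the coupled difference is stationary with finite mean.

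- The plan is to first couple the two processes from time $-T$ with empty history before $-T$, and then let $T\to\infty$.
- Finiteness of $\Lambda$ and the integrability of $\Phi$ from Assumption~\ref{Linear_idf_Assmption_2} should control the boundary terms. The quantity to bound is the influence of the pre-$(-T)$ history on the intensity at time $t$, and its expectation is bounded by $\int_{t+T}^\infty|\Phi|\,\Lambda$, which tends to $0$ as $T\to\infty$.
- If this direct route fails, the fallback is to invoke the existence and uniqueness theorem of Brémaud--Massoulié for stable point processes directly.
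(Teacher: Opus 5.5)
The paper gives no argument for this lemma. It is quoted from \cite{kirchner_hawkes_2016}, and its content is the classical existence and uniqueness theorem for stationary linear Hawkes processes: the Hawkes--Oakes cluster representation gives existence, and the stability theorem of Br\'emaud--Massouli\'e \citep{bremaud_stability_1996} gives uniqueness. Your proposal reconstructs exactly this route. The existence half (subcritical multitype branching, finite mean cluster size, $\Lambda=(\mathbb{I}-\|\Phi\|)^{-1}\mu$) is fine, given that the excitation kernels are nonnegative. The uniqueness half, however, has two concrete defects as written.

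\textbf{Defect 1: a single driving measure.} Your rule is ``an atom $(t,x)$ of one $M$ on $\mathbb{R}\times\mathbb{R}_+$ is kept in type $i$ if $x\le\lambda^i_t$''. For $p\ge2$ this fails. Every atom with $x\le\min_i\lambda^i_t$ is kept in all types at once, and this set has positive rate as soon as the baselines $\mu_i$ are positive. So the construction creates common jumps. A genuine multivariate Hawkes process $N'$, whose components a.s.\ never jump together, cannot be embedded this way. The fix is to use independent unit-rate Poisson measures $M^1,\dots,M^p$ with $N^i(dt)=M^i(dt\times[0,\lambda^i_t])$. Your step 2 then holds componentwise.

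\textbf{Defect 2: no strong solution to couple with.} This is the more serious gap. The coupling needs $N$ to be driven by the \emph{same} measure $M$ that the Poisson embedding produces from $N'$, i.e.\ a strong solution $N=\Psi(M)$. Your cluster construction only delivers $N$ in law, so as written there is nothing to couple $N'$ with. This, not stationarity, is the real missing step. You can build $N$ from $M$ by the monotone Picard scheme $N^{(0)}=0$, $N^{(n+1),i}(dt)=M^i\bigl(dt\times[0,\mu_i+\sum_j\phi_{i\leftarrow j}\star dN^{(n),j}(t)]\bigr)$. Equivalently, grow the clusters generation by generation inside $M$. For nonnegative kernels:
\begin{itemize}
\item the scheme is increasing in $n$;
\item its mean intensities obey $\Lambda^{(n+1)}=\mu+\|\Phi\|\Lambda^{(n)}\le(\mathbb{I}-\|\Phi\|)^{-1}\mu$;
\item the limit is a shift-equivariant functional of $M$.
\end{itemize}
The embedding of $N'$ into $M$ is also shift-equivariant, so $(N,N',M)$ is jointly stationary and $\delta\le\Lambda+\Lambda'<\infty$ holds automatically. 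Your inequality $\delta\le\|\Phi\|\delta$ then closes the argument, and the ``main obstacle'' you identify disappears.

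\textbf{The truncation fallback.} Your fallback via truncation at $-T$ can also be made rigorous, and it is worth having. Definition~\ref{main:def:1} integrates from $0$, not $-\infty$, so this is the argument that ties the paper's process to the stationary one. Here, though, the difference is not stationary, so the one-line bound $\delta\le\|\Phi\|\delta$ is unavailable. Set $g_T(t)=\mathbb{E}|dN'-dN^{T}|(t)/dt$ and $r_T(t)=\bigl(\int_{t+T}^\infty|\Phi(u)|\,du\bigr)\Lambda'$. You then get the Volterra inequality $g_T\le r_T+|\Phi|\ast g_T$ on $[-T,\infty)$. Since $g_T$ is bounded, iterating gives $g_T\le\sum_{n\ge0}|\Phi|^{\ast n}\ast r_T$. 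This tends to $0$ pointwise by dominated convergence, because the resolvent has total mass $(\mathbb{I}-\|\Phi\|)^{-1}$. Hence every stationary finite-intensity solution is the limit in probability of the explicit functionals $\Psi_T(M)$. That determines its law without a separate strong existence result.
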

\paragraph{Convoluted Kernels and Intensity.}\label{app:convkernel}~We evaluate stochastic integrals in continuous time, where kernel-induced causal influences decay smoothly over time. Suppose $t<t'$ denotes any time before $t$. Then, the kernel vector evaluated at $t'$ for $p$ stochastic processes is given by:
\begin{align}\notag
    \phi_i(t-t')= (\phi_{i,1}(t-t'),\phi_{i,2}(t-t'),\phi_{i,3}(t-t'),\cdots,\phi_{i,p}(t-t'))
\end{align}
This leads to an integral in the form of
\begin{align}\label{B.2}
    I_i(t) = \int_{0}^{t}\sum\limits_{j=1}^{2}\phi_{ij}\ dN_t^j= \int_{0}^{t}\sum\limits_{j=1}^{2}\phi_{ij}(t-t')\ d \left 
[\begin{array}{cc}
        N_1(t')\\
        N_2 (t')\\
         \vdots  \\
        N_p(t')
    \end{array}
   \right ]= \int_{0}^{t} \phi_i(t-t')\ dN_{t'}
\end{align}
Collecting  $p$ integrals

\begin{flalign}
    \left[
    \begin{array}{cc}
         I_1(t)  \\
         I_2(t)  \\
         \vdots \\
         I_p(t)
    \end{array}
    \right]&=\int_{0}^{t}
        \left[
    \begin{array}{cc}
         \phi_{11}(t-t')& \cdots  \\
          \phi_{21}(t-t') & \cdots\\
         \vdots & \ddots\\
         \phi_{p1}(t-t')  & \phi_{pp}(t-t')
    \end{array}
    \right]d \left 
[\begin{array}{cc}
        N_1(t')\\
        N_2 (t')\\
         \vdots  \\
        N_p(t')
    \end{array}
   \right ] \notag  =\int_{0}^{t}\Phi(t-t')dN_{t'} =\int_{0}^{t}\Phi(t')dN_{t-t'}\notag
\end{flalign}
which concludes it as the convolution of the kernel matrix $\Phi$ and the stochastic jump vector $dN$ up to time $t$
\begin{align}
    I(t) =\Phi_t\star dN_t \label{B.6}
\end{align}

\subsubsection{Remarks on the filtration }\label{F}

In probability theory, the filtration $\mathcal{F}_t$ is defined as the smallest $\sigma$-algebra that renders the intensity process $\lambda_t$ to be $\mathcal{F}_t$-adapted and measurable. This filtration is constructed by the minimal closure under set operations (e.g., union, intersection) over past events, ensuring that $\lambda_t$ evolves consistently with the observable history~\citep{hawkes_spectra_1971,daley_introduction_2005}. Therefore, for any filtration as its internal history, we have $\mathcal{F}_s \subseteq \mathcal{F}_t,\text{for}\ s\le t$.  Note that the filtration $\mathcal{F}_t$ may theoretically differ from the intrinsic history $\mathcal{H}_t$, which introduces additional challenges in the evaluation and modeling of point processes. For a comprehensive discussion on scenarios where $\mathcal{F}_t$ and $\mathcal{H}_t$ are defined differently, we refer the interested reader to ~\citep{daley_introduction_2005}. We occasionally overload the notation $dN_t^i$, which represents an integral element in stochastic calculus, to distinguish it from its deterministic counterpart. Despite potential similarities in notation, they are fundamentally different: while standard calculus considers infinitesimal increments over fixed mesh widths (e.g., $dg(x)$ as $\Delta t \to 0$), the increment $dN_t^i$ is a random variable governed by the stochastic process. Specifically, its realization at each infinitesimal interval is drawn from a Bernoulli process with intensity $\lambda_t^i$, such that $\mathbb{P}(dN_t^i > 0 \mid \mathcal{F}_t) = \lambda_t^i , dt$. In contrast to deterministic differentials, $dN_t^i$ encapsulates the uncertainty of event occurrences within each interval. The kernel matrix $\Phi_t$ consists of time-decaying kernel functions that transmit the influence of past events across processes. It captures both time-delayed and causal dependencies, and plays a central role in modeling self-exciting or mutually-exciting dynamics.

\subsection{Cumulants and tensors}\label{app:preliminaries}

\paragraph{Cumulant tensor notation.}
The $d$-th order cumulant tensor of a random vector \( X \in \mathbb{R}^p \) is denoted \( \kappa_d(X) \in \mathbb{R}^{p \times \cdots \times p} \), and is symmetric in all modes. In ICA and CRL settings, cumulants of independent components often admit a CP form:
\[
\kappa_d(X) = \sum_{r=1}^{R} \lambda_r \cdot v_r^{\otimes d},
\]
where \( v_r \in \mathbb{R}^p \) and \( \lambda_r \in \mathbb{R} \). This structure enables identifiability of latent sources from cumulant information.

\paragraph{Tensor notation and operations.}
We denote an order-\( d \) tensor as \( \mathcal{T} \in \mathbb{R}^{I_1 \times I_2 \times \cdots \times I_d} \). The outer product \( u^{(1)} \otimes \cdots \otimes u^{(d)} \in \mathbb{R}^{I_1 \times \cdots \times I_d} \) produces a rank-1 tensor with entries:
\[
\mathcal{T}_{i_1, \dots, i_d} = u^{(1)}_{i_1} \cdots u^{(d)}_{i_d}.
\]

Given a tensor \( \mathcal{T} \in \mathbb{R}^{I_1 \times \cdots \times I_N} \) and a matrix \( U \in \mathbb{R}^{J \times I_n} \), the \emph{mode-\( n \)} product \( \mathcal{T} \times_n U \in \mathbb{R}^{I_1 \times \cdots \times I_{n-1} \times J \times I_{n+1} \times \cdots \times I_N} \) is defined as:
\[
(\mathcal{T} \times_n U)_{i_1, \dots, i_{n-1}, j, i_{n+1}, \dots, i_N} = \sum_{i_n=1}^{I_n} \mathcal{T}_{i_1, \dots, i_N} \cdot U_{j, i_n}.
\]

\section{Proof of Supportive Results}\label{app:prop}
%\begin{aproposition}[Bounding Point Process in VA]\label{aprop:1}
%For a vector of stochastic counting processes $N_t$, conditional intensity $\lambda_t$ is a consistent estimator of its linked point process $dN_t$ with non-Gaussian noise $R_t$ that is uniformly bounded by $O(?)$
%\end{aproposition}
A point process is associated with a counting process $N_t$, which arises from its random measure over a measurable space $S$. Taking the limit as the mesh width tends to zero yields the conditional intensity process:
\begin{equation}
    \lambda_t = u + \Phi_t \star dN_t. \label{A:1.3}
\end{equation}
Even a univariate point process does not satisfy the autoregressive property, as the intensity $\lambda_t$ is itself stochastically driven by internal dynamics. This makes the process self-exciting and adapted to the filtration $\mathcal{F}_t$. Such a non-autoregressive structure poses challenges in formulating a causal model for a collection of stochastic intensity variables $\{\lambda^i_t\}_{t \in T, i \in [p]}$. To model the relationship between the stochastic jumps $dN_t$ and their conditional intensities $\lambda_t$, we aim to find a representation of the form $dN_t = f(\lambda_t)$ that is compatible with a latent causal model, which is what our weak convergence class aims at. 

\begin{aremark}\label{armk:1}
The relationship between $dN_t$ and $\lambda_t$ can be written more compactly. By definition, the conditional intensity satisfies:
\[
\lambda_t = \frac{\mathbb{E}[N_{t+dt} - N_t \mid \mathcal{F}_t]}{dt} = \frac{\mathbb{E}[dN_t \mid \mathcal{F}_t]}{dt}.
\]
This naturally leads to a decomposition:
\[
dN_t = \lambda_t dt + dM_t,
\]
where $dM_t$ is a local martingale capturing the stochastic deviation from the conditional expectation. This decomposition is analogous to the standard regression form $Y = \mathbb{E}[Y \mid X] + \epsilon$, with the filtration $\mathcal{F}_t$ taking the role of covariates and $dM_t$ representing a stochastic error term.
\end{aremark}
Under the assumption of a causally sufficient system, the residual noise vector $R_t = (r^1_t, r^2_t, \dots, r^p_t)$ is component-wise independent. This specific representation of the point process facilitates further analysis using operator-theoretic tools. The conditional intensity $\lambda_t$ can be interpreted as a short-term estimate of the expected number of events in process $i$ at time $t$. This idea is formalized in \autoref{fact:1}, whose proof is provided in Appendix~\ref{app:prop}.

\begin{fact}\label{fact:1}
Let $N_t$ be a counting process with conditional intensity defined in Eq.~\eqref{main:eq.2}. Then the residual between the discrete-time process $N_t^\Delta$ and the approximation $\lambda_t dt + dM_t$ is uniformly bounded with high probability. Specifically, with probability at least $1 - \epsilon$, the following holds:
\[
\left|N_t^\Delta - \lambda_t dt - dM_t \right| = o\left(\log\left(\frac{f(\Lambda)}{T^2}\right)\right).
\]
\end{fact}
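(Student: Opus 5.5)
We interpret Fact~\ref{fact:1} as a statement about the discretisation error on a single mesh cell $[t,t+\Delta]$. The plan is to write the increment exactly through the compensator decomposition of Remark~\ref{armk:1} and then control the mismatch between the integrated intensity and its left-point approximation.

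\textbf{Exact decomposition.} On the cell, Remark~\ref{armk:1} gives the exact identity
\[
N_{t+\Delta}-N_t=\int_t^{t+\Delta}\lambda_s\,ds+\bigl(M_{t+\Delta}-M_t\bigr).
\]
Identifying $dM_t$ with the martingale increment over the cell, the residual in the Fact reduces to a single term:
\[
N_t^\Delta-\lambda_t\Delta-\bigl(M_{t+\Delta}-M_t\bigr)=\int_t^{t+\Delta}\bigl(\lambda_s-\lambda_t\bigr)\,ds .
\]
So the whole task is to bound the intra-cell oscillation of $\lambda$.

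\textbf{Bounding the oscillation.} By \eqref{main:eq.2} and \eqref{B.6}, $\lambda_s-\lambda_t$ splits into two pieces:
\begin{itemize}
\item[(i)] the change of the kernel applied to past events, $\int_0^t(\Phi_{s-r}-\Phi_{t-r})\,dN_r$;
\item[(ii)] the new events inside the cell, $\int_t^s\Phi_{s-r}\,dN_r$.
\end{itemize}
For (i), we would use the integrability in Assumption~\ref{Linear_idf_Assmption_2}, together with the stationarity guaranteed by Assumption~\ref{asmp: 1} and Lemma~\ref{lemma1}, so that $\mathbb{E}\lambda=\Lambda=(\mathbb{I}-\|\Phi\|)^{-1}U$. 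This should give $\mathbb{E}|(\mathrm{i})|\to0$ as $\Delta\to0$, at a rate determined by the modulus of continuity of $\Phi$ in $L^1$.

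For (ii), the number of events in the cell is stochastically dominated, given $\mathcal F_t$, by a Poisson variable with mean of order $\sup_{[t,t+\Delta]}\lambda\cdot\Delta$. We would therefore apply an exponential (Freedman/Bernstein-type) martingale inequality for counting processes. On an event of probability at least $1-\epsilon$, this bounds the in-cell count by roughly $\Lambda\Delta+\log(1/\epsilon)$. Combined with Lemma~\ref{app:lem3}, it also bounds $\sup_{s\le T}\lambda_s$ by $C\,f(\Lambda)\log T$.

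\textbf{Uniformity over cells.} To make the bound uniform in $t$, take a union bound over the $T/\Delta$ cells. This replaces $\epsilon$ by $\epsilon\Delta/T$ and produces the logarithmic factor in $T$ and $f(\Lambda)$ that appears in the statement. Multiplying the oscillation bound by the cell length $\Delta$ then makes the residual $o(\cdot)$ of that logarithmic quantity as $\Delta\to0$.

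\textbf{Main obstacle.} The hardest step is the uniform high-probability control of $\sup_s\lambda_s$. The intensity is self-exciting, so a burst of events can inflate it within a cell. We would first handle this through the cluster (Galton--Watson) representation of the Hawkes process: the subcriticality $\rho(\|\Phi\|)<1$ gives exponentially decaying cluster sizes, hence sub-exponential tails for the in-cell counts. If that route fails, the fallback is a Chernoff bound on the stationary Laplace functional. The exact form $\log(f(\Lambda)/T^2)$ in the statement would then be matched by choosing $f$ to absorb the constants from this tail bound.
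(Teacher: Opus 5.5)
The paper gives no proof of this Fact. The forward reference to Appendix~\ref{app:prop} leads only to the proofs of Lemmas~\ref{prop:1} and~\ref{prop:1_2}, whose one quantitative step is the first-moment Markov bound used for tightness in Lemma~\ref{formal_lemma_2}. The statement also leaves $f$ undefined and does not say in which limit the $o(\cdot)$ is taken, so you are forced to pick an interpretation, as you do. Your compensator reduction to $R_t^\Delta=\int_t^{t+\Delta}(\lambda_s-\lambda_t)\,ds$ and your treatment of (i) are sound.

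The step that fails is controlling (ii) through $\sup_s\lambda_s$:
Lemma~\ref{app:lem3} is only monotonicity of moments and says nothing about a supremum over time. For kernels that are square-integrable but unbounded at the origin (allowed by Assumption~\ref{Linear_idf_Assmption_2}), $\lambda$ blows up right after every event, so $\sup_s\lambda_s=\infty$. The Poisson domination ``given $\mathcal F_t$'' with mean $\sup_{[t,t+\Delta]}\lambda\cdot\Delta$ is also circular, because that mean is not $\mathcal F_t$-measurable.

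You do not need $\sup\lambda$ at all; integrate before bounding. By Fubini,
\[
\int_t^{t+\Delta}\!\int_{(t,s)}\Phi_{s-r}\,dN_r\,ds=\int_{(t,t+\Delta)}\Bigl(\int_0^{t+\Delta-r}\Phi_u\,du\Bigr)dN_r .
\]
This is bounded entrywise in absolute value by $\bigl(\int_0^\Delta|\Phi_u|\,du\bigr)(N_{t+\Delta}-N_t)$, and $\int_0^\Delta|\Phi_u|\,du=O(\sqrt{\Delta})$ by Cauchy--Schwarz. Only the cell count then needs a tail bound, and your own fallback gives it directly. In the cluster representation, $\log\mathbb{E}\,e^{\theta N(A)}\le C\,\theta\,|A|\,\mathbb{E}[S e^{\theta S}]$, where $S$ is the total cluster size, which is exponentially integrable by subcriticality.

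The other gap is the final step: choosing $f$ to ``absorb the constants'' is not a derivation. The union bound produces $\log(T/(\epsilon\Delta))$, which the $O(\sqrt{\Delta})$ prefactor then annihilates. What your argument actually proves is that $\max_{\text{cells in }[0,T]}|R_t^\Delta|\to0$ in probability as $\Delta\to0$ with $T$ and $\epsilon$ fixed. The logarithm in the statement plays no role, and the bound degenerates to $o(0)$ when $f(\Lambda)=T^2$. State this limit explicitly rather than fitting $f$ afterwards.

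Under the other natural reading, $\Delta$ fixed and $T\to\infty$, your argument yields only $O(\log(T/(\epsilon\Delta)))$, and this cannot be sharpened to the claimed $o(\log T)$. For an exponential kernel, say, the count in a cell dominates a linear birth process with immigration. Its tail is geometric, of order $(c\Delta)^k$, rather than Poisson. So the largest of the $T/\Delta$ cell counts, and with it the largest residual, is of order $\log T$.
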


\subsection{Proof of \autoref{prop:1}}

This lemma plays a central role and leads to our identifiability results. We restate the original statement to provide more details and background on the point process and theory of weak topology and convergence. 
\begin{alemma}[Bounding Point Process in intensity, \textit{constructive}]\label{formal_lemma_2}
For a measurable mapping $N^{\Delta}: (\Omega,\mathcal{F})\rightarrow (M_p,\mathcal{M})$ such that $\omega \mapsto N(\omega)$ is a point process at scale $\Delta$. Let $\Delta$ be the control operator for any subsequence of its point process. Consider $A\in \mathcal{B}$ generated by the topology $\mathcal{M}_p := \mathcal{B}(M_p)$.  $\lambda$ and $\epsilon$ is defined on this metric space. If $\lambda$ satisfies the stationary increment condition, then we can establish the weak convergence of the constructed equivalent class:
\begin{align*}
    \sum\limits_{k:k\Delta \in A}{}\lambda_k^{(\Delta)}+\epsilon_k^{(\Delta)}\overset{w} {\Rightarrow} N(A)~~~\text{for}~ \lambda_k^{\Delta}=\lim\limits_{\Delta\rightarrow 0}{}\lim\limits_{\delta\rightarrow 0}{}\frac{\mathbb{E}[dN^{\Delta}|\mathcal{F}]}{\delta}
\end{align*}
\end{alemma}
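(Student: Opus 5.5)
The plan is to read the statement through the Doob--Meyer decomposition of \autoref{armk:1}. On each bin $[k\Delta,(k+1)\Delta)$ I set
\[
\lambda_k^{(\Delta)} := \int_{k\Delta}^{(k+1)\Delta}\lambda_s\,ds, \qquad \epsilon_k^{(\Delta)} := \int_{k\Delta}^{(k+1)\Delta} dM_s .
\]
With these choices, $\lambda_k^{(\Delta)}+\epsilon_k^{(\Delta)} = N((k\Delta,(k+1)\Delta])$ exactly, for every $\Delta$. The inner limit $\delta\to 0$ in the definition of $\lambda_k^{\Delta}$ is just the infinitesimal characterization of the intensity from \autoref{main:def:1}, so it agrees with the compensator density. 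The noise $\epsilon_k^{(\Delta)}$ is a sequence of martingale increments: it has mean zero and is uncorrelated across $k$. This is the sense in which ``mutually independent'' can be defended. In the discretized INAR approximation of \citet{kirchner_hawkes_2016}, genuine independence holds conditionally on $\mathcal{F}_{k\Delta}$.

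\textbf{Step 1 (the exact, continuous-kernel object).} For a set $A$ that is a finite union of intervals with $\mathbb{E}[N(\partial A)]=0$, summing over $k$ with $k\Delta\in A$ gives $N(A_\Delta)$. Here $A_\Delta$ is the union of the bins meeting $A$, so $A_\Delta\downarrow \bar A$. Stationary increments (\autoref{Linear_idf_Assmption_2}) give $\mathbb{E}[N(A_\Delta\setminus A)] \le \Lambda\,|A_\Delta\setminus A|\to 0$. Hence $N(A_\Delta)\to N(A)$ in $L^1$, and therefore in distribution.

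\textbf{Step 2 (the discrete-kernel object the paper actually uses).} I replace $\lambda_k^{(\Delta)}$ by the INAR-type intensity $\Delta\,u + \sum_{j<k}\Delta\,\Phi(\Delta(k-j))\,X_j$, where $X_j$ are the constructed counts. I then bound the difference of the two intensity sums by a discrete Gr\"onwall/renewal argument. The deviation satisfies a linear convolution inequality with kernel $\|\Phi\|$. By \autoref{asmp: 1} this kernel has spectral radius $<1$, so the resolvent $\sum_n \|\Phi\|^{\star n}$ is bounded. The forcing term is the Riemann-sum error of $\int\Phi$, which vanishes by the integrability in \autoref{Linear_idf_Assmption_2}. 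Uniform second moments from stationarity, together with \autoref{app:lem3}, turn this $L^1$ control into convergence of expectations of bounded Lipschitz functionals. By \autoref{app:lem1}, this gives weak convergence.

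\textbf{Step 3 (upgrade to the point process).} By \autoref{app:lem2}, the constructed random measures are tight in $M_p$. I would identify the limit through finite-dimensional distributions on disjoint intervals $A_1,\dots,A_m$, which determine the law of a point process (a Kallenberg-type criterion). This gives $\Rightarrow N$ on $(M_p,\mathcal{M}_p)$, and uniqueness of the limit follows from \autoref{lemma1}.

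\textbf{Main obstacle.} The hardest part is Step 2. The discrete intensity feeds back through the constructed counts rather than the true ones, so errors compound over the whole history. I would first try to close the bound with the resolvent under the spectral-radius condition. If uniformity over all of $[0,\infty)$ fails, I would restrict to compact windows $[a,b]$, as in the paper's remark after \autoref{prop:1_2}, where the number of convolution steps is finite.
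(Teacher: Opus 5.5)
Your route differs from the paper's, and for the displayed statement it is more complete.

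\textbf{Comparison with the paper.} The paper uses the compensator/residual split only in the trivial case $\Delta=1$. For $\Delta<1$ it does the following:
\begin{itemize}
\item bounds the mean of the discretized counts on $[a,b]$ via Kirchner's Lemma~2, and for a Lipschitz link via the fixed point $\Lambda^\star$;
\item applies Markov's inequality to get tightness and extracts subsequential limits;
\item explicitly leaves identification of the limit to \autoref{prop:1_2}.
\end{itemize}
You instead use the split at every $\Delta$, taking $\epsilon_k^{(\Delta)}$ to be the martingale increment over the bin. The constructed sum is then literally $N$ on a union of bins. This gives $L^1$ convergence with the limit already identified, and no tightness argument is needed for finitely many fixed sets $A$. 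At process level the object $\sum_k N(\mathrm{bin}_k)\delta_{k\Delta}$ is $N$ with each atom moved by less than $\Delta$. So it converges to $N$ vaguely almost surely, which makes your Step~3 redundant for this object.

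Reading $\lambda_k^{(\Delta)}$ as the integrated compensator is also the right correction of scale. Summing the rates $\mathbb{E}[dN^\Delta\mid\mathcal{F}]/\delta$ over $|A|/\Delta$ bins diverges, and the paper's Markov bound tacitly uses the $\Delta$-weighted sum.

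One fix in Step~1: $k\Delta\in A$ selects the bins whose left endpoint lies in $A$, not all bins meeting $A$. So $A_\Delta$ is neither a superset of $A$ nor monotone in $\Delta$. What you need, and what holds for finite unions of intervals, is $|A_\Delta\triangle A|=O(\Delta)$. That gives $\mathbb{E}|N(A_\Delta)-N(A)|\le\Lambda\,|A_\Delta\triangle A|\to0$.

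\textbf{The noise is not independent.} Your noise is the process's own martingale increment, which is mean-zero and uncorrelated but not independent. Appealing to Kirchner does not change this. Each increment has conditional variance $\approx\int_{\mathrm{bin}}\lambda$, so for $s<t$,
$\mathbb{E}\bigl[\epsilon_s(\epsilon_t)^2\bigr]=\mathrm{Cov}\bigl(\epsilon_s,\int_{\mathrm{bin}_t}\lambda_u\,du\bigr)\neq0$ whenever the kernel excites. The appendix statement does not demand independence, so Step~1 stands. But the object you build is not the independent-noise class assumed by the main-text version of the lemma and by the later cumulant decomposition.

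\textbf{Repairs needed in Steps~2--3.}
\begin{itemize}
\item \emph{Coupling.} The Gr\"onwall/resolvent bound controls a deviation only between objects built on common randomness, and you have not fixed a coupling. Reusing the same $\epsilon$ keeps the non-independent noise and makes $X_j$ real-valued rather than counts, so the $M_p$ setting of Step~3 no longer applies. Genuine INAR counts need an explicit coupling, for example thinning a common Poisson random measure.
\item \emph{Regularity of $\Phi$.} The forcing term and the discretized condition $\rho\bigl(\sum_{m\ge1}\Delta\,|\Phi(m\Delta)|\bigr)<1$ require grid values of $\Phi$ and convergence of its Riemann sums. The square-integrability in \autoref{Linear_idf_Assmption_2} does not provide these. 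You need extra regularity, such as a bounded, piecewise continuous kernel.
\item \emph{Tightness.} \autoref{app:lem2} as stated is false, since an arbitrary sequence of laws on a Polish space need not be tight. Get tightness from $\sup_\Delta\mathbb{E}\,N^\Delta(B)<\infty$ for bounded $B$.
\item \emph{Identification of the limit.} The limit is identified by the finite-dimensional convergence itself, not by \autoref{lemma1}.
\end{itemize}
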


As an illustrative procedure, a trivial case can be readily justified $\Delta = 1$, which is consistent with the discrete-time autoregressive model. For more general cases $\Delta \in (0,1)$, to keep the conditional intensity well-behaved, the numerator $\mathbb{E}[N(\Delta)\mid\mathcal{F}]$ and the denominator $\Delta$ should vary simultaneously and proportionally. For constructive lemmas, refer to Appendix~\ref{useful_lemma}.

\begin{proof}
We start the proof with the trivial case. If $\delta=\Delta_1=1$, the conditions always trivially hold. In this case, we only need to show $N_t^{(\Delta_1)} = \lambda_t^{(\Delta_1)}+ R_t$  by simply using the tower rule. Therefore, our proof gives more attention to the non-trivial case for $\delta\ne 1$.

\textit{Case 2: $\delta \in (0,\Delta_1)$}

The reasoning of this case becomes more complicated if the time step operator used for generating sub-sequences proportionally shrinks to a sufficiently small unit $(0,\Delta_1)$.   We rewrite  the approximating sequence $N$ to leverage the metricizability of the space. Since we work in a Polish space, the Borel $\delta$-algebra is countably generated and the space is separable and metrizable. Given a measurable set \(A \in \mathcal{B}\), and a metric \( \rho \), define the open $\delta$-neighborhood as:
\[
\mathcal{A} = A^{\delta} := \{x \in \mathbb{R}^d : \rho(x, A) < \delta \}
\]
By outer regularity of Borel probability measures on Polish spaces, for every \( \epsilon > 0 \), there exists a countable collection of open sets \( \{A_i\}_{i \in \mathbb{N}} \) such that \( \bigcup_i A_i \supset \mathcal{A} \) and \( \sum_i \mu(A_i \setminus A) < \epsilon \). This allows us to approximate any compact subset from outside using open sets with arbitrarily small excess mass and ensures the approximating sequence is defined on a non-decreasing base.  We paraphrase the convergence as
\begin{align}\label{p:eq3}
 \sum\limits_{k:k\Delta \in A}{}\lim\limits_{i\rightarrow \infty}{}\frac{\mathbb{E}[N^{\Delta}(A_i)|\mathcal{F}]}{|A_i|}+\epsilon_k^{(\Delta)}\overset{w} {\Rightarrow} N(A)~~~\text{for}~ \Delta\rightarrow 0
\end{align}
The equation above is adapted from the continuous-time intensity for point processes. However, it requires us to work with two limit conditions for $A_i$ with the $1/k$ closed ball shrinking to zero measure and for the  subsequence operator $\Delta$ approaching $0$.  A common method is to ensure dominated and uniform convergence of the limit.  To harness information regarding the intensity in our convergence to a more generalized process, we first work with only the operator $\Delta$ to induce the same time scale of intensity function. Therefore, we have the equivalent condition
\begin{align}\label{p:eq4}
 \sum\limits_{k:k\Delta \in A}{}\frac{\mathbb{E}[\sum\limits_{k=1}^{}Z_{k}^{\Delta}-\sum\limits_{k=1}^{}Z_{k-1}^{\Delta}|\mathcal{F}]}{|A_i|}+\epsilon_k^{(\Delta)}=\sum\limits_{k:k\Delta \in A}{}\frac{\mathbb{E}[Z^{\Delta}(\Delta)|\mathcal{F}]}{\Delta}+\epsilon_k^{(\Delta)}\overset{w} {\Rightarrow} N(A)~~~\text{for}~ \Delta\rightarrow 0
\end{align}
We remove the limit condition as it is clear that $|A_i|$ is of measure zero when $\Delta = 0$, which ensures the alignment between our topological property and plausibility to analyze only subsequences in the sequel.
According to Lemma 2 of~\citep{kirchner_hawkes_2016}, for any compact interval $[a,b]^{(\delta)}$ with the number of bins $[b-a]/\delta$, $\mathbb{E}[N^{(\delta)}([a,b])]<(b-a+2)(I-G^{(\delta)}(a,b))^{-1}\Lambda$ where $G(a,b)=\int_{a}^{b}\Phi(s)ds$ is a solution of the stochastic differential equation systems
\begin{align*}
    \mathbb{E}[\lambda([a,b])]= \mathbb{E}[u+G(a,b)\Lambda],\ \text{for} ~ \mathbb{E}[\lambda(a,b)]=\Lambda
\end{align*}
Note that, by reapplying the tower rule,  Eq.~\eqref{p:eq3} implies:
\begin{align*}
    \lim_{\delta\rightarrow 0}^{} \mathbb{E}[N_{A}^{(\delta \in (0,1))}]\rightarrow \lim_{\delta\rightarrow 0}^{} \mathbb{E}[\frac{\mathbb{E}[N_A^{(\delta)}|\mathcal{F}_t ]}{\delta}]
\end{align*}
Next, we show the necessity of tightness of the corresponding probability measure $\mathbb{P}^{\Delta}$  for the left-hand of  Eq.~\eqref{p:eq4} to achieve the desired convergence. Without loss of generality, we consider a nonparametric intensity function $\lambda_t = \psi(u+\int_{}^{}\phi(t-s)Z^{\Delta}(s)\ ds)$. Consequently, $\mathbb{E}[\lambda_t]=\Lambda$ and $\mathbb{E}[\lambda_t]= \mathbb{E}[\psi(u+\int_{}^{}\phi(t-s)Z^{\Delta}(s)\ ds)]$.  Assume that \( \psi \) is \( \alpha \)-Lipschitz and \( \alpha \|\phi\|_1 < 1 \)~\citep{bremaud_stability_1996}, so the mapping \( F(\Lambda) = \psi(u + \|\phi\|_1 \Lambda) \) is a contraction on \( \mathbb{R}_+ \). By Banach’s fixed-point theorem, there exists a unique solution \( \Lambda^\star \) to the equation:
\[
\Lambda^\star = \psi(u + \|\phi\|_1 \Lambda^\star)
\]
Formally, this can be rearranged as:
\[
\psi^{-1}(\Lambda^\star) - \|\phi\|_1 \Lambda^\star = u
\quad \Longrightarrow \quad \Lambda^\star = \left( \mathrm{id} - \|\phi\|_1 \cdot \psi^{-1} \right)^{-1}(-u)
\]
provided that \( \mathrm{id} - \|\phi\|_1 \cdot \psi^{-1} \) is invertible on the image of \(\psi\).

To control the tail probability, we apply Markov's inequality:
\[
\mathbb{P}\left( \sum_{k: k\Delta \in A} \frac{\mathbb{E}[Z^\Delta(\Delta)|\mathcal{F}]}{\Delta} + \epsilon_k^{(\Delta)} > M_\varepsilon \right)
\le \frac{ \mathbb{E}[ \sum_{k} \Lambda^\Delta ] }{ M_\varepsilon }
\le \frac{ (b - a + 2\delta) \cdot \Lambda^\star }{ M_\varepsilon }
\]
Here, we define:
\[
M_\varepsilon := \frac{(b - a + 2\delta) \cdot \Lambda^\star}{\varepsilon}
\quad \text{where } \Lambda^\star = \psi(u + \|\phi\|_1 \Lambda^\star)
 \label{bound_1}\]
This choice ensures the upper bound remains within the prescribed \( \varepsilon \)-level for all \( \Delta \in (0, \Delta_1) \).  Since the only thing we need is the precompactness, we will not establish any tighter bound.  Tightness of measure, as presented in~\autoref{app:lem2},  indicates we can always find a subsequence $\lambda_{k_n}^{\Delta}+\epsilon_{k_n}^{\Delta}$ in $\lambda_{k}^{\Delta}+\epsilon_{k}^{\Delta}$ converges weakly to a sequence $\lambda^{\star}+\epsilon^{\star}$.  This weak convergence of subsequences, however, cannot control the limit uniqueness for each sequence. Therefore, we also should further control the limiting behavior of each sequence by uniform convergence of the characteristic functional defined by the approximating process and the target process, which corresponds to the central idea of \autoref{prop:1_2}.
\end{proof}
\subsection{Proof of \autoref{prop:1_2}}

\begin{alemma}[Converging to the equivalent class, constructive for finite dimension distribution]
    Under \autoref{prop:1} and its constructive version, each subsequence $N^{\Delta}_k(A_i)$ defined on the measure $\mathbb{P}^{\Delta}$ converges weakly to a limit process $N_t$ , and this limit exists and is unique.
\end{alemma}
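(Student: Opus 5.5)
The plan is to finish what the proof of \autoref{formal_lemma_2} left open. That proof only gives tightness: by \autoref{app:lem2}, every subsequence of $\{\mathbb{P}^{\Delta}\}$ has a further subsequence converging weakly to some limit. What remains is to show that every such subsequential limit is the same process $N_t$. Prohorov's theorem then upgrades subsequential convergence to convergence of the whole family.

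\textbf{Step 1: reduce uniqueness to finite-dimensional distributions.} On the space $M_p$ of locally finite counting measures, with Borel $\sigma$-algebra $\mathcal{M}_p$, a point process law is determined by its finite-dimensional distributions $(N(A_1),\dots,N(A_m))$ over bounded Borel sets, or equivalently over compact intervals $[a,b]$. By Lévy continuity, these are in turn determined by the joint characteristic functions, or Laplace functionals, $L^{\Delta}(g)=\mathbb{E}\exp(-\sum_k g(k\Delta)Z_k^{\Delta})$ for nonnegative continuous compactly supported $g$. So it suffices to show that $L^{\Delta}(g)\to L(g):=\mathbb{E}\exp(-\int g\,dN)$ for every such $g$. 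Tightness then forces any subsequential limit to have Laplace functional $L$, hence law $\mathbb{P}_N$.

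\textbf{Step 2: identify the limit through its compensator.} I would write the approximating process in the martingale form of \autoref{armk:1}, $Z_k^{\Delta}=\lambda_k^{\Delta}\Delta+\epsilon_k^{\Delta}$, with $\lambda_k^{\Delta}=u+\sum_{j<k}\Phi((k-j)\Delta)Z_j^{\Delta}$. Here $\epsilon_k^{\Delta}$ is a martingale difference with conditional variance of order $\lambda_k^{\Delta}\Delta$. For a continuous $g$ supported in $[a,b]$, the Riemann sum $\sum_j \Phi((k-j)\Delta)Z_j^{\Delta}$ differs from $\int\Phi(t-s)\,dZ^{\Delta}(s)$ by a term controlled by the $L^1$ modulus of continuity of $\Phi$ (\autoref{Linear_idf_Assmption_2}) times $\mathbb{E}[Z^{\Delta}([a,b])]$. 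That expectation is uniformly bounded by the Kirchner bound $(b-a+2)(I-G)^{-1}\Lambda$ used above, together with \autoref{app:lem3}. Passing to a weakly convergent subsequence $Z^{\Delta_n}\Rightarrow Z^{\star}$ and using the continuous mapping theorem, $Z^{\star}(t)-\int_0^t\bigl(u+\int_0^s\Phi(s-r)\,dZ^{\star}(r)\bigr)ds$ is then a local martingale. In other words, $Z^{\star}$ is a simple point process with $\mathcal{F}_t$-intensity $u+\Phi\star dZ^{\star}$. Simplicity comes from the fact that the probability of two or more events in a bin is $O(\Delta^2)$ per bin, which sums to $o(1)$.

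\textbf{Step 3: invoke uniqueness.} Under the spectral-radius condition of Assumption~\ref{asmp: 1}, \autoref{lemma1} says that a process with this stationary-increment intensity specification is unique in law. Hence $Z^{\star}\overset{d}{=}N$, every subsequential limit coincides, and the lemma follows.

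The main obstacle is Step 2. Passing to the limit in the nonlinear, path-dependent functional $Z\mapsto\int\Phi(t-s)\,dZ(s)$ requires continuity of this map in the weak (Skorokhod or vague) topology. That continuity needs $\Phi$ continuous, which square-integrability alone does not give. I would first approximate $\Phi$ in $L^1$ by continuous kernels $\Phi_m$ and control the resulting error uniformly in $\Delta$ using the moment bound. I would also have to verify uniform integrability of $\lambda^{\Delta}$, so that the martingale property survives the limit; this should follow from a second-moment bound obtained from the stationarity in \autoref{Linear_idf_Assmption_2}.
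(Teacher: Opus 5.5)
Your route differs from the paper's. The paper's proof only asserts that the characteristic functionals of $N^{\Delta}_k$ converge to those of $N$, and it appeals to a second-moment bound. You instead identify every subsequential limit through its compensator and then invoke uniqueness in law. That step is what would actually supply the convergence the paper takes for granted. The architecture (tightness, martingale-problem identification, uniqueness) is sound, but one load-bearing step fails as written: \emph{simplicity of the limit}.

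\begin{itemize}
\item Your argument only excludes two events in the \emph{same} bin. The Markov bound in the proof of \autoref{formal_lemma_2} gives tightness only in the vague topology, and vague limits of simple point processes need not be simple.
\item Multiple points of $Z^{\star}$ arise when events in distinct bins $j<k$ with $(k-j)\Delta\to0$ merge. For a self-exciting kernel this is precisely the likely scenario, since an event at $j\Delta$ raises the conditional mean of bin $j+m$ by $\Delta\,\Phi(m\Delta)$.
\item Without simplicity Step 3 collapses, because the counting compensator no longer determines the law. A process with double jumps at rate $\lambda/2$ has the same compensator $\lambda t$ as a Poisson process of rate $\lambda$.
\item The missing estimate is a bound, uniform in $\Delta$, on the expected number of pairs of points in $[a,b]$ at distance at most $\eta$, of order $\eta+\int_0^\eta\|\Phi(s)\|\,ds$. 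You would obtain it from $\mathbb{E}[Z_k^{\Delta}\mid\mathcal{F}_{k-1}]=\Delta\lambda_k^{\Delta}$ together with second-moment bounds on the bin counts, and then let $\eta\to0$.
\item In the multivariate case the same estimate must also rule out common jumps of different components. (Working with $J_1$-tightness in $D$ instead would require exactly this estimate.)
\end{itemize}

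There are two further corrections.

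First, you place the regularity obstacle in the wrong spot. By Fubini the limiting compensator is $ut+\int_{[0,t]}\bar\Phi(t-r)\,dZ^{\star}(r)$ with $\bar\Phi(x)=\int_0^x\Phi(s)\,ds$. This is continuous and vanishes at $0$ for any locally integrable $\Phi$, so the limit functional is already vaguely continuous. The real difficulty is on the discrete side. Your coefficients are point values $\Phi(m\Delta)$, and $\sum_m\Delta\,\|\Phi(m\Delta)-\tilde\Phi(m\Delta)\|$ is not controlled by $\|\Phi-\tilde\Phi\|_{L^1}$ when $\Phi$ is merely square-integrable, because its grid values are unconstrained. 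So the $L^1$ approximation you propose cannot give an error bound uniform in $\Delta$. You need Riemann integrability of $\Phi$ (e.g.\ bounded and a.e.\ continuous), or bin-averaged coefficients $\Delta^{-1}\int_{(m-1)\Delta}^{m\Delta}\Phi(s)\,ds$.

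Second, \autoref{lemma1} is not the uniqueness result you need. As stated it concerns the stationary regime, keyed to the mean intensity $\Lambda^i$. Your limit starts from an empty history at time $0$ and is not stationary. The right tool is Jacod's theorem that the internal-history compensator determines the law of a simple multivariate point process, or pathwise uniqueness via a Poisson-embedding construction.

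Relatedly, uniform integrability and the pair estimate both need second-moment bounds that are uniform in $\Delta$. These must be derived for the approximating counts from the spectral-radius condition. Neither \autoref{Linear_idf_Assmption_2} (time-invariance of moments) nor \autoref{app:lem3} provides any uniformity in $\Delta$.
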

\begin{proof}
    This proof is tedious but straightforward, adapted from the uniform convergence of the finite-dimensional  moment generating function (MGF) for any compactly supported continuous function $f$. The procedure is organized as follows: the sub-sequence convergence in~\autoref{prop:1} chooses an arbitrary sequence with the characteristic function $\Psi (N_k^{\Delta})$ that also converges to $\Psi (N^{\Delta\rightarrow 0})$. Provided the process has a bounded second variation, ensured by~\autoref{app:lem3}, the subsequence has the same limit as the original process $N$. Equivalence between characteristic functions indicates uniform convergence in their behavior.
\end{proof}
\section{Proof of Main Identifiability Theory}\label{Appendix_proof}
\subsection{Notations}\label{geometrynotation}

\paragraph{Projective space of causal representation} 
For the rest of this proof, we study the algebraic structure of the proposed latent causal models. We work in the complex projective space $\mathbb{P}^n$, also written as $\mathbb{CP}^{n}$, which formalizes the usual identifiability convention that matrices are considered equivalent up to a nonzero scalar multiple. Concretely, for a vector $v \in \mathbb{C}^{n+1}\setminus \{0\}$, its equivalence class in $\mathbb{P}^n$ is $[v] = \{\lambda v \mid \lambda \in \mathbb{C}\setminus \{0\}\}$. For example, given an algebraic object $V := V(f(x,y)) \subseteq \mathbb{P}^1$, where $V$ is the vanishing locus of a \emph{homogeneous} polynomial $f(x,y)$, each point $[x:y] \in \mathbb{P}^1$ corresponds to a line through the origin in $\mathbb{C}^2$.  Under the usual identification $\mathbb{P}^1(\mathbb{C}) \simeq \hat{\mathbb{C}}$ (the Riemann sphere), each line intersects the unit sphere $S^2 \subset \mathbb{R}^3$ in two antipodal points.  Therefore, topologically, we have $\mathbb{P}^1 \cong S^2$. Unless noted otherwise, all rings considered in this paper are assumed to be commutative, Noetherian (finitely generated), and to possess a multiplicative identity. In particular, we focus on rings such as $K(x,y,z)$ and their subrings, e.g., $f(x,y) \subseteq K(x,y,z)$, which are always understood to satisfy these properties. Additional assumptions, such as being an integral domain or a field, will be explicitly stated when required.

\paragraph{Genericity and Full Rank.} 
Throughout this work, we regard matrices 
\[
F \in \mathbb{C}^{n \times p} \quad \text{as points} \quad 
P = (p_0, p_1, \dots, p_{np-1}) \in \mathbb{P}^{np-1} \text{ or } \mathbb{CP}^{np-1},
\]
where the entries of \(F\) are identified with the homogeneous coordinates of \(P\). A point \(P\) is said to be \emph{generic} if there exists no non-zero polynomial $f \in K[x_0, \dots, x_{mn-1}]$ such that $f(p_0, \dots, p_{mn-1}) = 0$. Equivalently, the coordinates of a generic point are algebraically independent over the base field \(K\). Genericity implies that the corresponding matrix \(F\) is of full rank almost surely, since the vanishing of any minor corresponds to the zero locus of a non-zero polynomial, which a generic point cannot lie on. However, the converse is generally false: a matrix can be of full rank without its entries being algebraically independent. Therefore, the set of generic points encompasses a broader range of matrices than merely the injective or invertible ones.

\paragraph{Low-dimension embedding}
To embed algebraic objects arising from our models, we introduce a higher-dimensional projective space $\mathbb{P}^N$ with $N\ge n$, called the ambient space, into which $\mathbb{P}^n$ is naturally included~\citep[see][Ch.~I]{Hartshorne1977}. We say that a projective variety $V \subset \mathbb{P}^n \subset \mathbb{P}^N$ has codimension $N-n$ in $\mathbb{P}^N$, meaning that $\dim \mathbb{P}^N - \dim V = N-n$, where $\dim V$ denotes the projective dimension of $V$. Under these conventions, all algebraic objects associated with the latent causal models are understood projectively, so that equivalence under scaling is built into the framework.
\subsection{Proof of \autoref{linear_idf}}\label{proof_linear_idf}

We decompose the proof of~\autoref{linear_idf} by walking through the algebro-geometric viewpoints that are increasingly related to the full identification of the entire generative model. First, we obtain algebraic cumulants for infinite-order INAR models via a multi-linear transformation, which guarantees the recovery of $K=F(\mathbb{I}_{p}-\Phi)^{-1}$, where the factor $\llbracket K^{(1)}, K^{(2)}, \dots, K^{(p)} \rrbracket$ lies on a Veronese variety, provided~\autoref{intersect} holds.  It follows that a topologically ordered representation $\mathscr{G}$ is provided by embedding an arbitrary $\Phi(\tau)$ to a larger ambient space $\mathbb{R}^{2p\times 2p}$. Finally, we show that, under our assumption, the dimension of related varieties living on the related ambient space is zero-dimensional.

\subsubsection{Decomposition of algebraic quantities}

To potentially identify any latent components of dynamics, we must introduce tensor algebra beyond our current setting, as presented in the following important results.

\begin{acorollary}[CP decomposition]\label{col:cumulant_cp}
Let \( \mathcal{T} \in \mathbb{R}^{I_1 \times I_2 \times \cdots \times I_N} \) be an order-\( N \) tensor. We say that \( \mathcal{T} \) admits an exact rank-\( R \) \emph{Canonical Polyadic (CP)} decomposition if there exist component vectors \( a_r^{(n)} \in \mathbb{R}^{I_n} \) for each \( r = 1,\dots,R \), \( n = 1,\dots,N \), such that:
\[
\mathcal{T} = \sum_{r=1}^{R} a_r^{(1)} \otimes a_r^{(2)} \otimes \cdots \otimes a_r^{(N)} = \llbracket A^{(1)}, A^{(2)}, \dots, A^{(N)} \rrbracket,
\]
where \( A^{(n)} = [a_1^{(n)} \; a_2^{(n)} \; \cdots \; a_R^{(n)}] \in \mathbb{R}^{I_n \times R} \) are the factor matrices.
\end{acorollary}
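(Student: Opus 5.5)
The statement is essentially definitional: it introduces the notation $\llbracket A^{(1)},\dots,A^{(N)}\rrbracket$ and the notion of an exact rank-$R$ CP decomposition. What needs justification is that the notion is well posed. Concretely, I would check two things: every order-$N$ tensor admits \emph{some} finite CP decomposition, so that a minimal $R$ (the CP rank) exists; and the bracket notation agrees entrywise with the sum of outer products. The plan is to work entrywise, using only the outer-product formula $\mathcal{T}_{i_1,\dots,i_d}=u^{(1)}_{i_1}\cdots u^{(d)}_{i_d}$ from Appendix~\ref{app:preliminaries}.

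\emph{Step 1 (existence).} Let $e^{(n)}_{i}$ denote the $i$-th standard basis vector of $\mathbb{R}^{I_n}$. I expand $\mathcal{T}$ in the product basis:
\[
\mathcal{T}=\sum_{i_1=1}^{I_1}\cdots\sum_{i_N=1}^{I_N}\mathcal{T}_{i_1,\dots,i_N}\, e^{(1)}_{i_1}\otimes\cdots\otimes e^{(N)}_{i_N}.
\]
Comparing entries with the outer-product formula shows this identity holds. Enumerating the multi-indices as $r=1,\dots,R_0$ with $R_0=\prod_n I_n$, I absorb the scalar into the first factor by setting $a^{(1)}_r=\mathcal{T}_{i_1,\dots,i_N}e^{(1)}_{i_1}$ and $a^{(n)}_r=e^{(n)}_{i_n}$ for $n\ge 2$. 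This gives a rank-$R_0$ CP decomposition. The set of admissible $R$ is therefore a nonempty subset of $\mathbb{N}$, so it has a minimum, and that minimum is the CP rank. Since $\mathcal{T}$ is unchanged by permuting the index $r$, or by rescaling $a^{(n)}_r$ with compensating scalars whose product over $n$ is $1$, the decomposition is at best unique up to permutation and scaling. This matches the convention used in \autoref{linear_idf}.

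\emph{Step 2 (notation).} I stack the vectors $a^{(n)}_r$ as the columns of $A^{(n)}\in\mathbb{R}^{I_n\times R}$ and verify entrywise that
\[
\llbracket A^{(1)},\dots,A^{(N)}\rrbracket_{i_1,\dots,i_N}=\sum_{r=1}^R\prod_{n=1}^N A^{(n)}_{i_n r},
\]
which is exactly the sum of outer products. Equivalently, this is the diagonal core tensor multiplied by $A^{(n)}$ in each mode $n$, using the mode-$n$ product defined earlier.

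\emph{Step 3 (link to the symmetric cumulant form).} To connect with the cumulant setting, I would note the special case $I_1=\cdots=I_N=p$ with $a^{(n)}_r=\lambda_r^{1/N}v_r$, taking roots over $K$ since $K$ is algebraically closed. This recovers the symmetric form $\kappa_d=\sum_r\lambda_r v_r^{\otimes d}$.

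The only point requiring care is the scaling step over $\mathbb{R}$ when $N$ is even and $\lambda_r<0$. I would handle it by working over $K$, as the paper does throughout, or by keeping the weights $\lambda_r$ outside the outer product. Beyond that the argument is routine.
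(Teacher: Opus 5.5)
Your proposal is correct. The paper gives no proof of this statement: despite the ``corollary'' label, it is a pure definition of the term ``exact rank-$R$ CP decomposition'' and of the bracket notation for factor matrices, so there is nothing to prove. Your Steps 1 and 2 are valid well-posedness checks (existence via the product basis, and the entrywise and mode-product descriptions) that go beyond what the paper states.

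One small correction to Step 3. The difficulty you raise for even $N$ and $\lambda_r<0$ does not arise. The definition does not require the factors to agree across modes, so you can take $a^{(1)}_r=\lambda_r v_r$ and $a^{(n)}_r=v_r$ for $n\ge 2$. This is the same absorption you already used in Step 1, and the factors stay in $\mathbb{R}^{I_n}$.

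Of your two proposed fixes, neither fits the statement as written. Taking $N$-th roots over $K$ would generally produce complex factors, which leaves the real setting in which the statement is phrased. Keeping the weights outside the outer product changes the stated form of the decomposition.
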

\begin{acorollary}\label{col:cumulant_cp_unique}
Let \( X^{(1)}, X^{(2)}, \dots, X^{(n)} \in \mathbb{R}^p \) be independent random vectors with nonzero $d$-th order cumulants, such that each admits the form
\[
\kappa_d(X^{(i)}) = \lambda_i \cdot v_i^{\otimes d}, \quad \text{for } i = 1, \dots, n,
\]
with \( v_i \in \mathbb{R}^p \) and \( \lambda_i \in \mathbb{R} \setminus \{0\} \). Let \( \mathcal{T} := \kappa_d(X^{(1)} + \cdots + X^{(n)}) \in \mathbb{R}^{p \times \cdots \times p} \) be the $d$-th order cumulant tensor of their sum.

Assume that the matrix \( V = [v_1 \; v_2 \; \cdots \; v_n] \in \mathbb{R}^{p \times n} \) satisfies
\[
\operatorname{krank}(V) \geq \left\lceil \frac{2n + (d - 1)}{d} \right\rceil.
\]
Then the CP decomposition
\[
\mathcal{T} = \sum_{i=1}^n \lambda_i \cdot v_i^{\otimes d}
\]
is unique up to scaling and permutation.
\end{acorollary}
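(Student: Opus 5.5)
The plan is to reduce the uniqueness claim to Kruskal's theorem for symmetric tensors, using multilinearity and additivity of cumulants. First, cumulants are additive over independent summands: $\kappa_d(X^{(1)}+\cdots+X^{(n)})=\sum_i \kappa_d(X^{(i)})$. This holds because the cumulant generating function of a sum of independent vectors is the sum of the individual generating functions, and $\kappa_d$ is the order-$d$ Taylor coefficient tensor. Substituting the hypothesis $\kappa_d(X^{(i)})=\lambda_i v_i^{\otimes d}$ then gives the displayed symmetric CP form $\mathcal{T}=\sum_i \lambda_i v_i^{\otimes d}$. Since $\lambda_i\neq 0$ and we work over $K$ (algebraically closed, characteristic zero), we can absorb $\lambda_i^{1/d}$ into $v_i$. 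The scale indeterminacy that remains is exactly a $d$-th root of unity times a rescaling, which matches the ``up to scaling'' clause.

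Second, view $\mathcal{T}$ as an order-$d$ tensor $\llbracket V\Lambda, V, \dots, V\rrbracket$ with $\Lambda=\mathrm{diag}(\lambda_i)$, and invoke Kruskal's generalized uniqueness theorem for $d$-way tensors \citep{kruskal1977three,lovitz2023generalization}. That theorem guarantees uniqueness of a rank-$n$ decomposition whenever $\sum_{m=1}^d \operatorname{krank}(A^{(m)})\ge 2n+(d-1)$. Two observations make this applicable. Scaling columns by nonzero constants does not change the Kruskal rank, so $\operatorname{krank}(V\Lambda)=\operatorname{krank}(V)$. Every factor equals $V$ up to that scaling, so the sum of Kruskal ranks is $d\cdot\operatorname{krank}(V)$. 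The assumption $\operatorname{krank}(V)\ge\lceil (2n+d-1)/d\rceil$ therefore gives $d\,\operatorname{krank}(V)\ge 2n+d-1$, which is precisely Kruskal's condition. We conclude that any other rank-$n$ decomposition $\mathcal{T}=\sum_i a^{(1)}_i\otimes\cdots\otimes a^{(d)}_i$ agrees with ours up to a common permutation $\pi$ and scalars $c_{i,m}$ with $\prod_m c_{i,m}=1$.

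Third, specialize to symmetric decompositions. If the competing decomposition is itself symmetric, $\sum_i \mu_i w_i^{\otimes d}$, then Kruskal's conclusion yields $w_{\pi(i)}=c_i v_i$ and $\mu_{\pi(i)}c_i^d=\lambda_i$. This is exactly uniqueness up to scaling and permutation.

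The main obstacle I expect is the bookkeeping in the multi-way Kruskal theorem: it must be stated for $d\ge 3$ (for $d=2$ the bound forces $\operatorname{krank}(V)\ge n+1$, which is impossible since $\operatorname{krank}(V)\le n$, so the statement is vacuous there), and rank $n$ must be minimal. Minimality follows from the uniqueness itself: a shorter decomposition padded with zero terms would violate the krank bound. I would also remark that the statement is over $\mathbb{R}$ while the proof works over $K$. Real uniqueness then follows from complex uniqueness, because any real alternative decomposition is in particular a complex one.
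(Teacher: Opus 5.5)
Your proposal is correct and matches what the paper relies on. The paper gives no separate proof of this corollary and simply invokes Kruskal's rank condition, cited in the main text. Your chain (additivity of cumulants over independent summands, $\operatorname{krank}(V\Lambda)=\operatorname{krank}(V)$, and $d\,\operatorname{krank}(V)\ge 2n+d-1$ as the $d$-way Kruskal condition) is the standard way to fill that in.

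One small correction to your minimality remark. The zero-padded decomposition does not ``violate the krank bound'', because Kruskal's theorem imposes no condition on the competing decomposition. The actual contradiction is that Kruskal's conclusion would force its zero columns to be nonzero multiples of the $v_i$. In any case, the usual statement of Kruskal's theorem already covers competitors with at most $n$ terms.
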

\subsubsection{Useful Lemmas}\label{lemmaAlg}

Recall $F$ is sufficiently generic and $\Phi$ is a kernel matrix with the spectral radius $\rho < 1$. Let $K = F(\mathbb{I}_{p} - \Phi)^{-1}$ and denote by $\mathcal{F}[\cdot]$ the regular Fourier transformation. We use $\llbracket K_1, K_2, \dots, K_p \rrbracket_{\mathcal{F}}$ to represent the columns of the Fourier-transformed $K$. For each $(K_j)_{\mathcal{F}}$, it has the coordinates $[k_{1} : k_{2} : \cdots : k_{n}]_{\mathcal{F}}$ denoting free indeterminates in the vector.

In the sequel, we extend Proposition 4.6 in~\citep{wang_identifiability_2024} to prove an important pre-identifiability result, the mixed parameter space $F(\mathbb{I}_{p}-\Phi)^{-1}$. 

\begin{alemma}[Finite intersection with generic linear subspace]\label{intersect}  
    Define a rational map $\psi: \mathbb{P}_
    {\mathcal{F}}^{n-1}\mapsto \nu_2(\mathbb{P}_
    {\mathcal{F}}^{n-1})\subset{\mathbb{P}_
    {\mathcal{F}}^{m-1}}$ as:
    \begin{align}
       \psi: [k_{1}:k_{2}:\cdots:k_{n}]_
    {\mathcal{F}} \mapsto [k_1^d: k_1^{d-1}k_2:\cdots:k_n^{d}]_
    {\mathcal{F}}
    \end{align}
    where $m= \binom{n+d-1}{d}$ denotes the ordinates in the projective space of dimension $m-1$. $\nu_d$ represents the $d$-th Veronese embeddings of all order-$d$ tensors. Let $\mathcal{V}$ be the projected space consisting of all rank-1 matrices. Consider a sufficiently generic linear subspace spanned by $\{K_1^{\otimes d},K_2^{\otimes d},\cdots,K_p^{\otimes d}\}$, denoted by $\mathcal{W}$. It follows that the variety $\mathcal{W}\bigcap \mathcal{V}$ has dimension zero and $\mathcal{W}$ intersects $\mathcal{V}$ in $d^{n-1}$ distinct points. Therefore, $\{K_1^{\otimes d},K_2^{\otimes d},\cdots,K_p^{\otimes d}\}$ is identifiable if and only if $\mathcal{V}(f) = \{K_1^{\otimes d},K_2^{\otimes d},\cdots,K_p^{\otimes d}\}$.
\end{alemma}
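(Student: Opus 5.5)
We reduce the statement to a classical fact from projective geometry: a generic linear section of complementary dimension meets a projective variety of degree $\delta$ in exactly $\delta$ reduced points. Identifiability then follows by matching the points of $\mathcal{W}\cap\mathcal{V}$ with the rank-one components $K_j^{\otimes d}$.

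The first step is to identify $\mathcal{V}$ with the image $\nu_d(\mathbb{P}^{n-1}_{\mathcal{F}})$ of $\psi$. Every symmetric rank-one order-$d$ tensor has the form $k^{\otimes d}$, and its coordinates in $\mathbb{P}^{m-1}_{\mathcal{F}}$ are exactly the degree-$d$ monomials in $k$. Hence $\mathcal{V}$ is a smooth irreducible variety of dimension $n-1$. Its degree is computed by pulling back a hyperplane class $H$: since $\psi^*H = dL$ with $L$ a hyperplane of $\mathbb{P}^{n-1}$, we get $\deg \mathcal{V} = (dL)^{n-1} = d^{n-1}$. We work over the algebraically closed field $K$ fixed earlier. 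The Fourier transform acts coordinatewise and linearly, so it preserves both the embedding and genericity.

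The second step applies the intersection theorem. By Bertini's theorem (see Hartshorne, Ch.~I/II), a generic linear subspace $\mathcal{W}\subset\mathbb{P}^{m-1}$ of codimension $n-1$ meets $\mathcal{V}$ transversally. The intersection is therefore zero-dimensional, reduced, and by Bézout consists of exactly $d^{n-1}$ distinct points; this gives $\dim(\mathcal{W}\cap\mathcal{V})=0$. To place the span of $\{K_j^{\otimes d}\}$ in this generic situation, we use \autoref{Linear_idf_Assmption}(\ref{cond1}): $F$ is generic, so $K=F(\mathbb{I}_p-\Phi)^{-1}$ is generic as well. The map $F\mapsto F(\mathbb{I}_p-\Phi)^{-1}$ is an isomorphism because $\rho(\Phi)<1$, so it sends the dense open set of good $F$ to a dense open set. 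The span is then a generic $(p-1)$-plane. When its codimension exceeds $n-1$, we enlarge it by generic directions to codimension exactly $n-1$ and use monotonicity: intersecting with a smaller linear space still yields finitely many points.

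The third step proves the ``iff''. Suppose $\mathcal{V}(f)$, the vanishing locus cutting out $\mathcal{W}\cap\mathcal{V}$, equals $\{K_j^{\otimes d}\}_{j=1}^p$. Then every rank-one tensor in $\mathcal{W}$ is one of the $K_j^{\otimes d}$. Any alternative decomposition $\kappa_d(O_t)=\sum_j\lambda'_j K_j'^{\otimes d}$ has all of its $K_j'^{\otimes d}$ in $\mathcal{W}\cap\mathcal{V}$. This forces the $K_j'$ to agree with the $K_j$ up to scaling and permutation, matching \autoref{col:cumulant_cp_unique}. Conversely, suppose $\mathcal{W}\cap\mathcal{V}$ contains an extra point $w^{\otimes d}$. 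Then $w^{\otimes d}$ is a linear combination of the $K_j^{\otimes d}$, and substituting it for one of the $K_j^{\otimes d}$ produces a second decomposition, so identifiability fails.

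I expect the main obstacle to be the dimension count in the second step. The span $\mathcal{W}$ is not an arbitrary generic plane: it is spanned by points on $\mathcal{V}$ itself. Bertini's theorem therefore does not apply directly, and the real content is the non-defectivity of the secant variety $\sigma_p(\mathcal{V})$. I would first try to handle this as in Proposition 4.6 of \citep{wang_identifiability_2024}, invoking the Alexander--Hirschowitz theorem together with Chiantini--Ottaviani generic identifiability \citep{chiantini2017generic}. Under the Kruskal bound of \autoref{col:cumulant_cp_unique}, this shows that a generic such span meets $\mathcal{V}$ only at its $p$ spanning points. The $d^{n-1}$ count should then be read as an upper bound for a complementary-dimensional generic section, not as the literal size of $\mathcal{W}\cap\mathcal{V}$.
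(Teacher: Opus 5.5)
Your Step 2 has a genuine gap, which your last paragraph correctly diagnoses. Genericity of $F$ (hence of $K$) makes the $K_j^{\otimes d}$ general points \emph{of} $\mathcal{V}$. It does not make $\mathcal{W}$ a general plane. When $p<m-n+1$, a general plane of that dimension has codimension larger than $\dim\mathcal{V}$ and misses $\mathcal{V}$ altogether, whereas $\mathcal{W}$ contains the $K_j^{\otimes d}$. Your enlarged plane still contains $p$ prescribed points of $\mathcal{V}$, so Bertini applies to it only after an incidence-variety argument. Even then, monotonicity gives only $\dim(\mathcal{W}\cap\mathcal{V})=0$ and $|\mathcal{W}\cap\mathcal{V}|\le d^{n-1}$. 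It says nothing about whether $\mathcal{W}$ meets $\mathcal{V}$ outside $\{K_1^{\otimes d},\dots,K_p^{\otimes d}\}$, and that is the property the identifiability statement rests on. This property genuinely needs a dimension hypothesis. Take $n=d=3$, so $m=10$: eight general points of $\nu_3(\mathbb{P}^2)\subset\mathbb{P}^9$ span a $\mathbb{P}^7$ that meets the surface in nine points. The ninth is the extra base point of the pencil of plane cubics through the eight (Cayley--Bacharach). The missing step is exactly what the paper uses: Proposition~2.6 of Chiantini--Ciliberto, a generalized trisecant lemma. For irreducible reduced $\mathcal{V}$ of dimension $n-1$ in $\mathbb{P}^{m-1}$ with $p+(n-1)<m-1$, the span of $p$ general points of $\mathcal{V}$ meets $\mathcal{V}$ only in those points. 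This gives zero-dimensionality and the exact intersection at once, with no degree computation. Like your reinterpretation, the paper's argument never actually produces $d^{n-1}$ points in $\mathcal{W}\cap\mathcal{V}$.

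Your fallback through Alexander--Hirschowitz and Chiantini--Ottaviani is a valid but different route. Generic identifiability of $\sigma_p(\mathcal{V})$ implies the span property. If $\mathcal{W}$ contained an extra $w^{\otimes d}=\sum_j c_jK_j^{\otimes d}$ with $c_{j_0}\neq 0$, then a general point of $\mathcal{W}$, which is general in $\sigma_p(\mathcal{V})$, would also decompose over $\{w^{\otimes d}\}\cup\{K_j^{\otimes d}\}_{j\neq j_0}$. This route costs more than the trisecant lemma. It needs $d\ge 3$, subgeneric rank (roughly $p<m/n$ instead of $p<m-n$), and it excludes a short list of exceptional cases. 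What it buys is precisely what your Step 3 is missing. Your claim that any alternative decomposition of $\kappa_d(O_t)$ has its components in $\mathcal{W}\cap\mathcal{V}$ does not follow from the span property. A second $p$-term decomposition spans its own $(p-1)$-plane through $[\kappa_d(O_t)]$, and that plane need not be $\mathcal{W}$. You must either show that $\mathcal{W}$ is determined by the data (e.g.\ as the span of several observed tensors with the same components), or invoke identifiability of the point itself. The paper makes the same jump from the trisecant statement to ``unique decomposition''. Finally, the first half of the paper's proof derives the $p$-term form $\sum_j\kappa_d^{(j)}(\epsilon)\,(k_j)_{\mathcal{F}}^{\otimes d}$ from the MA$(\infty)$ representation. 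It uses the Fourier step to collapse the time-domain sum over lags $s$, whose rank-one terms involve the distinct vectors $FH_{t-s}^{(:,j)}$. Your Step 3 takes this form for granted.
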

\begin{proof}
In the classical linear source decomposition (LSD) setting, the $d$-th order cumulant of $X$ admits the following tensor decomposition: $\kappa_d(X) = \sum_{i=1}^{p} \kappa_d(s_i) \cdot (A_i)^{\otimes d}$ under the assumption that the components of $\epsilon$ are non-Gaussian with non-vanishing $d$-order cumulants, and that multiple interventions are available.  The sufficient $d$-order cumulant of each $Z_t$ for a fixed $t=t_i$ is
\begin{align*}
    \kappa_d (Z_t) =   \kappa_d [(I-\Phi)^{-1}\star \epsilon_t]= \kappa_d [\sum\limits_{k=1}^{t}H_{t-s}\epsilon_s]
\end{align*}
For each $s$, the linear transformation $H_{t-s}$  results in a multi-linear transformation of their cumulants
\begin{align*}
    \kappa_d(H_{t-s}\epsilon_k)= (H_{t-s})^{\otimes d}\mathcal{C}_{\epsilon_s}^{d}=\sum\limits_{i=1}^{p}\kappa_d(\epsilon_s^{i})(H_{t-s})_j^{\otimes d}
\end{align*}
The full $d$-order cumulant is
\begin{align}
    \kappa_d(O_t)
    &= \kappa_d(\underbrace{F Z_t, F Z_t, \ldots, F Z_t}_{d~\text{times}}) \notag\\
    & = F^{\otimes d} \cdot \kappa_d(\underbrace{Z_t, Z_t, \ldots, Z_t}_{d~\text{times}}) \\
    &= \underbrace{F \otimes F \otimes \cdots \otimes F}_{d~\text{times}}\cdot \kappa_d\left(\underbrace{\sum\limits_{s_1=1}^{t} H_{t-s_1} \epsilon_{s_1},  \sum\limits_{s_2=1}^{t} H_{t-s_2} \epsilon_{s_2}, \ldots, \sum\limits_{s_d=1}^{t} H_{t-s_d} \epsilon_{s_d}}_{d~\text{times}} \right) \notag\\
    &= \underbrace{F \otimes F \otimes \cdots \otimes F}_{d~\text{times}} \cdot \sum\limits_{s_1=1}^{t} \cdots \sum\limits_{s_d=1}^{t} \kappa_d\left( H_{t-s_1} \epsilon_{s_1}, H_{t-s_2} \epsilon_{s_2}, \ldots, H_{t-s_d} \epsilon_{s_d} \right) \notag\\
    &= \underbrace{F \otimes F \otimes \cdots \otimes F}_{d~\text{times}} \cdot \sum\limits_{s_1=1}^{t} \cdots \sum\limits_{s=1}^{t} \sum\limits_{j=1}^{p} \kappa_d\left(  H_{t-s}^{(:,j)} \epsilon_{s}^{(j)}, H_{t-s}^{(:,j)} \epsilon_{s}^{(j)}, \ldots, H_{t-s}^{(:,j)} \epsilon_{s}^{(j)} \right) \notag\\
     &= \underbrace{F \otimes F \otimes \cdots \otimes F}_{d~\text{times}} \cdot \left( \sum_{s=1}^{t} \sum_{j=1}^{p} \kappa_d^{(j)}(\epsilon) \cdot \underbrace{H_{t-s}^{(:,j)} \otimes H_{t-s}^{(:,j)} \otimes \cdots \otimes H_{t-s}^{(:,j)}}_{d~\text{times}} \right) \notag\\
     &=\left( \sum_{s=1}^{t} \sum_{j=1}^{p} \kappa_d^{(j)}(\epsilon)\cdot \underbrace{F \otimes F \otimes \cdots \otimes F}_{d~\text{times}} \cdot\left( H_{t-s}^{(:,j)} \right)^{\otimes d} \right) \notag\\
    &= \sum\limits_{s=1}^{t} \sum\limits_{j=1}^{p} \kappa_d^{(j)}(\epsilon) \cdot \left(F H_{t-s}^{(:,j)}\right)^{\otimes d}\label{sum_cumulant}
\end{align}
We drop the time index in $\kappa(\epsilon)$ since the noise is temporally uncorrelated (white noise), i.e.,
\[
\mathrm{Cov}(\epsilon_s, \epsilon_t) = \sigma^2 \delta_{s,t},
\]
Unlike in a time-free process, the joint cumulant of a time process is of order $d$ that is coupled with the number of time lags:
\begin{align}
\kappa_d(O_{t_1}, \dots, O_{t_d}) 
&= \sum_{s=1}^{\min(t_1, \dots, t_d) - 1} 
\sum_{j=1}^{p} 
 \kappa_d^{(j)}(\epsilon) \cdot 
\bigotimes_{\ell=1}^{d} \left( F H_{t_\ell - s}^{(:,j)} \right) \notag\\
&=
\sum_{j=1}^{p} 
 \sum_{s=1}^{\min(t_1, \dots, t_d) - 1} \kappa_d^{(j)}(\epsilon) \cdot 
\bigotimes_{\ell=1}^{d} \left( F H_{t_\ell - s}^{(:,j)} \right)\label{CP-decomposition}
\end{align}
We denote the Fourier transform of \( x(t) \) with respect to time \( t \) as \( \mathcal{F}[x](\omega) \). Using the convolution theorem and linearity of the Fourier transform, we have:
\begin{align}
 \mathcal{F}[ \kappa_d(O_t)](\omega)&=\mathcal{F}\left[ \sum\limits_{s\ge 0}^{t} \sum\limits_{j=1}^{p} \kappa_d^{(j)}(\epsilon) \cdot \left(F H_{t-s}^{(:,j)}\right)^{\otimes d}\right] \notag\\
 &=  \mathcal{F}\left[\int_0^{t} \sum_{j=1}^p \kappa_d^{(j)}(\epsilon) \cdot \left( F H(t-s)^{(:,j)} \right)^{\otimes d} \, ds\right] \notag\\
 & = \left( \sum_{j=1}^p \kappa_d^{(j)}(\epsilon) \cdot \mathcal{F}\left[ \left( F H^{(:,j)} \right)^{\otimes d} \star \mathcal{U}(t-s)\right]\right)
\notag\\
& = \left( \sum_{j=1}^p \kappa_d^{(j)}(\epsilon) \cdot \mathcal{F}\left[ \left( F H^{(:,j)} \right)^{\otimes d}\right](\omega) \cdot \left( \pi \delta(\omega) + \frac{1}{i \omega} \right)\right )
 \label{convolution_cumulant}
\end{align}
Hereafter, we applied $\mathcal{U}$ to induce a generalized convolution integral.
Since $\delta(\omega)$ vanishes everywhere except at $ \omega=0$, multiplying it by $\omega$ gives zero, Eq.~\eqref{convolution_cumulant} yields
\begin{align}
     i\omega \mathcal{F}[\kappa_d(O_t)](\omega) 
&= \left( i\omega\sum_{j=1}^p \kappa_d^{(j)}(\epsilon) \cdot \mathcal{F}\left[ \left( F H^{(:,j)} \right)^{\otimes d} \right](\omega) \cdot \left( \pi \delta(\omega) + \frac{1}{i \omega} \right)\right) \notag\\
&= \left( \sum_{j=1}^p \kappa_d^{(j)}(\epsilon) \cdot \mathcal{F}\left[ \left( F H^{(:,j)} \right)^{\otimes d} \right](\omega) \cdot i\omega\cdot \left( \pi \delta(\omega) + \frac{1}{i \omega} \right)\right).\notag
\end{align}
which reads
\begin{align}
    \left( \sum_{j=1}^p \kappa_d^{(j)}(\epsilon) \cdot \mathcal{F}\left[ \left( F H^{(:,j)} \right)^{\otimes d} \right](\omega) \cdot i\omega\cdot \left( \pi \delta(\omega) + \frac{1}{i \omega} \right)\right)=\left( \sum_{j=1}^p \kappa_d^{(j)}(\epsilon) \cdot \mathcal{F}\left[ \left( F H^{(:,j)} \right)^{\otimes d} \right](\omega)\cdot 1\right)\label{conv_decomp}
\end{align}
Writing $FH$ in terms of $K$ as a convention obtains:
\begin{align*}
    \kappa_{O_t,\mathcal{F}} = \left( \sum_{j=1}^p \kappa_d^{(j)}(\epsilon) \cdot  \left( k_j \right)_{\mathcal{F}}^{\otimes d}\right)
\end{align*}

$\left( k_j \right)_{\mathcal{F}}^{\otimes d}$ is an order $d$, rank $1$ tensor and thus can be represented as the space of $V\otimes\cdots\otimes V$. 
In projective space $\mathbb{P}^n$, each point represents a line going through the origin, and all points lose one dimension up to multiples. Therefore, the variety $\mathcal{V}$ has projective dimension $n-1$, embedded in an ambient space of projective dimension $m-1$.   The linear subspace $\mathcal{W}$ is an element of the Grassmannian $\mathrm{Gr}(p-1,n-1)$ and is sufficiently generic.  By Proposition 2.6 of~\citep{chiantini2002weakly}, for a sufficiently generic, reduced, irreducible variety $\mathcal{V}$ in $\mathbb{P}^{m-1}$, if there exists an integer such that $k+(n-1) < m-1$, then the intersections of $\mathcal{V}$ with the generic linear subspace $\mathcal{W}$ are the union of the generic points $\langle P_0, P_1,\cdots,P_k\rangle$. We can prove this by choosing $k=p$.  This condition can be much more easily satisfied if we have a higher order $d\ge 2$ due to the combinatorial nature of $m= \binom{n+d-1}{d}$.

Consequently, by assuming non-Gaussianity in $\epsilon_t$, for each $j$, Eq.~\eqref{conv_decomp}, hence $i\omega \mathcal{F}[\kappa_d(O_t)](\omega) $  has a unique decomposition of the summation of a rank-1 tensor(matrix if $d=2$). Therefore, each column of the sub-linear mixing transferring matrix \(\mathcal{F}\left[ \left( F H^{(:,j)} \right) \right]\) is theoretically recovered up to a scaling and permutation \( \pi \) if all assumptions made are satisfied for $\Phi$. This indicates that, regardless of whether the decomposition must be explicitly calculated, such uniqueness provides a foundation for further disentanglement. 
\end{proof}
Hereafter, \(\mathcal{F}\left[ \left( F H^{(:,j)} \right)\right]DP\) is available; we thus obtain the unique indeterminacy as an immediate result of Lemma.~\ref{GL_lemma}.  

\begin{alemma}[Permutation and phase indeterminacy via linear group]\label{GL_lemma}
Let $\F$ be an algebraically closed field. Consider the kernel mixing matrix $F H_\tau$ associated with the time-domain factor $H_\tau$, and let $\hat F \hat H_\tau$ denote an estimated factor obtained from the frequency-domain PSD stacking and Wilson decomposition. Then there exists a diagonal phase matrix $D_\tau$ and a permutation matrix $P_\tau$ such that
\begin{align*}
F H_\tau = \hat F \hat H_\tau \, D_\tau P_\tau,
\end{align*}
where $(D_\tau, P_\tau) \in \GL_1(\F)^p \times S_p$ correspond to the indeterminacy inherited from the frequency-domain factorization.
\end{alemma}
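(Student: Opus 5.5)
The plan is to read the statement as a uniqueness-of-CP-decomposition claim, lifted from a single frequency to the whole family of lags $\tau$. I proceed in three steps.

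\textbf{Step 1: uniqueness at each frequency.} By the proof of Lemma~\ref{intersect}, for each frequency $\omega$ the transformed cumulant $i\omega\,\mathcal{F}[\kappa_d(O_t)](\omega)$ equals $\sum_{j=1}^p \kappa_d^{(j)}(\epsilon)\,(k_j)_{\mathcal{F}}^{\otimes d}$. Under condition~(\ref{cond1}) of Assumption~\ref{Linear_idf_Assmption}, the Kruskal-rank bound of Corollary~\ref{col:cumulant_cp_unique} holds generically. So for any other factorization $\hat F\hat H$ producing the same cumulant, there exist a permutation $\sigma_\omega\in S_p$ and nonzero scalars $c_j(\omega)\in\F^\times$ with
\[
(FH)_{\mathcal{F}}^{(:,j)}(\omega)=c_j(\omega)\,(\hat F\hat H)_{\mathcal{F}}^{(:,\sigma_\omega(j))}(\omega).
\]
In matrix form this reads $(FH)_{\mathcal{F}}(\omega)=(\hat F\hat H)_{\mathcal{F}}(\omega)\,D(\omega)P(\omega)$. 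Working over the algebraically closed field $\F$, each $d$-th root needed to absorb the scaling $\kappa_d^{(j)}(\epsilon)\,c_j^d$ exists. Hence $D(\omega)\in\GL_1(\F)^p$, and the pair $(D(\omega),P(\omega))$ is an element of the monomial group $\GL_1(\F)^p\rtimes S_p$. This is the group of linear automorphisms of $\F^p$ that preserve the set of coordinate lines, i.e.\ the stabilizer of the rank-one decomposition.

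\textbf{Step 2: removing the frequency dependence of the permutation.} Because the $p$ CP components are pairwise non-collinear (krank $\ge2$), the matching of components is locally constant in $\omega$. The factors are continuous, or at least meromorphic, in $\omega$ by the square-integrability in Assumption~\ref{Linear_idf_Assmption_2}. Distinct points on the Veronese variety cannot exchange labels without colliding, so $P(\omega)$ is constant on the connected set of frequencies where the decomposition is generic. I would argue that this set is the complement of a proper Zariski-closed set, hence dense and connected. The remaining scalar functions $c_j(\omega)$ are then exactly the phase freedom of the Wilson spectral factorization $S=G^{-1}\Sigma G^{-H}$ from \eqref{module_3}. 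Wilson factorization is unique up to a unitary diagonal, i.e.\ phase, factor once minimum-phase is imposed, so any residual ambiguity is diagonal.

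\textbf{Step 3: back to the time domain, lag by lag.} I then apply the inverse Fourier transform to recover the time-domain factor at each lag $\tau$. Since $P$ is constant, $FH_\tau=\hat F\hat H_\tau D_\tau P$. Here $D_\tau$ is diagonal and collects the per-lag contribution of the phase factor; I write $P_\tau:=P$ to match the statement.

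I expect the main obstacle to be Step~2, the passage from pointwise to global indeterminacy, and in particular ensuring that the diagonal factor does not mix lags. Multiplying by a frequency-dependent diagonal $D(\omega)$ becomes a convolution in time, not a per-lag diagonal. I would first try to control this via minimum-phase uniqueness: if both $FH$ and $\hat F\hat H$ are causal and minimum-phase, $D(\omega)$ is forced to be an all-pass inner function that is also outer, hence constant. Then $D_\tau=D$ is independent of $\tau$, which trivially satisfies the statement.
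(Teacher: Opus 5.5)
Your proposal is correct in substance. It argues differently from the paper, and more carefully.

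\textbf{What the paper does.} The paper takes the per-frequency indeterminacy of the Wilson factor as given: ``by construction'' it is a column scaling $D_\omega$ and a permutation. It then simply asserts that the inverse Fourier transform carries $D_\omega$ to a diagonal $D_\tau$ lag by lag, with the permutation unchanged. It never confronts the point you raise: multiplication by a frequency-dependent diagonal $D(\omega)$ becomes a column-wise convolution in time. So that inference is justified only when $D_\omega$ does not depend on $\omega$. Your Step~3, as first phrased, repeats the same inference; it is your final paragraph that makes it legitimate.

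\textbf{What your route adds.} You supply both missing ingredients.
\begin{itemize}
\item The monomial structure at each frequency comes from CP/Kruskal uniqueness of the cumulant. Wilson factorization alone pins the factor down only up to a constant unitary, not a diagonal.
\item Minimum-phase (inner--outer) uniqueness then forces the ambiguity to be frequency-independent.
\end{itemize}
This yields the stronger conclusion $D_\tau\equiv D$, $P_\tau\equiv P$. That is exactly what the paper uses right after the lemma (``recovered up to the same permutation and scaling for any $\tau$''), but its own argument does not deliver it.

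\textbf{What you must state explicitly.} The price is making explicit what the paper leaves implicit.
\begin{itemize}
\item The true factor $FH$ must be outer. For full-column-rank $F$ it has the analytic left inverse $(\mathbb{I}-\Phi(z))F^{\dagger}$. This follows from causality and the stability condition $\det(\mathbb{I}-\Phi(z))\neq 0$ on the closed unit disk.
\item The second-order spectra of the two models must agree, with constant diagonal noise covariances $\Sigma,\hat\Sigma$. Then $M(z):=(\hat F\hat H)^{\dagger}(z)\,FH(z)$, with $\dagger$ an analytic left inverse, satisfies $M\Sigma M^{H}=\hat\Sigma$ on the circle. That identity is where the unimodularity of your $c_j(\omega)$ actually comes from.
\end{itemize}

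\textbf{Repairs needed in Step~2.} Two details should be fixed.
\begin{itemize}
\item You claim the generic frequencies form the complement of a proper Zariski-closed set, ``hence connected.'' This fails on the real frequency axis or the unit circle, where deleting finitely many points disconnects it. You can drop that continuity argument entirely. Apply the inner--outer argument to the whole matrix $M(z)$ on the open disk, which is connected. This shows $\hat\Sigma^{-1/2}M\Sigma^{1/2}$ is a constant unitary. A constant matrix that is monomial at every boundary frequency has a constant permutation as well as a constant diagonal.
\item The regularity you need does not follow from square-integrability of the kernel; an $L^2$ kernel only has an $L^2$ transform. It comes from causality, integrability of $\Phi$, and the spectral-radius condition. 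These give analyticity in the disk, continuity up to the boundary, and the boundedness of $M$ and $M^{-1}$ required for the maximum-modulus step.
\end{itemize}
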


\begin{proof}
Consider the frequency-domain factor $\tilde H_\omega$ obtained from Wilson decomposition, which satisfies
\[
 \tilde H_\omega =  H_\omega D_{\omega}P.
\]
By construction, $\tilde H_\omega$ is only identifiable up to a column-wise invertible scaling $D_\omega \in \GL_1(\F)^p$ and a permutation $P_\omega \in S_p$. This indeterminacy corresponds exactly to the group of invertible linear transformations $\GL(V)$ acting on each column of the kernel mixing matrix, where $V$ is the $p$-dimensional vector space over $\F$. Applying the inverse Fourier transform (frequency → time) preserves this linear indeterminacy: the phase rotations $D_\omega$ in the frequency domain map to diagonal phase matrices $D_\tau$ in the time domain, and the column permutations are unchanged. Therefore, the time-domain kernel matrix satisfies
\[
F H_\tau = \hat F \hat H_\tau \, D_\tau P_\tau,
\]
establishing that the entire permutation and phase indeterminacy from the frequency domain factorization is preserved in the time domain.
\end{proof}

Using~\autoref{GL_lemma}, the generic points $\left( k_j \right)_{\mathcal{F}}^{\otimes d}(w)$ with a generic projective linear span indicate generic points $\left( k_j \right)^{\otimes d}(\tau)$. As a result, the original kernel mixing matrix $FH_{\tau}$ is recovered up to the same permutation and scaling for any $\tau$. In what follows, the claim to be established is the recovery of the causal structure as well as its full parameter space. Our proof focuses on the polynomial system and its associated ideal $\mathcal{I}$, generated by the multilinear constraints. 

\subsubsection{Proof of \autoref{linear_idf}}

What remains to be proved is to show that,  in ~\autoref{linear_idf} , condition~(\ref{cond2}) ensures condition~(\ref{cond3}), which guarantees the full identifiability of the model $\Theta:=(F,\Phi, U)$. To study the geometry of the parameter space of the latent model, we endow a topological ordering to $\Phi$.

\begin{alemma}
    Given a bipartite graph of the proposed INAR($\infty$) structure, it admits a \emph{kernel DAG}, denoted by $\mathscr{G}$, corresponding to a matrix $\mathcal{M}_{\mathscr{G}} \in \mathbb{R}^{2p \times 2p}$ such that $\mathbb{I}_{2p} - \mathcal{M}_{\mathscr{G}}$ is invertible.  
Consequently, its inverse can be expressed as a finite order $k$ expansion of $\mathcal{M}_{\mathscr{G}}$,
\[
(\mathbb{I}_{2p}-\mathcal{M}_{\mathscr{G}})^{-1} = \sum_{i=0}^{k} \mathcal{M}_{\mathscr{G}}^i, \quad k \le p,\quad \mathcal{M}_{\mathscr{G}} = \begin{bmatrix}
\mathcal{M}_{\mathscr{G}}[U] & \Phi\\
0 &\mathcal{M}_{\mathscr{G}}[V] 
\end{bmatrix} 
\]
where $k$ corresponds to the length of the longest path in the DAG and $\mathcal{M}_{\mathscr{G}}[U] = \mathcal{M}_{\mathscr{G}}[V]= \mathbf{0}_{p\times p}$.
\end{alemma}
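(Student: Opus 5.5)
The statement concerns a block matrix whose diagonal blocks are zero, so the approach is to treat $\mathcal{M}_{\mathscr{G}}$ as the weighted adjacency matrix of a bipartite digraph and read every claim off nilpotency.

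\textbf{Construction.} Take the bipartite graph of the INAR($\infty$) structure with two copies of the $p$ latent coordinates:
\begin{itemize}
\item a \emph{source layer} $V$, the lagged or driving copies $Z_{t-s}$ (equivalently the noise and baseline input $R_t$);
\item a \emph{target layer} $U$, the current copies $Z_t$.
\end{itemize}
Every edge $j\to i$ with kernel weight $\phi_{i\leftarrow j}$ (or its Fourier or integrated value, as fixed in the preceding lemmas) runs from $V$ to $U$, never within a layer. Ordering the $2p$ vertices as $(U,V)$ gives adjacency matrix
\[
\begin{bmatrix} \mathbf{0} & \Phi \\ \mathbf{0} & \mathbf{0}\end{bmatrix}.
\]
Since all arcs go from one layer to the other, the graph has no directed cycle. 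So $\mathscr{G}$ is a DAG, and the ordering ``$V$ before $U$'' is a topological order in which $\mathcal{M}_{\mathscr{G}}$ is strictly block upper triangular.

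\textbf{Nilpotency and invertibility.} Direct multiplication gives $\mathcal{M}_{\mathscr{G}}^2=0$, because the product of the off-diagonal block with the zero diagonal blocks vanishes. More generally, for any DAG whose adjacency matrix is strictly upper triangular in a topological order, $\mathcal{M}^{i}_{ab}$ sums weights of directed walks of length $i$. Such walks do not exist once $i$ exceeds the longest path length $k$, so $\mathcal{M}^{k+1}=0$. For a nilpotent matrix, $(\mathbb{I}-\mathcal{M})\sum_{i=0}^{k}\mathcal{M}^i=\mathbb{I}-\mathcal{M}^{k+1}=\mathbb{I}$, and similarly on the left. Hence $\mathbb{I}_{2p}-\mathcal{M}_{\mathscr{G}}$ is invertible with the stated finite Neumann expansion.

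\textbf{Bound on $k$ and the main obstacle.} In the bipartite case $k=1\le p$, and any path has at most $2p-1$ edges. The main obstacle is reconciling the bound $k\le p$ with the intended reading of $\Phi$. If the authors intend the latent kernel graph itself, with within-layer edges allowed, then $k\le p$ requires $\Phi$ to be a DAG on $p$ nodes, where the longest path has at most $p-1$ edges. I would state that acyclicity of the kernel DAG explicitly, unfolding the time lags into the $V$ layer so that self-excitation $\phi_{i\leftarrow i}$ becomes an edge across layers rather than a loop. The expansion then reads $(\mathbb{I}_{2p}-\mathcal{M}_{\mathscr{G}})^{-1}=\begin{bmatrix}\mathbb{I} & \Phi\\ 0 & \mathbb{I}\end{bmatrix}$, which exposes $\Phi$ as a block of a polynomial (degree-one) expression in the parameters. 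This is what feeds the ideal in condition~(\ref{cond3}).
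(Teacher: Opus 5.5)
Your proposal is correct and takes essentially the paper's route: block strict triangularity gives nilpotency, then $\mathcal{M}_{\mathscr{G}}^{k+1}=0$ for $k$ the longest path length, and the telescoping finite Neumann series inverts $\mathbb{I}_{2p}-\mathcal{M}_{\mathscr{G}}$. Your direct check that $\mathcal{M}_{\mathscr{G}}^2=0$ (so $k=1\le p$ and $(\mathbb{I}_{2p}-\mathcal{M}_{\mathscr{G}})^{-1}=\mathbb{I}_{2p}+\mathcal{M}_{\mathscr{G}}$, the identity the paper later uses) sharpens the paper's argument, which is written for strictly triangular within-layer blocks and never verifies $k\le p$; the one cosmetic slip is that your displayed block form is strictly upper triangular in the order $(U,V)$, whereas in the topological order ``$V$ before $U$'' that you name it becomes strictly lower triangular, which is harmless since nilpotency does not depend on which triangle the nonzero block occupies.
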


\begin{proof}
Let $\mathcal{M}_{\mathcal{G}} \in \mathbb{R}^{2p \times 2p}$ be the kernel matrix associated with the bipartite graph, where the variables are partitioned into two subsets $U$ and $V$. Consider a topological ordering where all nodes in $U$ precede those in $V$. Since edges from $V$ to $U$ are forbidden by the bipartite structure, no element in the lower-left block of $\mathcal{M}$ can be nonzero. Moreover, edges within $U$ are only topologically ordered, so the off-diagonal and upper-right block corresponding to $U$ have zeros on the diagonal. The edges within $V$ form a DAG, and under a topological ordering of $V$, the corresponding block in $\mathcal{M}$ is strictly upper-triangular. Therefore, $\mathcal{M}$ as a whole is strictly upper-triangular, which implies that it represents a DAG.

Because $\mathcal{M}_{\mathcal{G}}$ is strictly upper-triangular, it is nilpotent. Let $k$ denote the length of the longest directed path in the DAG. Then $\mathcal{M}^{k+1}_{\mathcal{G}} = 0$, and the inverse of $\mathbb{I}-\mathcal{M}_{\mathcal{G}_K}$ can be expressed as a finite sum of powers of $\mathcal{M}$:
\[
(\mathbb{I} - \mathcal{M}_{\mathcal{G}})^{-1} = \sum_{i=0}^{k} \mathcal{M}_{\mathcal{G}}^i.
\]
Each term $\mathcal{M}^i_{\mathcal{G}}$ corresponds to contributions from paths of length $i$ in the DAG. This shows that the inverse is fully determined by path products up to the longest path length $k$, completing the proof.
\end{proof}
\begin{example}
Consider a kernel matrix $\Phi \in \mathbb{R}^{2\times 2}$ with internal arrows allowed:
\[
\Psi_U = 
\begin{bmatrix}
0 & \psi_{12} \\
0 & 0
\end{bmatrix}, 
\quad
\Phi = 
\begin{bmatrix}
\phi_{11} & \phi_{12} \\
\phi_{21} & \phi_{22}
\end{bmatrix},
\Psi_V = 
\begin{bmatrix}
0 & \psi_{34}' \\
0 & 0
\end{bmatrix}
\]
where $\Psi_U,\Psi_V$ encodes the internal arrows of the sub-graph $\mathscr{G}_U$ and $\mathscr{G}_V$; $\Phi$ encodes time-delayed kernel effects from $s$ to $t$.

The corresponding expanded kernel matrix $\mathcal{M} \in \mathbb{R}^{4 \times 4}$ is
\[
\mathcal{M} =
\begin{bmatrix}
0 & \psi_{12} & \phi_{11} & \phi_{12} \\
0 & 0        & \phi_{21} & \phi_{22} \\
0 & 0        & 0         & \psi'_{34} \\
0 & 0        & 0         & 0
\end{bmatrix}.
\]
\end{example}
\begin{figure}[t]
\centering
\begin{subfigure}[t]{0.3\textwidth}
    \centering
    % --- 第一个tikz ---
   \begin{tikzpicture}[scale= 0.7,
    node distance=0.5cm and 1.2cm,
    var/.style={circle, draw, thick, minimum size=6mm, font=\small},
    topo/.style={->, dashed, semithick, gray},
    kernel/.style={->, thick, purple}
]

% 左列 (t 变量)
\node[var, fill=blue!20] (u_1) {$u_1$};
\node[var, below=of u_1, fill=blue!20] (u_2) {$u_2$};
\node[var, below=of u_2, fill=blue!20] (u_3) {$u_3$};

% 右列 (s 变量)
\node[var, right=of u_1, fill=orange!20] (v_3) {$v_3$};
\node[var, below=of v_3, fill=orange!20] (v_2) {$v_2$};
\node[var, below=of v_2, fill=orange!20] (v_1) {$v_1$};

% kernel edges (跨列延迟连接)
\draw[kernel] (u_1) -- (v_2);
\draw[kernel] (u_2) -- (v_3);
\draw[kernel] (u_3) -- (v_2);
\draw[kernel] (u_1) -- (v_3);
\draw[kernel] (u_2) -- (v_2);
\draw[kernel] (u_3) -- (v_1);
% 左列拓扑顺序（灰色虚线箭头）
\draw[topo] (u_1) -- (u_2);
\draw[topo] (u_2) -- (u_3);

% 右列拓扑顺序（灰色虚线箭头）
\draw[topo] (v_1) -- (v_2);
\draw[topo] (v_2) -- (v_3);

\end{tikzpicture}
    \caption{}
\end{subfigure}%
\hfill
\begin{subfigure}[t]{0.3\textwidth}
    \centering
    % --- 第二个tikz ---
    \begin{tikzpicture}[scale=0.7,
    node distance=0.5cm and 1.2cm,
    var/.style={circle, draw, thick, minimum size=6mm, font=\small},
    topo/.style={->, dashed, semithick, gray},
    kernel/.style={->, thick, purple}
]

% 左列 (t 变量)
\node[var, fill=blue!20] (u_1) {$u_1$};
\node[var, below=of u_1, fill=blue!20] (u_2) {$u_2$};
\node[var, below=of u_2, fill=blue!20] (u_3) {$u_3$};

% 右列 (s 变量)
\node[var, right=of u_1, fill=orange!20] (v_3) {$v_3$};
\node[var, below=of v_3, fill=orange!20] (v_2) {$v_2$};
\node[var, below=of v_2, fill=orange!20] (v_1) {$v_1$};

% kernel edges (跨列延迟连接)
\draw[kernel] (u_1) -- (v_2);
\draw[kernel] (u_2) -- (v_3);
\draw[kernel] (u_3) -- (v_2);
\draw[kernel] (u_1) -- (v_3);
\draw[kernel] (u_2) -- (v_2);
\draw[kernel] (u_3) -- (v_1);
% 左列拓扑顺序（灰色虚线箭头）
\draw[topo] (u_1) -- (u_2);
\draw[kernel] (u_2) -- (u_1);
\draw[topo] (u_2) -- (u_3);

% 右列拓扑顺序（灰色虚线箭头）
\draw[topo] (v_1) -- (v_2);
\draw[kernel] (v_3) -- (v_2);

\end{tikzpicture}
    \caption{}
\end{subfigure}
\caption{Figure(a) is a kernel-delayed DAG $\mathscr{G}$ compatible with $\mathcal{M}_{\mathscr{G}}$; the system $\mathcal{M}_{\mathscr{G}}$ is a strict upper-triangular matrix.~Figure (b) violates the topological order with additional edges $\{u_2\rightarrow u_1, v_3 \rightarrow v_2\}$.}
\label{GraphM}
\end{figure}
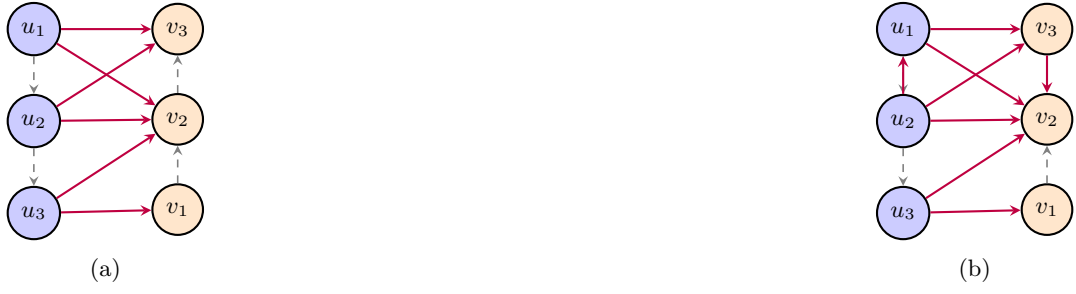

The polynomial system in condition~(\ref{cond3}) has a geometric equivalence representation. That is, condition~(\ref{cond3}) immediately indicates~\autoref{equivalentG}.

\begin{acorollary}\label{equivalentG}
    The ideal $I:\langle K_{\mathscr{G}}(\mathbb{I}_{2p}-M_{\mathscr{G}})-F_{\mathscr{G}} \rangle$ is such that the space $\Theta_{\mathscr{G}}:=(F_{\mathscr{G}},M_{\mathscr{G}})$ is zero-dimensional.
\end{acorollary}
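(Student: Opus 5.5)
The plan is to show that the embedded system is obtained from the original system in condition~(\ref{cond3}) of \autoref{Linear_idf_Assmption} by a change of variables that is algebraic and injective, so it is linear in the original indeterminates. Zero-dimensionality then transfers directly.

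First, I will write out the embedding explicitly. Following the preceding lemma, set
\[
M_{\mathscr{G}}=\begin{bmatrix}0&\Phi\\0&0\end{bmatrix}, \qquad F_{\mathscr{G}}=\begin{bmatrix}F & F\end{bmatrix} \ \text{(or } F \text{ padded by zero columns)}.
\]
Because $M_{\mathscr{G}}$ is nilpotent, $(\mathbb{I}_{2p}-M_{\mathscr{G}})^{-1}=\mathbb{I}_{2p}+M_{\mathscr{G}}$. Hence
\[
K_{\mathscr{G}}=F_{\mathscr{G}}(\mathbb{I}_{2p}+M_{\mathscr{G}})
\]
is a polynomial in $(F,\Phi)$; no rational functions appear. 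Writing the generators $K_{\mathscr{G}}(\mathbb{I}_{2p}-M_{\mathscr{G}})-F_{\mathscr{G}}$ block by block, the diagonal blocks reproduce the $p\times p$ generators $K^{(k)}(\mathbb{I}_p-\Phi^{(k)})-F$. The off-diagonal blocks are linear combinations of these same generators together with the structural zeros imposed by the bipartite ordering.

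Second, I will define the map $\iota:(F,\Phi)\mapsto(F_{\mathscr{G}},M_{\mathscr{G}})$. This map is linear and injective, and its image is cut out by linear equations: the zero blocks $\mathcal{M}_{\mathscr{G}}[U]=\mathcal{M}_{\mathscr{G}}[V]=0$, the vanishing lower-left block, and the block-repetition constraints on $F_{\mathscr{G}}$. So the ideal $I$, restricted to the linear subspace carrying these constraints, equals the extension of $\mathcal{I}$ under the isomorphism $\iota$. It follows that
\[
V(I)\cap \operatorname{im}\iota \;=\; \iota\big(V(\mathcal{I})\big),
\]
and since $\iota$ is an isomorphism onto its image, it preserves Krull dimension. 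Condition~(\ref{cond3}) gives $\dim V(\mathcal{I})=0$, so $\Theta_{\mathscr{G}}$ is zero-dimensional.

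The main obstacle is the off-diagonal generators. I must check that they add no component beyond the linear constraints, that is, that the zero blocks of $M_{\mathscr{G}}$ are imposed as part of the parameter space and not left as free indeterminates. If they are left free, there is a possible positive-dimensional family: any $\mathcal{M}_{\mathscr{G}}[U]$ that is annihilated by $K_{\mathscr{G}}$. I would handle this by using the genericity of $F$ from condition~(\ref{cond1}). Full column rank of $F$ forces $K_{\mathscr{G}}$ to be injective on the relevant block, so any perturbation of the zero blocks must itself be zero. This rules out extra solutions, so the intersection really is the whole variety $V(I)$.
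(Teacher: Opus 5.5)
Your strategy of transporting zero-dimensionality from condition~(\ref{cond3}) through the embedding is the route the paper intends; it gives no argument beyond saying condition~(\ref{cond3}) ``immediately indicates'' the corollary. But the step that is supposed to make the transport work is false. Take the paper's $F_{\mathscr{G}}=\mathrm{diag}(F,F)$ and $M_{\mathscr{G}}=\begin{bmatrix}0&\Phi\\0&0\end{bmatrix}$. Then $K_{\mathscr{G}}=\begin{bmatrix}F&F\Phi\\0&F\end{bmatrix}$, and for unknowns $(F',\Phi')$ the four blocks of $K_{\mathscr{G}}(\mathbb{I}_{2p}-M')-F'_{\mathscr{G}}$ are $K_{11}-F'$, $K_{12}-K_{11}\Phi'$, $K_{21}$, and $K_{22}-K_{21}\Phi'-F'$.

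None of these is $K^{(k)}(\mathbb{I}_p-\Phi^{(k)})-F$ with $K^{(k)}=F(\mathbb{I}_p-\Phi^{(k)})^{-1}$. Your own remark that ``no rational functions appear'' is the symptom. The matrix $\mathbb{I}+M_{\mathscr{G}}$ is the Neumann series of $(\mathbb{I}_p-\Phi)^{-1}$ truncated after one term, so the data entering $I$ are $F$ and $F\Phi$, not the cumulant-recovered $K$. No choice of data repairs this:
\begin{itemize}
\item To make the $(2,2)$ block read $K-K\Phi'-F'$ you need $K_{21}=K_{22}=K$, but then the $(2,1)$ generator is the constant $K$, which must vanish.
\item With your $F_{\mathscr{G}}=[F\;F]$, you need $K_1=K_2=K$, and then the generators force $F'=K$ and $K\Phi'=0$. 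That is the point $(K,0)$, not $(F,\Phi)$.
\end{itemize}
Moreover, condition~(\ref{cond3}) concerns $p$ environments sharing one $F$, while $I$ contains a single $M_{\mathscr{G}}$. A single-environment system $K(\mathbb{I}_p-\Phi')-F'$ is $p^2$-dimensional (any $\Phi'$ works, with $F'=K(\mathbb{I}_p-\Phi')$), so even a literal block match could not inherit zero-dimensionality. Hence $V(I)\cap\operatorname{im}\iota=\iota(V(\mathcal{I}))$ has no basis: $\iota$ relates two different polynomial systems over different data.

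Your fallback for free zero blocks also fails. Suppose the diagonal blocks of $M_{\mathscr{G}}$ are indeterminates. Then for every $C\in\mathbb{C}^{p\times p}$ the point $F'=FC$, $M'=\begin{bmatrix}\mathbb{I}_p-C&\Phi C\\0&\mathbb{I}_p-C\end{bmatrix}$ satisfies $K_{\mathscr{G}}(\mathbb{I}_{2p}-M')=\mathrm{diag}(FC,FC)=F'_{\mathscr{G}}$. This is a $p^2$-dimensional family no matter how generic $F$ is. Injectivity of $K_{\mathscr{G}}$ determines $M$ only once $F_{\mathscr{G}}$ is fixed, and moving $F$ within its column span is absorbed by the diagonal blocks. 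With your $[F\;F]$ choice, $K_{\mathscr{G}}$ has rank at most $p<2p$ and is not injective at all. So the structural zeros must be imposed as equations; genericity cannot recover them.

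Once they are imposed, and $K_{\mathscr{G}}:=F_{\mathscr{G}}(\mathbb{I}_{2p}-M_{\mathscr{G}})^{-1}$ is taken as the data, the corollary follows directly without condition~(\ref{cond3}). The generators give $F'=K_{11}=F$ and $F\Phi'=K_{12}=F\Phi$. Full column rank of $F$ (condition~(\ref{cond1}) with $n\ge p$) then leaves the single point $(F,\Phi)$. That is an argument you can actually give; the claimed isomorphism with the condition-(\ref{cond3}) system is not.
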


By \autoref{equivalentG}, identifying the space of all parameters in $F$ and $\Phi$ is equivalent to solving the following ideal $\mathcal{I}:\langle F_{\mathscr{G}} - F_{\mathscr{G}}(\mathbb{I}_{2p}-\mathcal{M}_{\mathscr{G}})^{-1}(\mathbb{I}_{2p}-\mathcal{M}_{\mathscr{G}})\rangle$.  In latent causal models, $F_{\mathscr{G}},\mathcal{M}_{\mathscr{G}}$ are filled as indeterminates that need to be recovered, where $F_{\mathscr{G}}$ is the expanded linear mixing obtained by filling $F \in \mathbb{R}^{n\times p}$ in a larger block-diagonal matrix of size $2n\times 2p$, denoted by $F_{\mathscr{G}}$. 

Remember, we have  $F(\mathbb{I}-\Phi(\tau))^{-1}$ to be unique due to decomposition up to a scaling and permutation. We write $H_{\mathscr{G}}=(\mathbb{I}_{2p}-\mathcal{M}_{\mathscr{G}})^{-1}$. Under INAR, the diagonal blocks of $ H_{\mathscr{G}}$ are identity matrix of size $p\times p$. Then
\begin{align*}
F_{\mathscr{G}}(\mathbb{I}_{2p}-\mathcal{M}_{\mathscr{G}_K})^{-1}&=\begin{bmatrix}
A & A'\\
F & F'
\end{bmatrix}\begin{bmatrix}
\mathbb{I}_{2p} & H\\
\mathbf{0}& \mathbb{I}_{2p}
\end{bmatrix}\\
&=\begin{bmatrix}
A+ A'\mathbf{0} & AH+ A'\\
A + F'\mathbf{0}&  AH+F'
\end{bmatrix}\\
&= K_{\mathscr{G}}.
\end{align*}

Therefore, we have the ideal $\mathcal{I}:\langle F_{\mathscr{G}} - K_{\mathscr{G}}(\mathbb{I}_{2p}-\mathcal{M}_{\mathscr{G}})\rangle$ where $K_{\mathscr{G}}$ is known because $FH$ is known, by~\autoref{intersect}, so they are no longer considered as indeterminate and thus do not contribute dimension of $\mathcal{I}$.  Clearly, under passively observational settings, recovery of full models is never possible as the current $\mathcal{I}$ must be positive dimensional, leading to no fixed points defined in the associated variety $\mathcal{\mathcal{V}}$.  This leads to a central goal of identifying the number of contexts that indicate sufficient variability or interventional settings, thereby recovering the parameter space. To this end, we need to first discuss important properties of $
F_{\mathscr{G }}$.
\begin{alemma}\label{genericF}
$F \in \mathbb{R}^{n \times p}$ is a generic full-rank matrix. Then $F_{\mathscr{G}}$ is of full rank with \(
\operatorname{rank}(F_{\mathscr{G}}) = 2 \cdot \operatorname{rank}(F) = 2 \min(n,p),
\) and it is \emph{not} generic in an open dense subset of $\mathbb{R}^{2n \times 2p}$ due to the additional linear constraints imposed by the block-diagonal structure. Consequently, $F_{\mathscr{G}}$ belongs to a proper linear subvariety of $\mathbb{R}^{2n \times 2p}$ defined by
\[
\mathcal{V}_{I_{\star}} := \Big\{
B \in \mathbb{R}^{2n \times 2p} : 
B = \begin{pmatrix} F' & 0 \\ 0 & F' \end{pmatrix},\ F' \in \mathbb{R}^{n \times p}
\Big\}.
\]
Therefore, for any square matrix $A^{2p}$ with $\operatorname{rank}(A)\le r$, $\operatorname{rank}(F_{\mathscr{G}}A)\le r$
\end{alemma}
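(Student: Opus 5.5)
The plan is to argue directly from the block structure of $F_{\mathscr{G}}=\operatorname{diag}(F,F)$, taking its three claims in turn.

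\textbf{Rank.} For a block-diagonal matrix, the column space is the direct sum of the column spaces of the two blocks. Hence $\operatorname{rank}(F_{\mathscr{G}})=\operatorname{rank}(F)+\operatorname{rank}(F)$. By the genericity convention fixed in Appendix~\ref{geometrynotation}, a generic $F$ avoids the vanishing locus of every maximal minor. Therefore $\operatorname{rank}(F)=\min(n,p)$, and so $\operatorname{rank}(F_{\mathscr{G}})=2\min(n,p)$. This is full rank in the sense that it is the largest rank attainable inside the block-diagonal family. I would note that when $n\neq p$ this is smaller than $\min(2n,2p)$ only if the ambient count is read block-wise, so the statement should be read relative to the structured family.

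\textbf{Non-genericity.} I will exhibit explicit nonzero polynomials that vanish on every element of $\mathcal{V}_{I_\star}$. The linear forms $b_{i,p+j}$ and $b_{n+i,j}$ (the off-diagonal block entries) vanish there. So do $b_{ij}-b_{n+i,p+j}$ (equality of the two diagonal blocks). Thus $\mathcal{V}_{I_\star}$ is the linear subspace cut out by these forms. It has dimension $np<4np$, so it is a proper closed subvariety and in particular nowhere dense in $\mathbb{R}^{2n\times 2p}$. A generic point in the sense of Appendix~\ref{geometrynotation} satisfies no nonzero polynomial relation, so $F_{\mathscr{G}}$ is not generic. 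Membership $F_{\mathscr{G}}\in\mathcal{V}_{I_\star}$ holds by taking $F'=F$.

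\textbf{Rank bound on products.} This is the standard inequality $\operatorname{rank}(XY)\le\min(\operatorname{rank}X,\operatorname{rank}Y)$. For $A\in\mathbb{R}^{2p\times 2p}$ we have $\operatorname{im}(F_{\mathscr{G}}A)=F_{\mathscr{G}}(\operatorname{im}A)$, and the image of a space of dimension at most $r$ under a linear map has dimension at most $r$. Hence $\operatorname{rank}(F_{\mathscr{G}}A)\le r$.

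The only delicate point is the phrase ``full rank''. Since $\operatorname{rank}(F_{\mathscr{G}})=2\min(n,p)$ already equals $\min(2n,2p)$, there is no real obstacle. The main care is distinguishing genericity in the ambient space from genericity inside $\mathcal{V}_{I_\star}$: $F_{\mathscr{G}}$ is generic within $\mathcal{V}_{I_\star}$ (via $F$) but not in $\mathbb{R}^{2n\times 2p}$. I will state this explicitly, since later arguments about the dimension of $\mathcal{I}$ rely on it.
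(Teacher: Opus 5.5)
Your proof is correct and is essentially the elementary linear-algebra check the paper has in mind; the paper's own proof is only the remark that the lemma is ``trivial by linear algebra.'' Your three steps are rank additivity over the diagonal blocks, explicit vanishing linear forms for non-genericity, and $\operatorname{rank}(XY)\le\operatorname{rank}(Y)$. The one blemish is the hedge about $n\neq p$ in your rank paragraph, which you should drop: $2\min(n,p)=\min(2n,2p)$ holds identically, as you note yourself at the end.
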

\begin{proof}
    This proof is trivial by linear algebra. 
\end{proof}
We first show that identifying $\mathcal{M}$ is equivalent to making a variety, which falls onto a projective space $\mathbb{P}^{d}$ that is zero-dimensional. However, as for the current ideal, the dimension can be rather larger, as shown in the following lemma.
\begin{alemma}\label{largedim}
    The subvariety associated with the ideal $\mathcal{I}^{\star}$ for $\mathcal{M}_{\mathscr{G}}$ has dimension at least $3p^2-2p$.
\end{alemma}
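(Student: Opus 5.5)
The plan is a parameter count in the ambient space of $\mathcal{M}_{\mathscr{G}}$, followed by a bound on how many independent polynomial constraints the ideal $\mathcal{I}^{\star}$ can impose in the passive, single-context setting.

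\textbf{Ambient space.} By the kernel-DAG lemma, $\mathcal{M}_{\mathscr{G}}$ is strictly upper-triangular in $\mathbb{R}^{2p\times 2p}$. Its unknown entries are:
\begin{itemize}
\item the full $p\times p$ off-diagonal block $\Phi$, contributing $p^2$ free indeterminates;
\item the two strictly upper-triangular blocks $\mathcal{M}_{\mathscr{G}}[U]$ and $\mathcal{M}_{\mathscr{G}}[V]$ (the internal arrows $\Psi_U,\Psi_V$ of the example), contributing $2\cdot\binom{p}{2}=p^2-p$ indeterminates before the topological ordering is fixed.
\end{itemize}
The mixing block $F_{\mathscr{G}}$ contributes a further $np$ indeterminates. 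By \autoref{genericF} it is constrained to the block-diagonal linear subvariety $\mathcal{V}_{I_\star}$, so it cannot add rank beyond $2\operatorname{rank}(F)$.

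\textbf{Number of independent equations.} The generators of $\mathcal{I}^{\star}$ are the entries of $F_{\mathscr{G}}-K_{\mathscr{G}}(\mathbb{I}_{2p}-\mathcal{M}_{\mathscr{G}})$. Here $K_{\mathscr{G}}$ is known, by \autoref{intersect} and \autoref{GL_lemma}, only up to the group $\GL_1(\F)^p\times S_p$ acting on columns. I would linearize at a generic point and compute the rank of the Jacobian with respect to $(F_{\mathscr{G}},\mathcal{M}_{\mathscr{G}})$. Because the diagonal blocks of $H_{\mathscr{G}}$ are identities, the block computation preceding \autoref{genericF} shows that only the upper-right block $AH+A'$ carries information on $\Phi$. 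I therefore expect at most $p$ independent relations, one per recovered column $K_j$ modulo its scaling.

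\textbf{Dimension bound.} The remaining freedom is counted as follows:
\begin{itemize}
\item the column-scaling torus $\GL_1^p$ and the block redundancy of $F_{\mathscr{G}}$ each leave $p$-dimensional orbits, and neither is fixed by the single context;
\item the internal blocks are invisible to the equations, since they are annihilated in $F_{\mathscr{G}}H_{\mathscr{G}}$ by the zero pattern, giving $p^2-p$ free directions each, up to sign conventions.
\end{itemize}
Subtracting the number of independent constraints from the number of indeterminates should give
\[
\dim \mathcal{V}(\mathcal{I}^{\star}) \;\ge\; p^2 + 2(p^2-p) \;=\; 3p^2-2p .
\]
By Krull's principal ideal theorem (or the fiber-dimension theorem for the parametrization map), every component has at least this dimension.

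\textbf{Main obstacle.} The hard step is certifying that the generators do not impose more independent conditions than claimed, so that the lower bound is not beaten. I would first try an explicit positive-dimensional family: fix a generic solution and exhibit a $(3p^2-2p)$-parameter deformation of $\mathcal{M}_{\mathscr{G}}$ that is invisible to $F_{\mathscr{G}}H_{\mathscr{G}}$. This uses \autoref{genericF}: since $F_{\mathscr{G}}$ is not generic, $\operatorname{rank}(F_{\mathscr{G}}A)\le\operatorname{rank}A$, so perturbations in its kernel directions survive. Producing such an explicit family gives the lower bound directly, without needing the exact Jacobian rank.
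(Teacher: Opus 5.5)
Your count is internally inconsistent, and in the setting you fix the bound cannot be reached. In the ambient-space step you give the two strictly upper-triangular diagonal blocks $2\binom{p}{2}=p^2-p$ entries \emph{in total}. So the space of admissible $\mathcal{M}_{\mathscr{G}}$ has dimension $p^2+(p^2-p)=2p^2-p$. In the final tally, however, you use $p^2-p$ \emph{for each} block. Since $2p^2-p<3p^2-2p$ for all $p\ge 2$, no subvariety of that space exists with the claimed dimension. In particular, no $(3p^2-2p)$-parameter deformation of a strictly upper-triangular $\mathcal{M}_{\mathscr{G}}$ can exist.

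The number $p^2-p$ per block is the dimension of the nilpotent cone $\mathcal{N}_p$ of $p\times p$ matrices, not of the strictly upper-triangular ones, and that is the missing idea. The paper fixes no topological ordering and never uses $F_{\mathscr{G}}$ or $K_{\mathscr{G}}$. It imposes only two conditions: the lower-left $p\times p$ block of $\mathcal{M}_{\mathscr{G}}$ vanishes (a linear space of dimension $3p^2$), and $\mathcal{M}_{\mathscr{G}}$ is nilpotent (codimension $2p$). It then applies $\dim(V_1\cap V_2)\ge d_1+d_2-n$. A cleaner route gives the same number and shows it is attained. The characteristic polynomial of a block upper-triangular matrix is the product of those of its diagonal blocks, so $\mathcal{M}_{\mathscr{G}}$ is nilpotent exactly when both diagonal blocks are. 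The locus is therefore $\mathcal{N}_p\times\mathcal{N}_p\times\mathbb{A}^{p^2}$, of dimension $2(p^2-p)+p^2=3p^2-2p$.

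Routing the count through $F_{\mathscr{G}}=K_{\mathscr{G}}(\mathbb{I}_{2p}-\mathcal{M}_{\mathscr{G}})$ does not repair this; it works against you. The claim of at most $p$ independent relations is unsupported: for fixed $K_{\mathscr{G}}$ these are $4np$ linear equations that already eliminate every entry of $F$. Also, a Jacobian-rank computation bounds the local dimension from above, not from below. The internal blocks are not invisible either. With $F_{\mathscr{G}}=\mathrm{diag}(F,F)$ as in \autoref{genericF}, the diagonal blocks of $F_{\mathscr{G}}H_{\mathscr{G}}$ are $F(\mathbb{I}_p-\mathcal{M}_{\mathscr{G}}[U])^{-1}$ and $F(\mathbb{I}_p-\mathcal{M}_{\mathscr{G}}[V])^{-1}$, and these change with those blocks whenever $F$ is injective (the generic case $n\ge p$). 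In that case $F_{\mathscr{G}}$ has trivial kernel, so $\operatorname{rank}(F_{\mathscr{G}}A)\le\operatorname{rank}A$ yields no hidden directions.

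In fact, write $K_{\mathscr{G}}=\mathrm{diag}(F^{\ast},F^{\ast})(\mathbb{I}_{2p}-\mathcal{M}^{\ast}_{\mathscr{G}})^{-1}$ for the true parameters, with $F^{\ast}$ injective. Then every solution has $F=F^{\ast}C$ and $\mathbb{I}_{2p}-\mathcal{M}_{\mathscr{G}}=(\mathbb{I}_{2p}-\mathcal{M}^{\ast}_{\mathscr{G}})\,\mathrm{diag}(C,C)$ for some arbitrary $p\times p$ matrix $C$. So under your reading the variety has dimension at most $p^2$, or at most $p^2+2p$ if the column scalings of $K_{\mathscr{G}}$ are also freed. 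This is below $3p^2-2p$ for $p\ge 2$ (respectively $p\ge 3$). The lemma therefore holds only as a statement about the structural constraints on $\mathcal{M}_{\mathscr{G}}$ alone. That is how the paper uses it, before asking for additional polynomials from further contexts.
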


\begin{proof}
    Consider the matrix $\mathcal{M}_{\mathscr{G}}$ in the ambient projective space. 
    Since the bottom-left block $\mathcal{M}_{\mathscr{G}}$ is forced to be zero, 
    we have $\mathcal{M}_{\mathscr{G}} \in \mathbb{P}^{4p^2-1-p^2}$. $\mathcal{M}_{\mathscr{G}}$ is nilpotent, i.e., $\mathcal{M}_{\mathscr{G}}^k = 0$. 
    This nilpotency imposes additional algebraic constraints, restricting $\mathcal{M}_{\mathscr{G}}$ to a subvariety 
    $V^{\star} = \langle \mathrm{Tr}(\mathcal{M}_{\mathscr{G}}), \det(\mathcal{M}_{\mathscr{G}}) \rangle$. 
    The dimension of this variety is
    \[
        \dim V^{\star} = (2p)^2 - 1 - (2p) = 4p^2 - 2p - 1.
    \]

    Now, the intersection of two varieties of dimensions $d_1$ and $d_2$ in an ambient space of dimension $n$ satisfies
    \[
        \dim(V_1 \cap V_2) \ge d_1 + d_2 - n.
    \]

    Applying this to our case, we have
    \[
        \dim(\text{variety satisfying all constraints}) 
        \ge (4p^2 - 1 - p^2) + (4p^2 - 2p - 1) - 4p^2
        = 3p^2 - 2p.
    \]
    Hence, the subvariety associated with $\mathcal{I}^{\star}$ has dimension at least $3p^2 - 2p$.
\end{proof}
As shown in~\autoref{largedim}, we need at least extra $3p^2-2p-1$ independent polynomials to cut off $\mathcal{V}$ that identify all $2p\times 2p$ augmented kernel matrices.

Note we have $p$ processes; we generally assume we obtain different contextual information from at least $K$ environments (i.e., $K=p$), which is a mild condition in causal representation learning.  Each sub-ideal $\mathcal{I}_k:$ $\langle F_{\mathscr{G}} - K_{\mathscr{G}}(\mathbb{I}_{2p}-\mathcal{M}_{\mathscr{G}})\rangle$ constitutes a polynomial system, denoted by $\mathcal{S}^{k}_{\mathbb{P}}$ such that:\begin{align}\label{polyconst}
    F_{\mathscr{G}} + K^{(k)}_{\mathscr{G}}\mathcal{M}^{(k)}_{\mathscr{G}} = K^{(k)}_{\mathscr{G}}.\end{align}
There are $2n\times 2p$ indeterminates for $F_{\mathscr{G}}$ and $2|e(\mathcal{G})|$ for $\mathcal{M}$ since each environment $k$ introduces a new $\mathcal{M}_{k,j}$.  Considering all $K$ ideals $(\mathcal{I}_0,\mathcal{I}_2,\cdots,\mathcal{I}_K)$, we obtain the union of all $K$ varieties:
\begin{align}
V(\mathcal{I})
= \Big\{\, 
(F_{\mathscr{G}}, 
\mathcal{M}^{(0)}_{\mathscr{G}},\dots,\mathcal{M}^{(K)}_{\mathscr{G}}) 
\;\Big|\;
F_{\mathscr{G}} - K^{(k)}_{\mathscr{G}}\big(\mathbb{I}_{2p}-\mathcal{M}^{(k)}_{\mathscr{G}}\big)=0,\ \forall k\in K
\Big\}.
\end{align}
Adding polynomial constraints by subtracting ~Eq.\eqref{polyconst} for $0$ from that for $k$ obtains:
\begin{align}
V(\mathcal{I}^{\star})
= \Big\{\, 
V(\mathcal{I})
\;\Big|\;
K^{(k)}_{\mathscr{G}}\mathcal{M}^{(k)}_{\mathscr{G}}-K^{(0)}_{\mathscr{G}}\mathcal{M}^{(0)}_{\mathscr{G}}-\big(K^{(k)}_{\mathscr{G}}-K^{(0)}_{\mathscr{G}})=0,\ \forall k\in K
\Big\}.
\end{align}
that induces a coordinate ring $R/ \mathcal{I}$ in a polynomial ring $R =k(F_{\mathcal{\mathscr{G}}_{i,j}},\mathcal{M}^{k}_{i,j})$. The order of the coordinate ring is the dimension of the original variety.  
 We use $\left(\begin{array}{c|c}
 &  \\ \hline    &
\end{array}\right)$ to represent the blocked system so as not to let readers confuse it with the matrix bracket.  For each $m\in [n], j\in [p]$, and $i\in \mathrm{ch}_{\mathcal{G}}(j)$, we have $|\mathrm{ch}_{\mathcal{G}}(j)|$ columns for $\mathcal{M}_{\mathcal{G}}$.   $F_v, \mathcal{M}_v^{(k)} $ write entries $f_{i,j}, \mathcal{M}^{k}_{m, i}$ as vectors, which describe the polynomial constraints as a linear system:
\begin{align}\label{polysystem}
\Biggl(
\begin{array}{c|c}
    \mathbb{I}_{4np} & \star \\
    \hline
    \mathbf{0} & \star
\end{array}
\Biggr)
\begin{pmatrix}
  F_{v}  \\
  \hline
  \mathcal{M}_v^{(0)} \\
  \mathcal{M}_v^{(k)}
\end{pmatrix}
=
\Biggl(
\begin{array}{cc}
    K^{(k)}_{\mathscr{G}} \\ \hline K^{(k)}_{\mathscr{G}}-K^{(0)}_{\mathscr{G}}
\end{array}
\Biggr).
\end{align}
 In INAR ($\infty$), each  $\mathcal{M}:=\mathcal{M}_{t-s}$ preserves all paths \(\{(\varphi(j)\rightarrow \varphi(i)|Z^{\Delta}_{\varphi(j),t-s}\rightarrow Z^{\Delta}_{\varphi(i),t}, \mathcal{M}_{i,j}\ne 0 \}\) from $\Phi_{t-s}$ through an isomorphism $\varphi$. Therefore, entry $(\mathbb{I}_{2p}-\mathcal{M})^{-1}_{i,j}$ is the product of $\mathcal{M}_{n,m}$  for path $j\rightarrow m \rightarrow n \rightarrow i$. We drop the time index and graph label whenever the context is clear. Such $(\mathbb{I}_{2p}-\mathcal{M})^{-1}_{i,j}$ admits the representation
\[
(\mathbb{I}_{2p}-\mathcal{M})^{-1}= \mathbb{I} +\mathcal{M}.
\]
Results from \citep{carreno2024linear} are applied to get $\operatorname{rank}(\mathbb{I}_{2p}-\mathcal{M}^{(k)})^{-1}-(\mathbb{I}_{2p}-\mathcal{M}^{0})^{-1}\le 1$. Using ~\autoref{genericF}, we obtain $\operatorname{rank}(F_{\mathcal{\mathscr{G}}}(\mathbb{I}_{2p}-\mathcal{M}^{(k)})^{-1}-F_{\mathcal{\mathscr{G}}}(\mathbb{I}_{2p}-\mathcal{M}^{0})^{-1})\le 1$. Therefore, the left part of ~Eq.\eqref{polysystem} has columns that are multiples of each other:
\begin{align}
    ( K^{(k)}_{\mathscr{G}}-K^{(0)}_{\mathscr{G}})_{l,j}= (K^{(0)}_{\mathscr{G}})_{l,k}\Delta_{k,i},\quad \Delta:= (I-\mathcal{M}^{(k)})^{-1}-(I-\mathcal{M}^{(0)})^{-1}
\end{align}
We examine the non-zero sub-blocks of the lower-right block $\star$ in ~Eq.\eqref{polysystem}, which has size $|\mathrm{de}(j)\setminus \mathrm{ch}(j)|\times |\mathrm{ch}(j)|$.  Following the convention, we represent the sunblocks as $M[j]$ and choose the smaller blocks $ [K^{(k)}_{\mathscr{G}}-K^{(0)}_{\mathscr{G}}]$ corresponding to the size of $M[j]$ and write it as $b[j]$. The dimension of variety $V(\mathcal{I}^{\star})$ is the dimension of the points $(\mathcal{M}^{0}_{i,j})$, $i \in \mathrm{ch}(j)$ that satisfy:
\begin{align}
    M[j](\mathcal{M}^{0}_{i,j}) = b[j]\
\end{align}
The variety is a null set when the above constraints lead to no solutions. Therefore, we require $\operatorname{rank}(M[j]) = \operatorname{rank}(M[j]|b[j])$.  $[M[j] \mid b[j]]$ is the common augmented matrix to check the stability of a polynomial equation system. We conclude our proof by making a formal statement about the dimension of $\mathcal{V}(\mathcal{I}^{\star})$ in the next lemma.
\begin{alemma}
    For generic $F$ and $FH$ arising from the cumulant decomposition, the full generating model is identifiable if and only if the variety $\mathcal{V}(\mathcal{I}^{\star})$ has dimension zero, that is,
    \[
        \dim(\mathcal{V}(\mathcal{I}^{\star})) = \sum\limits_{j=1}^{q} \mathrm{ch}(j) - \operatorname{rank}(M[j]) = 0.
    \]
\end{alemma}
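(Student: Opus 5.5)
The plan is to reduce the statement to a column-by-column count of free parameters in an affine-linear system. First I will fix the cumulant factors $K^{(k)}_{\mathscr{G}}$ for every environment $k$; by \autoref{intersect} and \autoref{GL_lemma} they are known up to scaling and permutation. Once these are treated as constants, every generator of $\mathcal{I}^{\star}$ in Eq.~\eqref{polyconst} is affine-linear in the indeterminates $(F_{\mathscr{G}},\mathcal{M}^{(0)}_{\mathscr{G}},\dots,\mathcal{M}^{(K)}_{\mathscr{G}})$. Hence $\mathcal{V}(\mathcal{I}^{\star})$ is an affine linear space, or empty. Its dimension is then the number of unknowns minus the rank of the coefficient matrix in Eq.~\eqref{polysystem}. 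This settles the geometric part: $\mathcal{V}(\mathcal{I}^{\star})$ is zero-dimensional exactly when the solution point is unique. Uniqueness of the solution point is what full identifiability of $\Theta=(F,\Phi,U)$ means once the scaling and permutation indeterminacy has been fixed.

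Next I will eliminate $F_{\mathscr{G}}$. The upper-left identity block $\mathbb{I}_{4np}$ in Eq.~\eqref{polysystem} lets me solve for $F_v$ uniquely in terms of $\mathcal{M}^{(0)}_v$, so $F_{\mathscr{G}}$ contributes no dimension. What remains is the lower block, obtained from the differences of Eq.~\eqref{polyconst} across environments. Since $\mathcal{M}_{\mathscr{G}}$ is strictly upper triangular (the kernel DAG lemma), column $j$ of $\mathcal{M}^{(0)}$ is supported only on $\mathrm{ch}(j)$. I will also use the rank-one structure of $K^{(k)}_{\mathscr{G}}-K^{(0)}_{\mathscr{G}}$, which comes from \citep{carreno2024linear} together with \autoref{genericF}. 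Together these make the lower system block-diagonal in $j$, with one block $M[j]\,(\mathcal{M}^{0}_{i,j})_{i\in\mathrm{ch}(j)}=b[j]$ per column. Consistency, $\operatorname{rank}M[j]=\operatorname{rank}[M[j]\mid b[j]]$, holds automatically because the true parameters solve the system. By rank–nullity, each block then contributes $|\mathrm{ch}(j)|-\operatorname{rank}M[j]$ dimensions. Summing over $j$ gives the displayed formula, and the vanishing of every summand is equivalent to a unique point.

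The main obstacle is the genericity step. I need the entries of $M[j]$, which are built from the cumulant-derived $FH$, to have no accidental rank drop beyond what the graph structure forces. I also need to rule out the block-diagonal constraint on $F_{\mathscr{G}}$ (\autoref{genericF}) coupling different columns $j$. For the rank claim I would argue as follows: $\operatorname{rank}M[j]$ is lower semicontinuous, so the locus where it falls below its generic value is a proper Zariski-closed subset, and I can exhibit one parameter point, for example one with hard interventions per environment, where it attains full rank. For the coupling I would check directly that the block structure only duplicates equations and does not mix columns.

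Finally, for the ``only if'' direction, I will note the converse case. If some $|\mathrm{ch}(j)|>\operatorname{rank}M[j]$, the kernel of $M[j]$ gives a positive-dimensional family of distinct $\mathcal{M}^{(0)}$, and hence distinct $\Phi$. Each such family produces the same cumulants $K^{(k)}$ and is therefore not identifiable.
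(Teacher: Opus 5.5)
Your rank--nullity bookkeeping, the consistency remark, and the ``only if'' direction are fine in themselves. The step you single out as the main obstacle, however, cannot succeed, and it is exactly the point the paper's proof addresses. That proof is a single structural observation, which your proposal never uses. In the bipartite kernel DAG $\mathscr G$ (no edges inside $U$ or $V$) every child is a sink, so $\mathrm{de}(j)=\mathrm{ch}(j)$. Hence $M[j]$, of size $|\mathrm{de}(j)\setminus\mathrm{ch}(j)|\times|\mathrm{ch}(j)|$, has no rows. A matrix with no rows has rank $0$ at every parameter value. So no witness point (with hard interventions or otherwise) gives $\operatorname{rank}M[j]=|\mathrm{ch}(j)|$, and the displayed sum equals $\sum_j|\mathrm{ch}(j)|$, the number of nonzero entries of $\Phi$, which is positive unless $\Phi=0$. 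Read against the same formula, the paper's observation therefore yields the maximal value rather than $0$, so the paper's proof does not establish dimension zero either. What your argument can legitimately deliver is the criterion ``$\dim\mathcal V(\mathcal I^{\star})=0$ iff every $M[j]$ has full column rank''. That criterion needs no genericity step at all, and its hypothesis fails for $\mathscr G$ as soon as $\Phi\neq 0$.

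Second, the reduction that carries all the weight --- ``support on $\mathrm{ch}(j)$ plus rank-one differences make the lower system block-diagonal with blocks $M[j]$'' --- is asserted rather than derived. Those two facts say nothing about rows indexed by $\mathrm{de}(j)\setminus\mathrm{ch}(j)$. The mechanism that actually produces the formula the paper imports from \citet{carreno2024linear} needs single-node soft interventions with known targets $i_k$. Write $K^{(k)}_{\mathscr G}-K^{(0)}_{\mathscr G}=K^{(0)}_{\mathscr G}e_{i_k}(w^{(k)})^{\top}$. Eliminate every $\mathcal M^{(k)}$ (not only $F_{\mathscr G}$) as $\mathcal M^{(k)}=\mathcal M^{(0)}+e_{i_k}(w^{(k)})^{\top}(\mathbb I-\mathcal M^{(0)})$, and require its $(i_k,j)$ entry to vanish at each non-edge. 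This gives $\sum_{m\in\mathrm{ch}(j)}w^{(k)}_m\mathcal M^{(0)}_{m,j}=w^{(k)}_j$, which involves column $j$ only and is non-trivial only when $i_k\in\mathrm{de}(j)\setminus\mathrm{ch}(j)$.

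It also needs the $K^{(k)}_{\mathscr G}$ with fixed column scalings and a common column order. With free scalings $D^{(k)}$, the generators $F_{\mathscr G}-K^{(k)}_{\mathscr G}D^{(k)}(\mathbb I-\mathcal M^{(k)})$ are bilinear, and $\mathcal V(\mathcal I^{\star})$ is not an affine space.

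Finally, your planned check that the block form $F_{\mathscr G}=\mathrm{diag}(F,F)$ of \autoref{genericF} ``only duplicates equations'' fails. Its zero off-diagonal blocks impose $n$ further linear equations on each source column $j$. Their coefficient vectors are the columns of $F$ indexed by $\mathrm{ch}(j)$, readable from the sink columns of $K^{(0)}_{\mathscr G}$. These are not rows of $M[j]$, and for generic $F$ with $n\ge p$ they alone determine the column. So you face a dilemma. If you drop that constraint, as the $\mathbb I_{4np}$ block of \eqref{polysystem} does, the formula holds but is positive. If you keep it, the displayed sum is no longer $\dim\mathcal V(\mathcal I^{\star})$.
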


\begin{proof}
    For the left subset $U_s$, each node $j$ has outgoing edges only to nodes $i$ in the right subset $V_t$, all of which are direct children of $j$. By construction, no edges exist within $U_s$ or within $V_t$. Consequently, for each $j$, we have $M[j] = \emptyset$, since $\mathrm{de}(j) = \mathrm{ch}(j)$.
\end{proof}
\medskip

When $\mathcal{M}_{t-s}$ is fully recovered, the full matrix $\mathcal{F}_{\mathscr{G}}$ can be obtained as
\[
    \mathcal{F}_{\mathscr{G}} = K_{\mathscr{G}}(\mathbb{I}_{2p} - \mathcal{M}_{\mathscr{G}}).
\]
If $F_v$ is injective, identification of $F_v$ (and hence $\mathcal{F}_{\mathscr{G}}$ and $F$) is equivalent to identifying each $\mathcal{M}$ individually, due to the direct multiplication of $F^{-1}$ (or the pseudo-inverse $F^{\dagger}$) with $K$. However, identification of the full generating model is hindered by the genericity of $F$. Even if $K_{\mathscr{G}}$ is unique up to the usual indeterminacies, recovering other kernel matrices $\mathcal{M}_{t-s'}$ requires analogous identifiability conditions for the individual kernel matrix. It follows that the identifiability of the model is governed by the geometric complexity of its embedding. In particular, at least $p$ generic points in the ambient projective space $\mathbb{P}^n$ are required. If such generic points are obtained solely by varying the parameter $\phi$, then the resulting points lie in the same projective orbit
\[
   \mathcal{O}_\phi \;=\; \{ [\Phi(\phi + \delta)] \;\mid\; \delta \in \mathbb{R}^k \}
\]
under parameter shifts. Consequently, identifiability holds only up to this
equivalence, that is,
\[
   \phi_1 \sim \phi_2
   \quad \Longleftrightarrow \quad
   \Theta(\Phi_{i:}(\phi_1)) = \Theta(\Phi_{i:}(\phi_2)),
\]
so that distinct parametrizations of the same process become indistinguishable whenever they correspond to pairs of identical parameter rows $\Theta(\Phi_{i:})$. Therefore, in continuous time, the process has a natural shift as long as $\Theta(\Phi_{i:})$ is changed.  Hence, under $k\in {1,2,..., K}$ distinct contexts—each introducing sufficient variability in the distribution, or ensuring that each lag \( k \) receives at least one intervention that shifts the downstream mechanism—the full latent structure is identifiable up to the same indeterminacy. 

\medskip
\paragraph{\textbf{Recovery of baseline $U$}.} Once the full causal structure is recovered up to a scaling and permutation matrix, lower level moments of $\mathbb{E}[F(\mathbb{I}_p-\Phi)^{-1}\star (U+\epsilon_t)]$ are can be directly computed to find $U$ up to the same indeterminacy.

\paragraph{Remark for conditions.} White noise ensures all components of the process $\epsilon_t$ are mutually independent, causing the integral in line with $\epsilon(\Delta)$ the Fourier transformation to be absolutely zero except for $\Delta = 0$ , and leaves the algebraic cumulant $\kappa(O_t)$ with a reduced, parsable form that admits a unique decomposition. We highlight a case when the proposed condition is seriously violated if a random Wiener process $dW_t$ is chosen in place of a white perturbation. Therefore, identifying a completely random stochastic differential process remains challenging since the increment-independent perturbation still forms a time-dependent noise after the Ito integral is applied. For completeness,  in \ref{dependent} , we show how dependent noise can be reduced to independent-increment noise while keeping the identifiability.

\subsubsection{Identifying causal structure}

Our identifiability encapsulates the scenario for arbitrary soft interventions, so it is sufficient to show that identification holds for a general distribution shift.
With the tuple $(d,K,\{\kappa_d(O_t)\}_{0:K})$ by soft interventions such that no fixed value of entry $(i\rightarrow j)$ in $\Phi(w)$ induces a dependence removal, the transitive closure $\bar{ \mathcal{G}}$  of the ground truth process with its causal structure can be recovered up to the trivial transformation aforementioned. A time process without instantaneous influence must have $TC(\mathcal{G})= \mathcal{G}$, we can recover the original process and its causal structure up to a scaling and permutation $\pi$. 
 
Without loss of generality, we can safely choose the kernel matrix of order $p$ as weak convergence exists. Therefore, we discretize the continuous kernel matrix up to $t-s=\tau$, each $\tau \in (\tau_1,\cdots,\tau_p)$. Applying the Fourier transformation to the above equation for order $p$ gets:
\begin{align}
    \mathcal{F}[F(\mathbb{I}_{2p}-\mathcal{M}_{\mathcal{G}_K})^{-1}]  = \mathcal{F}\left[ F\sum_{p=0}^{n}\mathcal{M}_{\mathcal{G}_K}^{\star p}[\tau]\right] =  F\sum_{p=0}^{n}\left(\mathcal{M}_{\mathcal{G}_K}(\omega)\right)^p
\end{align}
Provided the rank condition is satisfied, we construct the difference matrix $\Delta= H[\tau]^{0}-H[\tau]^{k}$, then
\begin{align*}
    \Delta &= F(\mathbb{I}_{2p} - \mathcal{M}_{\mathcal{G}_K})^{-1}(w)^{(0)}- F(\mathbb{I}_{2p} - \mathcal{M}_{\mathcal{G}_K})^{-1}(w)^{(k)}\\
    &= F(\mathcal{M}_{\mathcal{G}_K}^{(0)}(w)+\mathcal{M}_{\mathcal{G}_K}^{2(0)}(w)+\cdots + \mathcal{M}_{\mathcal{G}_K}^{p(0)}(w) - \mathcal{M}_{\mathcal{G}_K}^{(k)}(w)+\mathcal{M}_{\mathcal{G}_K}^{1(k)}(w)+\cdots + \Phi^{p(k)}(w))
\end{align*}

\subsection{Proof of \autoref{mainthm1}}\label{app:identifiability}

\subsubsection{Preliminary of \autoref{mainthm1} }

We prove our main theorem by showing that the tuple $(f,\Phi,U)$ is identifiable up to component-wise scaling and permutation. Our proof is based on the dimension of the associated variety defining special hypersurfaces in a polynomial ring $K^n$. We study the nonlinear propagation of the cumulant structure to find the identifiability conditions for INAR($\infty$) processes. Given a generic nonlinear $f$, its exact cumulant $\kappa_d(O_t)$ follows an order $d$ expansion with Bell polynomial coefficients. Accordingly, we restate \autoref{injective_ifd_assumption} as follows. 

\paragraph{Assumption~\ref{injective_ifd_assumption}(restatement)}
\textit{Let observations be generated by an unknown mixing function $f$ from latent stochastic processes $Z_t^{(\Delta)}$ driven by a linear intensity $\lambda_t$. Suppose: 
\begin{enumerate}
    \item $f$ is a generic $C_d$ map with a full-rank Jacobian $J_f$ almost surely. 
    \item There exist at least $p$ nonzero tensors $\bar{\kappa}_d(\Delta O_t)$ for $d \in D:= \{ d \mid \bar{\kappa}_{d+1}(\Delta O_t) = 0\}$, where $\bar{\kappa}_d(\Delta O_t)$ is the difference cumulant computed from the first-order Taylor expansion of  $O_t:= f(Z_t)$.
    \item The ideal  $\mathcal{I}^{\star}:$ $\langle J_f - K^{(k)}_{\mathscr{G}}(\mathbb{I}_{2p}-\mathcal{M}^{(k)}), k=1,2,\cdots, p\rangle$ has a zero-dimensional associated variety $\mathcal{V(\mathcal{I}^{\star})}$
\end{enumerate}
}

\begin{alemma}
Let $\mathbb{Q}$ be the base field. Suppose $f$ is generic, with Jacobian matrix $J_f$. 
Then the following are equivalent characterizations of the genericity of $J_f$:
\begin{enumerate}
    \item The entries of $J_f$ are algebraically independent commuting indeterminates over $\mathbb{Q}$; equivalently, they generate a purely transcendental extension of $\mathbb{Q}$.
    \item The point $\big((J_f)_{i,j}\big)$ does not lie in the vanishing locus of any nonzero polynomial in the polynomial ring $\mathbb{Q}[x_{ij}]$.
    \item Consequently, $J_f$ is of full rank on a Zariski open dense subset; in particular, it is of full rank almost surely.
\end{enumerate}
\end{alemma}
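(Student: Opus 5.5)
The plan is to prove the three items as a cycle of implications, (1)$\Rightarrow$(2)$\Rightarrow$(3). I will use the convention fixed earlier in the notation paragraph on genericity: a point is generic exactly when no nonzero polynomial over the base field vanishes at it. I will regard $J_f$, evaluated at a generic point of the latent space, as a point $\big((J_f)_{i,j}\big)$ of the affine space $\mathbb{A}^{np}$ over $K\supseteq\mathbb{Q}$. Its coordinates are the values of the $C^d$ functions $\partial_j f_i$.

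\textbf{(1)$\Leftrightarrow$(2).} This is essentially definitional. Algebraic independence of the entries $x_{ij}:=(J_f)_{i,j}$ over $\mathbb{Q}$ means that the evaluation homomorphism $\mathbb{Q}[x_{ij}]\to K$ is injective. Equivalently, no nonzero polynomial $g\in\mathbb{Q}[x_{ij}]$ has $g\big((J_f)_{i,j}\big)=0$, which is statement (2). Injectivity also identifies $\mathbb{Q}[x_{ij}]$ with the subring generated by the entries. Passing to fraction fields, $\mathbb{Q}\big((J_f)_{i,j}\big)\cong\mathbb{Q}(x_{ij})$, which is purely transcendental. This gives the reformulation inside (1).

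\textbf{(2)$\Rightarrow$(3).} Set $r=\min(n,p)$. Full rank fails exactly on the common zero set of all $r\times r$ minors, and each minor is a nonzero polynomial in $\mathbb{Q}[x_{ij}]$. By (2) the point $J_f$ avoids this locus, so $J_f$ has full rank. The full-rank locus is the complement of a proper Zariski-closed subset, hence Zariski open. It is dense because $\mathbb{A}^{np}$ is irreducible. A proper algebraic subvariety of $\mathbb{C}^{np}$ or $\mathbb{R}^{np}$ has Lebesgue measure zero, since the zero set of a nonzero polynomial is null (induct on the number of variables using Fubini). So for any absolutely continuous law on the entries, full rank holds almost surely. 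This matches the reading of ``generic'' in the paper's footnote.

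\textbf{Main obstacle.} The difficulty is interpreting (1) for a function rather than a matrix of constants, because the entries of $J_f$ vary with $z$. I would fix a point $z$ and apply the argument pointwise. The full-rank set $\{z: J_f(z)\text{ full rank}\}$ is then open by continuity of the minors, which holds because $f\in C^d$ with $d\ge1$. For density I would use a generic-$f$ argument: consider $f$ in a finite-dimensional family, such as polynomials of bounded degree, where the minors become nonzero polynomials in the coefficients and the variable $z$. That family carries the Zariski-open dense property to the function-space statement the paper invokes. I would state this step as a remark on how (3) is read, rather than try to prove a genuinely infinite-dimensional genericity claim.
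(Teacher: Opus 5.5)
Your proposal is correct and follows essentially the same route as the paper. The paper likewise treats (1)$\Leftrightarrow$(2) as definitional (a generic point is one at which no nonzero polynomial over the base field vanishes). It then derives (3) from the fact that rank deficiency is cut out by the maximal minors, which are nonzero polynomials whose common zero set is a proper, Lebesgue-null subvariety. Your caveat distinguishing the pointwise reading from the function-space reading is, if anything, more careful than the paper's discussion.

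Two small precisions. First, you prove only the forward half of the ``purely transcendental'' reformulation, and that is the right call: the converse needs transcendence degree exactly $np$ (e.g.\ $t$ and $t^2$ generate $\mathbb{Q}(t)$ yet are algebraically dependent). Second, (3) is a one-way consequence, not part of a ``cycle,'' because full rank does not imply algebraic independence.
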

In general, as with most assumptions in identifiability analysis, the \emph{genericity} assumption is hardly verifiable in practice. 
Surprisingly, it nevertheless holds \emph{almost surely} in a probabilistic sense. 
To recall a simple demonstration, consider a generic matrix $F$. 

When we say that a matrix is \emph{generic}, we do not mean that it is obtained by fixing arbitrary numerical values in its entries. 
Instead, each entry of the matrix is regarded as a purely formal symbol---an algebraic variable---that is not assigned any concrete value. 
Equivalently, we are working over the field of rational functions in these symbols, so that the entries of the matrix are algebraically independent indeterminates. 
This ensures that the matrix avoids all degenerate algebraic relations, except on a proper algebraic subvariety (a set of measure zero in the Euclidean sense). 
Thus, while the assumption is untestable numerically, it is valid almost surely under random choices, and it is rigorously formalized by treating the entries as algebraically independent symbols.

Then, we show how naturally we can assume the genericity of mixing functional, by~\autoref{genericjacob}. 

\begin{aexample}[Rectangular Jacobian: $f:\mathbb{R}^2\to\mathbb{R}3$, mixed transcendental entries]\label{genericjacob}
Let $x=(x_1,x_2)\in\mathbb{R}^2$ and define
\[
\begin{aligned}
f_1(x) &= \sin(\alpha_1 x_1) + e^{\beta_1 x_2},\\
f_2(x) &= \cos(\alpha_2 x_1 + \alpha_3 x_2) + x_1^2,\\
f_3(x) &= x_1 x_2 + e^{\beta_2 x_1 + \beta_3 x_2},
\end{aligned}
\]
where all coefficients $\alpha_i, \beta_i$ are treated as algebraically independent symbols.
The Jacobian $J_f(x)\in\mathbb{R}^{3\times 2}$ is
\[
J_f(x)=
\begin{pmatrix}
\alpha_1 \cos(\alpha_1 x_1) & \beta_1 e^{\beta_1 x_2} \\[2mm]
-\alpha_2 \sin(\alpha_2 x_1+\alpha_3 x_2) & -\alpha_3 \sin(\alpha_2 x_1+\alpha_3 x_2) + 2 x_1 \\[2mm]
x_2 + \beta_2 e^{\beta_2 x_1+\beta_3 x_2} & x_1 + \beta_3 e^{\beta_2 x_1+\beta_3 x_2}
\end{pmatrix}.
\]

\paragraph{Algebraic independence of the entries.}
Each entry of $J_f(x)$ contains either a unique symbol factor ($\alpha_i, \beta_i$) 
or depends on a distinct linear combination of $x_1,x_2$.
By treating the coefficients as algebraically independent symbols, 
one sees that no nontrivial polynomial relation among the entries can exist over $\mathbb{Q}$.
Hence, the entries of $J_f(x)$ are algebraically independent over the base field 
$\mathbb{Q}(\alpha_1,\alpha_2,\alpha_3,\beta_1,\beta_2,\beta_3)$, 
so that $J_f(x)$ is generic in the sense of identifiability theory.
\end{aexample}

The generic mixing $f$ is preserved by its Jacobian matrix $J_f$; hence, identifying $J_f$ is equivalent to the recovery of $f$ up to a constant. Now, we are ready to prove our main theorem. Without loss of generality, we write $J_f$ as $F$ since they behave the same way in an algebraically closed field. 

\subsubsection{Proof of Theorem \ref{mainthm1} }\label{proof_mainthm1}

\begin{proof}
     We study the nonlinear propagation of the cumulant structure to find the identifiability conditions for INAR processes. Given a generic nonlinear $f$, its exact cumulant $\kappa_d(O_t)$ follows an order $d$ expansion with Bell polynomial coefficients.
     For a smooth map $f: Z_t\rightarrow f(Z_t)$,  we can construct $O_:= f(Z_{t+\Delta t})$ using Taylor expansion:
    \begin{align*}
f(Z_{t+\Delta t}) = f(Z_t) + \frac{\partial}{\partial Z_t}f(Z_t)\Delta Z_t+ \frac{1}{2}\frac{\partial^2}{\partial Z_t^2}f(Z_t)(\Delta Z_t)^2 + o(\Delta Z_t^3)
    \end{align*}
    At this time, the expansion has rather abnormal behavior, as the order can be prohibitively large. However, the truncated expansion at order $1$ has theoretical appeal, as we explain below. Let higher-order components be $\mathcal{R}(1)$, and recall $Z_t = H\star \epsilon_t$ , we obtain the truncated differential process $\Delta \tilde{ f}(Z_t)$, denoted as:
    \begin{align}\label{differential}
        \Delta f(Z_t) - \mathcal{R}(1) = J_f\sum\limits_{k=1}^{t}H_{t-s}\epsilon_s
    \end{align}
    We treat all quantities appearing in Eq.~\eqref{differential} as indeterminates in a polynomial ring \[
R = k\big[\{\Delta f(Z_t)\}_{t}, \ \mathcal{R}(1), \ J_f, \ \{H_{t-s}\}_{s}, \ \{\epsilon_s\}_{s}\big],
\] where \(k\) is a base field such as \(\mathbb{R}\) or \(\mathbb{C}\). For each time index \(t\), the defining polynomial is \(g_t := \Delta f(Z_t) - \mathcal{R}(1) - J_f \sum_{s=1}^t H_{t-s}\epsilon_s \in R.\) This polynomial generates the principal ideal \(\mathcal{I}_t = \langle g_t \rangle \subset R\) , and considering all time indices \(t=1,2,\dots,T\), we obtain the global ideal \(\mathcal{I} = \langle g_1, g_2, \dots, g_T \rangle \subset R\) . The corresponding variety is then
\[
V(\mathcal{I}) = \Big\{ (Z_t, \Delta f(Z_t), J_f, H_{t-s}, \epsilon_s, \mathcal{R}(1)) \in k^N \ \Big|\ g_t = 0 \ \text{for all } t \Big\}.
\]
It is evident that $V(\mathcal{I})$ is positive-dimensional, since the defining relations do not specify finitely many points. To obtain more structure, we consider higher-order statistics. In particular, the $d$-th order cumulant tensor of the transformed increments takes the form.
\[
\kappa_d \!\left( \Delta \tilde{f}(Z_t) \right) 
= \sum_{s=1}^{t} \sum_{j=1}^{p} \kappa_d^{(j)}(\epsilon) \cdot \left(J_f H_{t-s}^{(:,j)}\right)^{\otimes d}.
\]
This expression shows that the cumulant naturally defines a point in the projective tensor space.
\[
\mathbb{P}\!\left( V^n \otimes V^n \otimes \cdots \otimes V^n \right),
\]
where the number of tensor factors equals $p$. Hence, while the affine variety $V(\mathcal{I})$ is too large to give identifiability, the cumulant tensors lift the problem into a projective geometric setting, where connections to secant varieties of the Veronese embedding provide a natural framework for studying uniqueness and decomposition. 
\end{proof}
Hereafter, the generic $f$ has an algebraic structure from a degenerative linear truncated cumulant. Such a degenerative form ensures that the Veronese embeddings are defined in a larger ambient space.  Note that $\mathcal{R}(1)$ is completely determined by $f$ and thus can be found through an optimization problem: choosing an initial $\mathcal{R}(1)$ such that $\kappa_d \!\left( \Delta \tilde{f}(Z_t) \right) $ has a stable solution for a unique decomposition. We note that this optimization suggests that recovery of the entire generative model be guaranteed as long as the solution exists and is unique. One can choose any other techniques to estimate the generative model. As shown in the main text, to leverage the computational capacity of generative models, we adopt a variational method to model the causal dynamics and the mixing map.

\subsection{Proof of \autoref{mainthm3}}

As a final remark on our identifiability theory, we prove that the proposed conditions are both sufficient and necessary for identifying the full generative model of a stochastic process. Note that when discussing a stochastic process, it has an infinite number of variables over time (\textit{resp.}time-lag time series); therefore, we do not intend to recover the so-called causal "variables" but focus on the generative model of the process with full parameters.

\subsubsection*{$\Rightarrow$ Assumptions lead to identifiability:}

For sufficiency, the proof is trivial by following our proof in  \autoref{mainthm1}.

\subsubsection*{$\Leftarrow$ Identifiability indicates assumptions :}

Suppose that the full generative model is identifiable even when some conditions in \autoref{injective_ifd_assumption} are violated. Then there exists a linear mixing of the parameter space, $F\Phi$, that can be uniquely determined only up to a component-wise transformation $g'$ and a permutation $\pi'$ 
\textit{distinct from} $\kappa(\epsilon)$ and $\pi$.  

Consequently, the reconstructed observations $O_t$ satisfy
\[
\kappa(O_t) = \sum\limits_{j=1}^{p} g'(\hat{FH}_j)^{\otimes d},
\]
which admits a unique decomposition.  

However, this contradicts the assumed violation of the conditions in \autoref{injective_ifd_assumption}, because a unique decomposition should only exist when all those conditions hold. Therefore, identifiability of the full generative model implies that all the conditions in \autoref{injective_ifd_assumption} must be satisfied.

\subsection{Varying and dependent noise}\label{dependent}
We construct a model with driving noise, which is \textit{not} time-independent but a continuous stochastic process $R_t$. To leverage the aforementioned proof, the Ito lemma is applied to the observed mixed manifold $O_t$ . Plug in the causal process with the convolution kernel:
\begin{align*}
    \Delta Z_t &= (Z_{t+\Delta t} - Z_{t})\\
    &=\left[(u_0+\int_{0}^{t+\Delta t}\Phi(t+\Delta t -s)Z_s\ ds + R_{t+\Delta t})-(u_0+\int_{0}^{t}\Phi(t -s)Z_s\ ds + R_{t})\right]\\
    &= \left[(\int_{0}^{t+\Delta t}\Phi(t+\Delta t -s)Z_s\ ds )-(\int_{0}^{t}\Phi(t -s)Z_s\ ds)\right]+\Delta R_t
\end{align*}
For the first integral, we apply the Taylor expansion at $t-s$:
\begin{align}
  \Delta Z_t &= Z_{t+\Delta t} - Z_{t}\notag\\
  &= \left\{ \int_{0}^{t+\Delta t} \left[ \Phi(t -s)
  + \frac{\partial}{\partial \Delta t}\Phi(t -s)\Delta t
  + \frac{\partial^2}{\partial \Delta t^2}\Phi(t -s)(\Delta t)^2 \right] Z_s\, ds \right. \\
  &\quad \left. - \int_{0}^{t}\Phi(t -s)Z_s\, ds \right\} + \Delta R_t \notag\\
  &= \int_{t}^{t+\Delta t} \Phi(t -s)Z_s\, ds 
  + \int_{0}^{t+\Delta t} \left[ \frac{\partial}{\partial \Delta t}\Phi(t -s)\Delta t
  + \frac{\partial^2}{\partial \Delta t^2}\Phi(t -s)(\Delta t)^2 \right] Z_s\, ds + \Delta R_t \notag\\
  &= \Phi^{*}Z_t\Delta t
  + \Delta t \int_{0}^{t+\Delta t} \frac{\partial}{\partial \Delta t}\Phi(t -s) Z_s\, ds
  + (\Delta t)^2 \int_{0}^{t+\Delta t} \frac{\partial^2}{\partial \Delta t^2}\Phi(t -s) Z_s\, ds
  + \Delta R_t \notag\\
  &= \Phi^{*}Z_t\Delta t
  + \Delta t\, (G^{(1)} \star Z_t)
  + (\Delta t)^2\, (G^{(2)} \star Z_t)
  + \Delta R_t
\end{align}

\paragraph{Further Remark}  Without generality of a convolution, we require $s<t$, Parts for the first and second expansion are compactly written as a degraded convolution defined by a new kernel function $G^{(1)}:=\Phi'(t-s)$ and $G^{(2)}$.  

Then, the $(\Delta Z_t)^2$ needs an expansion of all its quadratic terms with orders of 1, 2, and 4, respectively:
\begin{align}
    (\Delta Z_t)^2&=\left(\Phi^{*}Z_t\Delta t+\Delta t(G^{(1)}\star Z_t)+(\Delta t)^2(G^{(2)}\star Z_t)+\Delta R_t\right)^2 \notag\\
    &=\left( \Phi^{*2}Z_t^2\Delta^2t + \Delta^2 t(G^{(1)}\star Z_t)^2+\Delta^4 t(G^{(2)}\star Z_t)^2+\Delta ^2R_t\right) \notag\\
    & ~ + 2\left(AB+AC+AD+BC+BD+CD\right) \notag\\
    &=\left( \Phi^{*2}Z_t^2\Delta^2t + \Delta^2 t(G^{(1)}\star Z_t)^2+o(\Delta^4 t)+\Delta ^2R_t\right)\\
    & ~ +2\left(AB+o(\Delta^3 t)+AD+o(\Delta^3 t)+BD+CD\right)
\end{align}
We shall argue that all terms in $(\Delta Z_t)^2$  are obtained via Taylor expansion on the kernel $\Phi(t-s)$ and $f(Z_{t+\Delta t})$.  Therefore, derivation of the Ito Lemma from convolution kernels needs meticulous study of the order of each infinitesimal and their limiting distributional behavior with respect to the order of increments in random noise.   The order of incremental noise is tightly coupled with the degree and difficulty to which the full identifiability of the latent causal stochastic process can be achieved. 
\subsection{Identification with instantaneous influence}

In this section, we give a brief discussion on cases in which the model has instantaneous influences, and we highlight the complexity of  recovering the entire generative model, which agrees with the complexity of solving a system of quadratics.

Identifiability results have been shown in our main theorem, given a model without instantaneous influence. In such a case, $\mathcal{M}$ is not only an upper-triangular matrix whose entries as indeterminates in a projective variety  $\mathbb{P}^{4p^2-2p-1}$ defined by a characteristic polynomial, but indeed a matrix with strictly non-zero entries in the upper-right block of size $p\times p$. Consider the augmented matrix $F_{\mathscr{G}}$ in the form of:
\begin{align*}
    \begin{bmatrix}
    \begin{array}{ccc|ccc}
            \vdots & \vdots & \vdots & \vdots & \vdots & \vdots \\
            f_1  & \cdots &  f_q &\mathrm{e}_{q+1} &  \cdots & \mathrm{e}_{2q}\\
             \vdots & \vdots & \vdots & \vdots & \vdots & \vdots \\ \hline
             \vdots & \vdots  & \vdots & \vdots & \vdots& \vdots \\
             \mathrm{e}_{1} &  \cdots & \mathrm{e}_{q} & f_1  & \cdots &  f_q\\
              \vdots & \vdots  & \vdots & \vdots & \vdots& \vdots
    \end{array}
    \end{bmatrix}
\end{align*}
We regard this augmentation matrix as generic because, in the projective space $\mathbb{P}^d$, almost every point corresponds to a configuration in which no two directions collapse, i.e., the associated lines intersect only at infinity and thus do not exhibit linear degeneracy. The augmentation matrix is generic in the sense of lying in a Zariski-open dense set of $\mathbb{P}^d$, so replacing $F_{\mathscr{G}}$ with its augmented form does not alter $\dim(V)$. However, in the presence of instantaneous influences, the defining equations impose algebraic constraints that collapse this open set. In this case, the only admissible augmentation corresponds to forcing all other blocks to vanish, hence no non-trivial generic matrix can be constructed. Assume the INAR model with instantaneous influences encoded in a matrix $B$. Such a model is presented as follows,
\begin{equation}
B X_t = A_1 X_{t-1} + A_2 X_{t-2} + \dots + A_p X_{t-p} + u_t,
\end{equation}
where:
\begin{itemize}
    \item[] $B$ is a non-diagonal matrix capturing contemporaneous (instantaneous) effects among variables.
    \item[] $u_t$ represents structural shocks, often assumed to satisfy $\mathrm{Cov}(u_t) = I$.
\end{itemize}
It follows that the infinite order model is
% Equivalent form
\begin{equation}
O_t = FB^{-1}(\mathbb{I}_{p}-\Phi)^{-1}\star\epsilon_t
\end{equation}
It is evident that, following our reasoning, one can still recover the mixed parameter space $K' = FB^{-1}(\mathbb{I}_{p}-\Phi)^{-1}$ up to scaling and permutation. However, the obtained matrices constitute a degree $3$ polynomial system that needs more constraints to cut out the individual parameter spaces.
\begin{aclaim}
    For the INAR model with instantaneous influences,~\autoref{mainthm1} is never sufficient to recover the entire generative model
    \end{aclaim}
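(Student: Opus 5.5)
The claim is a non-identifiability statement, so the plan is to exhibit, for a generic instance of the INAR model with instantaneous influences, a positive-dimensional family of parameter tuples $(F,B,\Phi,U)$ that induce the same observational cumulants. Every object that \autoref{mainthm1} (through \autoref{intersect} and \autoref{GL_lemma}) can extract is then constant along this family, and the full generative model cannot be recovered.

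\textbf{Step 1 (what is recoverable).} By the same cumulant computation as in Eq.~\eqref{sum_cumulant}, with $H$ replaced by $B^{-1}(\mathbb{I}_p-\Phi)^{-1}$, the Fourier-domain cumulant $\kappa_d(O_t)$ decomposes into rank-one terms whose factors are the columns of $K'=FB^{-1}(\mathbb{I}_p-\Phi)^{-1}$. Under \autoref{Linear_idf_Assmption}, \autoref{intersect} and \autoref{GL_lemma} recover $K'$ up to $DP$, frequency by frequency. The key point is that all cumulants of $O_t$, of every order, depend on $(F,B,\Phi)$ only through $K'(\omega)$ and $\kappa(\epsilon)$. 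Hence the data determine nothing beyond the orbit of $K'$ under $\GL_1^p\times S_p$. In the nonlinear case the same holds with $F$ replaced by $J_f$.

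\textbf{Step 2 (the fibre is positive-dimensional).} Consider the map $\mu:(F,B,\Phi(\omega))\mapsto FB^{-1}(\mathbb{I}_p-\Phi(\omega))^{-1}$. For any $G\in\GL_p$ commuting with the frequency-dependent factor, for instance $G$ in the centralizer of $\Phi(\omega)$ for all $\omega$, the substitution $F\mapsto FG$, $B\mapsto G^{-1}BG\cdot G^{-1}\!\!\;$ (more simply $B\mapsto BG$ after rewriting $K'=F(B^{-1})\cdots$) leaves $K'$ unchanged. Even without any commuting structure, the pair $(F,B)$ enters only through the product $FB^{-1}$, which is a constant matrix. So $(F,B)\mapsto(FC,CB)$ is invisible to the data for every $C\in\GL_p$, and the zero pattern imposed on $B$ still leaves a torus or larger group of such $C$.

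I will make this concrete by writing the fibre ideal $\langle F_{\mathscr{G}}B^{-1}_{\mathscr{G}}-K'_{\mathscr{G}}(\mathbb{I}_{2p}-\mathcal{M}_{\mathscr{G}})\rangle$ in the augmented form of \autoref{equivalentG}. Its generators are cubic (degree $3$) in the indeterminates. A Jacobian-rank count, in the style of the dimension count of \autoref{largedim}, should then show that its variety has dimension at least $\dim\{C\}-(\text{constraints from the structure of }B)>0$. Because the $p$ environments in condition~(\ref{cond3}) act only on $\Phi$, they cut down the $\mathcal{M}^{(k)}$ directions but not the $C$-directions, so the dimension remains positive.

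\textbf{Step 3 (conclude).} A positive-dimensional fibre contradicts the zero-dimensionality characterization used in the proof of \autoref{mainthm3}. Hence the conditions of \autoref{mainthm1} never suffice once a non-diagonal $B$ is present.

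The main obstacle is Step~2, specifically showing that the admissible $C$ are not forced to be diagonal once $B$ is normalized (for example, unit diagonal and acyclic). A normalized $B$ together with $F$ generic might cut the $C$-orbit down to $\GL_1^p$, which is exactly the trivial indeterminacy, and would then defeat the claim. To rule this out, I would first check $p=2$ explicitly, counting the unknowns in $(F,B)$ against the $np$ entries of $FB^{-1}$. My expectation is that $np+|e(B)|>np$ already yields a fibre of dimension $|e(B)|>0$ whenever $B$ has at least one off-diagonal edge and $F$ is unconstrained. I would then generalize this count.
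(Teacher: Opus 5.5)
Your plan follows the paper's own route. The paper likewise reuses the cumulant argument to recover only $K'=FB^{-1}(\mathbb{I}_p-\Phi)^{-1}$ up to $DP$, and then merely asserts that the resulting cubic system ``needs more constraints''. Your Step~2 is where this would actually be proved, and as written it fails in two places.

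First, the action $(F,B)\mapsto(FC,CB)$ does not fix $FB^{-1}$. Indeed $FC(CB)^{-1}=FCB^{-1}C^{-1}$, which equals $FB^{-1}$ only for $C$ commuting with $B$. The invariant action is $(F,B)\mapsto(FC,BC)$, and the centralizer-of-$\Phi(\omega)$ sentence should be dropped. Second, you leave open exactly the point on which the claim hinges: whether normalizing $B$ collapses the admissible $C$ to the trivial group. You propose to settle it by a Jacobian count in the style of \autoref{largedim}, but no such count is needed.

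Parametrize the fibre by $B'$ instead of $C$. For every $B'$ with the same normalization and support as $B$, put $F':=FB^{-1}B'$ (for nonlinear mixing, $f':=f\circ(B^{-1}B')$). Then $F'B'^{-1}=FB^{-1}$, so $K'^{(k)}$ is unchanged in every environment. Since $O_t$ is $K'$ convolved with $\epsilon_t$, the whole observational law is unchanged, not just its cumulants. Moreover, $F'$ is still of full rank and generic, being an invertible linear change of coordinates of $F$.

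Now use that $F$ has full column rank. The pair $(F',B')$ agrees with $(F,B)$ up to the trivial indeterminacy only if $F'=F\Pi$ with $\Pi$ monomial, i.e.\ $B'=B\Pi$. For each permutation the unit-diagonal condition pins down the scaling, so this holds for at most $p!$ such $B'$. Hence the fibre modulo $\GL_1^p\times S_p$ has dimension at least $|e(B)|>0$ whenever $B$ is non-diagonal, which is exactly your guessed count. This is non-identifiability by definition, so Step~3 need not invoke \autoref{mainthm3}.

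One caveat: you inherit the paper's reduced form. Solving $BX_t=\sum_\tau A_\tau X_{t-\tau}+u_t$ actually gives $F(B-\Phi)^{-1}=FB^{-1}(\mathbb{I}_p-\Phi B^{-1})^{-1}$. There the same family works only after adding $\Phi^{(k)}\mapsto\Phi^{(k)}B^{-1}B'$, and your assertion that the environments leave the $C$-directions untouched must then be checked. It holds if the lagged kernels are unconstrained, or if each environment alters only rows of $\Phi$, since right multiplication preserves row supports. It can fail under a known zero pattern: a diagonal invertible $\Phi$ forces $B^{-1}B'$ to be diagonal, hence $B'=B$.
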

\section{Extended Identifiability Results }\label{extentded_proof}

\subsection{Existence of hierarchy minimality}

We have shown that causal disentanglement under INAR ($\infty$) is guaranteed by~\autoref{linear_idf} and~\autoref{mainthm1}. This problem is then reduced to finding the minimal cumulant hierarchical structure (complexity) and searching $d$ to minimally achieve this complexity.  As an inspiration, we also show that the algebraic geometry properties uniquely determine the minimality of cumulant complexity. Therefore, the Identifiability of latent causal structure can be controlled from the geometric perspective. Then, we present several demonstrations to find the minimal complexity under any data manifolds. 
\begin{remark}
    In this section, we tentatively ignore our weak convergence class and consider all generalized situations where weak convergence is no longer required. As a simple demonstration, we consider linear and full-rank polynomial mixing of a latent causal time process driven by any noise family as a variation of \autoref{linear_idf}.
\end{remark}
We first discuss a spectrum of variations for different noise processes as a path-wise nondifferentiable process or a semi-martingale. We recall two basic identification theories and elucidate our theorem spans them strictly by finding different hierarchy minimality. 
\begin{proposition}[further identifiability under special noise ]\label{mainprop1}
    Given a sequence of stochastic processes defined in \autoref{mainthm1}, the latent causal representation $\mathcal{G}$ is identifiable up to the sign flipping and a permutation if and only if the noise follows a Laplacian distribution.
\end{proposition}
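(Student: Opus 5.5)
I would split the statement into its two directions, each reduced to the cumulant decomposition already established for \autoref{mainthm1}. From the proof of \autoref{intersect} and the truncated differential expansion in the proof of \autoref{mainthm1}, the order-$d$ cumulant of the observed increments has the form
\[
\kappa_d(\Delta \tilde f(Z_t)) = \sum_{s}\sum_{j=1}^{p}\kappa_d^{(j)}(\epsilon)\,\bigl(J_f H_{t-s}^{(:,j)}\bigr)^{\otimes d}.
\]
This form already pins down the columns $J_fH_{t-s}^{(:,j)}$ up to a column-wise scalar and a permutation, via \autoref{col:cumulant_cp_unique} and \autoref{GL_lemma}. Everything then turns on how large the scalar indeterminacy really is once the noise law is fixed. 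The natural normalization is unit variance. Under it, the residual scalars $c_j$ must satisfy $c_j^2=1$ over the reals, so $c_j=\pm1$. That is exactly the sign-flip-and-permutation class in the statement.

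For sufficiency (Laplacian noise gives identifiability up to sign and permutation), I would proceed as follows.
\begin{enumerate}
\item Check that a Laplace law has all even cumulants nonzero, and in particular $\kappa_4 = 3b^4 \neq 0$. This means condition~(\ref{cond2}) of \autoref{Linear_idf_Assmption} is met with $d=4$. The argument of \autoref{linear_idf}/\autoref{mainthm1} then recovers $K=J_f(\mathbb{I}-\Phi)^{-1}$ up to $DP$.
\item Use the symmetry of the Laplace law, which makes all odd cumulants vanish. So the only constraints on the $c_j$ come from even orders, namely $c_j^{2}=1$ (variance normalization) and $c_j^4\kappa_4 = \kappa_4$.
\item Conclude that $D$ reduces to $\mathrm{diag}(\pm1)$. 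The ideal argument of the proof of \autoref{linear_idf}, using the kernel DAG $\mathscr{G}$ and the zero-dimensionality lemma, then transfers this residual group to $\Phi$ and to the causal graph $\mathcal{G}$.
\end{enumerate}

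For necessity I would argue by contraposition, splitting into cases on the noise law.
\begin{itemize}
\item \emph{Gaussian noise.} All cumulants of order $\geq 3$ vanish, so condition~(\ref{cond2}) fails. The orthogonal-rotation invariance of the covariance then yields a positive-dimensional orbit, as in \autoref{largedim}, so $\mathcal{G}$ is not identifiable at all.
\item \emph{Non-Gaussian, non-Laplacian noise.} Here I would have to show that the indeterminacy class differs from $\{\pm1\}\times S_p$.
\end{itemize}

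The second case is the main obstacle, and I expect it may actually fail as stated. A skewed law has $\kappa_3\neq0$, which fixes $c_j^3=1$ and hence $c_j=1$. That gives a \emph{strictly finer} class, so "if and only if" is in tension with it. My first attempt would be to read "up to sign flipping" as "exactly the group $\{\pm1\}^p\rtimes S_p$ and no smaller." Under that reading, necessity becomes the claim that symmetric noise is required and that the Laplace law is singled out among symmetric laws. Two points are unresolved. I do not see how the model's MUTATE likelihood, whose optimum under an $\ell_1$-type prior is Laplace, characterizes Laplace here. And I do not see what excludes other symmetric non-Gaussian laws, which also give $\pm1$. If no such argument materializes, I would weaken the claim to sufficiency plus the Gaussian-exclusion direction.
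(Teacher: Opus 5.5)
Your sufficiency half is sound but takes a different route from the paper's, and your doubt about the converse is justified.

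The paper works at second order in the frequency domain. It borrows from Klindt et al.\ the claim that Laplacian noise forces the reconstruction map $h=\hat f\circ f^{-1}$ to be affine, so that spectral densities transform by the congruence $S_{\hat Z}=J_h S_Z J_h^{\top}$. It then treats only the case of a diagonal spectral density. From this it deduces three things in turn:
\begin{enumerate}
\item $\mathcal{F}\{(\mathbb{I}-\Phi)^{-1}\}$ is diagonal, so $\phi_{ij}\equiv 0$ for $i\neq j$;
\item $J_h$ is a permutation-scaling matrix;
\item $\Lambda_Z=J_h\Lambda_Z J_h^{\top}$ forces every squared scale to equal one.
\end{enumerate}
You instead obtain the permutation-scaling ambiguity from the higher-order CP uniqueness behind \autoref{linear_idf} and \autoref{mainthm1}, and remove the scale by variance normalization. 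The closing step is the same in both: a fixed diagonal second-order quantity pins $c_j^2=1$. The difference is what each route buys. Yours does not kill the cross-kernels, so it says something about a nontrivial $\mathcal{G}$, whereas the paper's diagonal case leaves only self-excitation. The paper's route is the only place the Laplace law itself, rather than mere symmetry, is invoked, through the borrowed affine-$h$ fact. A small slip: for Laplace$(0,b)$ one has $\kappa_4=12b^4$; the number $3$ is the excess kurtosis.

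For necessity, there is no idea in the paper that you are missing. Its proof stops after the Laplace-to-sign-flip direction and never addresses the converse. Your own argument already refutes the converse.
\begin{itemize}
\item Uniform noise on $[-a,a]$ is symmetric with $\kappa_4=-2a^4/15\neq 0$ and $\kappa_5=0$. It therefore passes every check in your sufficiency steps and yields exactly sign flips $\{\pm1\}^p$ together with permutations $S_p$.
\item Skewed noise gives $c_j=1$. That is a finer class, and under the usual meaning of ``up to'' it still satisfies the conclusion.
\end{itemize}
So the biconditional is false as stated. Your fallback is the right salvage: sufficiency for any symmetric law with a nonvanishing cumulant of order $\geq 3$, plus the Gaussian exclusion.

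For the Gaussian half, do not cite \autoref{largedim}; it concerns the nilpotent augmented matrix $\mathcal{M}_{\mathscr{G}}$, not a rotation orbit. Argue directly instead. Take $\Sigma_\epsilon=\mathbb{I}$ and any orthogonal $Q$. The substitution $(f,\Phi,U,Z)\mapsto(f\circ Q,\,Q^{\top}\Phi Q,\,Q^{\top}U,\,Q^{\top}Z)$ leaves the law of $O_t$ unchanged, because $Q^{\top}\epsilon_t$ is again white with identity covariance. This gives a positive-dimensional family along which the support of $Q^{\top}\Phi Q$, and hence $\mathcal{G}$, changes.
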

\begin{proposition}[reduced identifiability under linear mixing $F$, adapted from \cite{carreno2024linear}]\label{mainprop2}
    Given a sequence of stochastic processes defined in \autoref{mainthm1}, the latent causal structure $\mathcal{G}$  is identifiable up to scaling and permutation.
\end{proposition}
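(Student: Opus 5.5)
The plan is to treat this proposition as the special case of \autoref{mainthm1} in which the mixing is linear, $f(z)=Fz$. In that case the Taylor step in the proof of \autoref{mainthm1} becomes exact. The Jacobian $J_f\equiv F$ is constant and the remainder satisfies $\mathcal{R}(1)=0$. The truncated differential cumulant therefore coincides with the true cumulant, and we are back in the setting of \autoref{linear_idf}. Since the claim concerns only the causal structure $\mathcal{G}$, i.e.\ the support pattern of $\Phi$, and not the full parameter tuple $(F,\Phi,U)$, I expect the argument to need strictly less than the zero-dimensionality requirement in condition~(\ref{cond3}) for all parameters. What is needed is enough to pin down which entries of each $\Phi_{t-s}$ are nonzero.

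\textbf{Step 1: recover the mixed factors.} Using the expansion \eqref{sum_cumulant}--\eqref{CP-decomposition} with $F$ in place of $J_f$, the $d$-th cumulant $\kappa_d(O_t)$ is a sum of rank-one symmetric tensors $(FH_{t-s}^{(:,j)})^{\otimes d}$ weighted by $\kappa_d^{(j)}(\epsilon)$.
\begin{itemize}
\item Pass to the frequency domain as in \eqref{conv_decomp}.
\item Apply \autoref{intersect} together with the Kruskal condition of \autoref{col:cumulant_cp_unique}. This uses genericity of $F$ (condition~(\ref{cond1})) and non-vanishing cumulants (condition~(\ref{cond2})), and yields the columns $(k_j)_{\mathcal{F}}$ up to scaling and permutation.
\item Transfer back to the time domain with \autoref{GL_lemma}. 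This gives $FH_\tau$ up to a common $D_\tau P_\tau$ for every lag $\tau$.
\end{itemize}

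\textbf{Step 2: isolate $\Phi$ from $K=FH$.} Embed the lagged system into the kernel DAG $\mathcal{M}_{\mathscr{G}}$, whose only nonzero block is $\Phi$. Nilpotency gives $(\mathbb{I}_{2p}-\mathcal{M}_{\mathscr{G}})^{-1}=\mathbb{I}+\mathcal{M}_{\mathscr{G}}$. Across $p$ environments I would then form the differences $K^{(k)}_{\mathscr{G}}-K^{(0)}_{\mathscr{G}}$. Following \citet{carreno2024linear}, each difference has rank at most one, with column space spanned by a column of $F_{\mathscr{G}}$ and row support equal to the children of the intervened node.

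Because there are no instantaneous effects, every $j$ has $\mathrm{de}(j)=\mathrm{ch}(j)$, so $M[j]=\emptyset$ and the dimension formula in the last lemma of Section~\ref{proof_linear_idf} is zero. The linear system \eqref{polysystem} then determines $\Phi$ uniquely. Column scaling and relabeling by $D_\tau P_\tau$ do not change whether an entry is zero, so the edge set of $\mathcal{G}$ is recovered up to permutation. The edge weights, and then $F=K(\mathbb{I}-\Phi)$, come out up to scaling.

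\textbf{Main obstacle.} I expect the hardest step to be the rank-one difference argument in the lagged/Fourier setting. The scaling matrices $D_\omega$ could differ across environments and frequencies, which would break the subtraction $K^{(k)}-K^{(0)}$. I would first try to fix a common normalization by matching the columns unaffected by each intervention: all but one column coincide between environments, and this aligns $D$ and $P$. If that fails, I would fall back on using the finite-lag truncation (order $p$, as justified by weak convergence) to work with real time-domain factors directly.
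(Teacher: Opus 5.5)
Your argument is sound within the paper's framework, and it closes the same way the paper's does. The residual indeterminacy is the monomial (diagonal-times-permutation) action of \autoref{GL_lemma}, which sends zero kernel entries to zero, so the support of $\Phi$, and hence $\mathcal{G}$, is recovered up to relabeling.

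The route to that point is different. The paper specializes on the latent side. It treats a finite-order autoregressive latent model as a VAR($\infty$) model with $\Phi_\tau=0$ for $\tau\ge p$, so no weak-convergence step is needed. It then invokes the already-established VAR($\infty$) identifiability as a black box and notes that the zero lag-blocks survive the $\GL$ indeterminacy. You specialize on the mixing side instead. With $f=F$ the Taylor remainder $\mathcal{R}(1)$ vanishes and $J_f\equiv F$, so the linear degeneration in \autoref{injective_ifd_assumption} is exact. You thus land under \autoref{Linear_idf_Assmption} and then re-run the proof of \autoref{linear_idf}.

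Your version makes explicit why the ``linear mixing'' in the title is covered, which the paper's proof never spells out. The paper's version is shorter and covers an arbitrary finite lag order; your argument handles that only implicitly, since vanishing lag-blocks are just zero entries. Once you have reduced to \autoref{Linear_idf_Assmption}, you can replace your Steps~1--2 by a citation of \autoref{linear_idf}. That also moves your ``main obstacle'' (aligning $D_\omega,P_\omega$ across environments and frequencies) out of this proof and into \autoref{linear_idf}, where the paper does not treat it explicitly either. Note too that the saving you anticipate, needing less than zero-dimensionality for the support alone, is never used: you identify all of $\Phi$, just as the paper does.

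If you keep the obstacle discussion, the proposed fix contradicts your own Step~2. Since $(\mathbb{I}_{2p}-\mathcal{M}_{\mathscr{G}})^{-1}=\mathbb{I}+\mathcal{M}_{\mathscr{G}}$, the difference $K^{(k)}_{\mathscr{G}}-K^{(0)}_{\mathscr{G}}=F_{\mathscr{G}}\bigl(\mathcal{M}^{(k)}_{\mathscr{G}}-\mathcal{M}^{(0)}_{\mathscr{G}}\bigr)$ changes every column in the support of the modified row. By your own description, that is every child of the intervened node, so ``all but one column coincide'' holds only for single-edge interventions. A more reliable anchor is the lag-zero factor $FH_0=F$. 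With no instantaneous effects and a common $F$, this factor is identical in every environment.
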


Now, we focus on a more complicated but more useful scenario in which the noise process arbitrarily behaves. 

\begin{theorem}[identification on the limited noise support]
    \leavevmode
    \begin{itemize}
        \item[(1.)] Under \autoref{linear_idf},  the identifiability is guaranteed by finding $d_0$ to satisfy the minimal cumulant hierarchy proposed in \autoref{Linear_idf_Assmption}.
        \item[(2.)]  $d_0$ and $T$ is tractable.
        \item[(3.)] \autoref{Linear_idf_Assmption} is still both sufficient and necessary.
    \end{itemize}
\end{theorem}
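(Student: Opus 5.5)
Part (1) follows by rerunning the argument behind \autoref{linear_idf} with the noise kept general and the cumulant order left as a parameter $d$. Once the weak-convergence class is dropped, the latent process is $Z_t = H\star\epsilon_t$ with $H=(\mathbb{I}_p-\Phi)^{-1}$, and $\epsilon_t$ may have arbitrary (possibly limited) support. The multilinear expansion in Eq.~\eqref{sum_cumulant} and Eq.~\eqref{CP-decomposition} uses only mutual and temporal independence of $\epsilon$, not its law. So for every $d\ge 2$ the cumulant $\kappa_d(O_t)$ is still a sum of $p$ rank-one terms $\kappa_d^{(j)}(\epsilon)(FH^{(:,j)})^{\otimes d}$. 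Limited support only affects which coefficients $\kappa_d^{(j)}(\epsilon)$ vanish.

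I would define $d_0$ as the smallest order such that three conditions hold:
\begin{itemize}
\item[(i)] the set $D$ of condition~(\ref{cond2}) contains $d_0$;
\item[(ii)] at least $p$ of the tensors $\{\kappa_d(O_t)\}_{d\le d_0}$ are nonzero;
\item[(iii)] the Kruskal bound of \autoref{col:cumulant_cp_unique} holds at order $d_0$.
\end{itemize}
At $d=d_0$ the Veronese intersection argument of \autoref{intersect} gives finitely many points, so $K=FH$ is recovered up to scaling and permutation. \autoref{GL_lemma} transfers this to the time domain. The zero-dimensionality step (condition~(\ref{cond3}) with \autoref{equivalentG}) is unchanged, since it never refers to the noise.

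For (2), I would show that $d_0$ and the horizon $T$ are bounded. Because $m=\binom{n+d-1}{d}$ grows polynomially in $d$, the non-defectivity condition $p+(n-1)<m-1$ holds from a finite order $d^\ast(n,p)$ onward. The Kruskal inequality $\operatorname{krank}\ge\lceil(2p+d-1)/d\rceil$ becomes weaker as $d$ grows, so a generic $F$ satisfies it for all $d\ge d^\ast$. Hence $d_0\le \max(d^\ast,d_\epsilon)$, where $d_\epsilon$ is the first order at which the noise has $p$ nonvanishing cumulants. The search over $d$ is therefore a finite procedure: at each order, check vanishing of a tensor and a rank or krank condition.

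For $T$, the ideal $\mathcal{I}^\star$ must contain enough independent equations. \autoref{largedim} requires at least $3p^2-2p-1$ extra polynomials. Each time lag $\tau$ contributes a block of equations in Eq.~\eqref{polysystem}, so $T$ is of order $p^2$ divided by the number of equations per lag. This gives an explicit polynomial bound.

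For (3), sufficiency is exactly part (1). For necessity I would follow the contrapositive structure of \autoref{mainthm3}, with one case per condition:
\begin{itemize}
\item If condition~(\ref{cond2}) fails at every order, fewer than $p$ nonvanishing cumulant tensors exist. The linear span $\mathcal{W}$ then meets the Veronese variety $\mathcal{V}$ in a positive-dimensional set, so a continuous family of alternative factors $K$ yields the same observed cumulants.
\item If the rank condition~(\ref{cond1}) fails, the CP decomposition is non-unique, and the standard construction of an alternative decomposition applies.
\item If condition~(\ref{cond3}) fails, a positive-dimensional fiber of $\Theta$ maps to the same $K^{(k)}$.
\end{itemize}

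The main obstacle is necessity when the noise support is limited. Vanishing cumulants of one component could conceivably be compensated by higher orders or by cross-time cumulants, which would make the minimal hierarchy not sharp. To handle this, I would first show that cross-time cumulants $\kappa_d(O_{t_1},\dots,O_{t_d})$ carry the same coefficient $\kappa_d^{(j)}(\epsilon)$, as visible in Eq.~\eqref{CP-decomposition}. It follows that they add no new nonvanishing directions, and failure at all orders up to $d_0$ really leaves $\dim\mathcal{V}(\mathcal{I})>0$.
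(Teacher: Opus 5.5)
The paper gives no separate proof of this theorem; it only remarks that cumulant information propagates multilinearly through $F$ and is governed by the noise law. Your proposal therefore has to stand on its own, and the necessity half of (3) does not. Each of your three cases turns a sufficient condition into a necessary one without justification.
\begin{itemize}
\item Kruskal's inequality in \autoref{col:cumulant_cp_unique} is only sufficient. Failing it does not produce an alternative decomposition: symmetric tensors of order $d\ge 3$ are generically identifiable far beyond the Kruskal range. Likewise, a specific non-generic $F$ (e.g.\ one with rational entries) can give a perfectly unique decomposition.
\item For condition~(\ref{cond2}), the number of nonzero cumulant tensors across orders says nothing about $\mathcal{W}\cap\mathcal{V}$ at a fixed order $d$. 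There $\mathcal{W}$ is spanned by the $K_j^{\otimes d}$ with $\kappa_d^{(j)}(\epsilon)\neq 0$, and a single order at which all $p$ coefficients are nonzero already gives $p$ isolated points.
\end{itemize}
Your own sufficiency step never uses $d_0\in D$; it only uses nonvanishing noise cumulants and uniqueness at order $d_0$. Now take noise with no vanishing cumulant, such as centered Poisson increments (every cumulant of order $\ge 2$ equals the rate) or centered exponential noise. For generic $F$, $\kappa_{d+1}(O_t)\neq 0$ for every $d$, so $D=\emptyset$ and condition~(\ref{cond2}) fails. Yet your part (1) argument at $d=3$ (with $n\ge p$), together with condition~(\ref{cond3}), still identifies $\Theta$. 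So your parts (1) and (3) contradict each other. The ``only if'' cannot be obtained along these lines unless condition~(\ref{cond2}) is first reformulated, e.g.\ as the existence of an order at which all $\kappa_d^{(j)}(\epsilon)$ are nonzero.

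Part (2) has further gaps.
\begin{itemize}
\item Your bound $d_0\le\max(d^\ast,d_\epsilon)$ controls requirements (ii) and (iii) but not (i), namely $\kappa_{d_0+1}(O_t)=0$. In the example above no $d_0$ exists at all. Nothing forces a vanishing cumulant at large order either: by Marcinkiewicz's theorem a non-Gaussian law cannot have all cumulants vanish from some order on.
\item Your bound on $T$ counts equations per lag against the deficit in \autoref{largedim}. But Eq.~\eqref{polysystem} is indexed by environments $k$, not lags. Every additional lag also brings its own unknown kernel block along with its equations. No net drop in dimension follows unless you show the new equations are independent modulo the new unknowns.
\end{itemize}
Finally, the premise behind (1) and $d^\ast$, that $\kappa_d(O_t)$ is a sum of $p$ rank-one terms, misreads Eq.~\eqref{sum_cumulant}. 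That equation also sums over the lag $s$, giving $p$ times the number of lags summands with distinct factors $FH_{t-s}^{(:,j)}$. These do not merge, because $\sum_s a_s^{\otimes d}\neq(\sum_s a_s)^{\otimes d}$. For INAR$(\infty)$ there are infinitely many such summands, so neither the Veronese count with $p$ points nor the Kruskal inequality with $p$ components applies. This defect is inherited from the Fourier step in the proof of \autoref{intersect}, but your argument relies on it explicitly.
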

According to standard tensor algebra, a linear transformation \( F \) propagates cumulant information from the observed space to the latent space either via a multilinear transformation—when \( F \) is a linear map—or via a multi-polynomial mapping—when \( F \) is a full-rank polynomial function. In the latter case, the resulting system of polynomial equations grows combinatorially in complexity, reflecting the interaction between the polynomial structure and the higher-order cumulants of the latent variables. Notably, all such equations are governed by the same scale and distributional structure of the underlying noise process. 

\subsection{Proof of \autoref{mainprop1}}

We adopt a \textbf{frequency-domain transformation} to characterize both continuous-time causal influences and standard causal transitions. Importantly, this transformation is applied \emph{only in the latent space}, while the observables remain real-valued in the time domain, consistent with real-world data.

\paragraph{\textit{Remark}}  
\textit{In the frequency domain, the variable \( f \) serves as a continuous index that characterizes the spectral behavior of a signal. However, distributional shifts in this domain—especially those involving complex-valued structures—often manifest at fine-grained levels that standard likelihood-based density modeling fails to capture. Instead, statistical representations such as the \textbf{power spectral density} (PSD) and its higher-order extensions (e.g., bispectrum, trispectrum) provide more faithful characterizations of the distribution's structural dependencies across frequencies. These quantities are directly connected to the underlying cumulants of the signal and thus offer a natural multiscale lens to detect and interpret intervention-induced shifts. Unlike traditional criteria, such as sufficient variability, which are often coarse, higher-order cumulants and spectra enable more granular identification of structural changes at each statistical order.}

Let \( S_X(f) \) denote the power spectral density of a random time-domain signal \( X_t \), and define the autocorrelation function as
\[
R_X(\Delta) := \mathbb{E}[X_t X_{t + \Delta}].
\]
According to the Wiener--Khinchin theorem, \( S_X(f) \) is the Fourier transform of \( R_X(\Delta) \).

We apply this PSD analysis to the transformed latent variable defined as a convolution:
\[
Z_t^{\Delta} = K_t \star R_t,
\]
and analyze its structure in the frequency domain.
\begin{align}
    S_Z(f)&=S_{K\star R}(f)=||K(f)||_F^2S_R(f) \label{spr:1} \\
    &= ||K(f)||_F^2S_{\epsilon+u}(f) \notag\\
    &=||K(f)||_F^2(S_{\epsilon}(f)+S_u(f)) \notag\\
    &=||K(f)||_F^2\frac{1}{2\pi}\int_{G}^{}R_R(\Delta)e^{-i\Delta f}\ d\Delta \label{spr:1.1}
\end{align}
The first equation features the relation between the origin and the composite signal to which a filter kernel is applied.  One followed by the next two equations obtained by applying the Fourier transformation to the field $G$ that is separated into two cases: $\Delta = 0$ and otherwise. The result above is called the Power Spectrum Density matrix, each entry $[S_Z(f)]_{ij}$ describing the entire density information for all time lags $\Delta$.  $\delta(f)$ is a Dirac delta function and $ A^H$ is the Hermitian transpose. Our identifiability is built upon the reasoning in the latent space $\mathrm{PSD}$ and the reconstruction equivalence between $Z^{(\Delta)}_t$ and $\hat Z^{(\Delta)}_t$.  

We start with an overly simplified case indicated by Eq.\eqref{spr:6}. Note that the power spectrum density matrix serves as a representation of the distribution of the total power in different frequencies, and thus it has cross-spectrum components that capture interactions among different latent stochastic processes.  Let $S_Z(f)$ be a diagonal matrix such that all cross-spectra in the frequency domain disappear, such that
\begin{align}
    [S_Z]_{ij}=\sum\limits_{j=1}^{p}\sum\limits_{i=1}^{p}K_{ik}S_{R,kl}\overline{K_{lj}}=0 ,\ i\ne j
\end{align}
We begin our technical analysis by revisiting the change-of-variable transformation, a fundamental technique frequently employed in causal representation learning frameworks. Here, we derive its counterpart in the frequency domain.

\begin{fact}
Let $Z_t$ be a real-valued latent process and let $h:= \hat f \circ f^{-1}$ be the reconstruction map applied to $Z_t$.  If the noise is Laplacian-distributed, then the power spectral density (PSD) of the transformed differential process $dh^{-1}(Z_t):= d\hat Z_t $ satisfies:
\begin{align}
    S_{dh(Z)}(f) = J_{h^{-1}} S_{dZ}(f) J_{h^{-1}}^T, \label{spr:5}
\end{align}
where $J_h$ is the Jacobian of $h$.
\end{fact}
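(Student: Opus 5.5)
The plan is to reduce the claim to the elementary fact that a \emph{fixed} linear map acts on a second-order spectrum by congruence. I would make the transformed increment linear in $dZ_t$ by a first-order expansion, exactly as in the truncated differential process Eq.~\eqref{differential} in the proof of \autoref{mainthm1}. I would then transport the resulting covariance identity to the frequency domain through the Wiener--Khinchin theorem recalled above.

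First, I would write the increment of $\hat Z_t = h^{-1}(Z_t)$ by a Taylor (or It\^o-type) expansion:
\[
d\hat Z_t = J_{h^{-1}}(Z_t)\, dZ_t + \mathcal{R}_t,
\]
where $\mathcal{R}_t$ collects the second- and higher-order terms in $dZ_t$. The key intermediate claim is that $\mathcal{R}_t$ does not contribute to the autocovariance $R_{d\hat Z}(\Delta)=\mathbb{E}[d\hat Z_t\, d\hat Z_{t+\Delta}^{T}]$ at leading order. This is where the Laplacian assumption enters. Laplace noise has finite exponential moments, so all moments of $\epsilon_t$, and hence of $Z_t = H\star\epsilon_t$, are finite, with bounds uniform in $t$ under \autoref{Linear_idf_Assmption_2}. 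By \autoref{app:lem3}, the cross terms $\mathbb{E}[J\,dZ_t\,\mathcal{R}_{t+\Delta}^T]$ and $\mathbb{E}[\mathcal{R}_t\mathcal{R}_{t+\Delta}^T]$ are then $o(\Delta t)$ relative to the main term. The symmetry of the Laplace law would also be used to kill the odd-order (quadratic-in-increment) cross terms in expectation.

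Second, I would remove the state dependence of the Jacobian. Under wide-sense stationarity (\autoref{Linear_idf_Assmption_2}), I would treat $J_{h^{-1}}$ as evaluated along the stationary regime. One way is to condition on $\mathcal{F}_t$ and use the local-constancy of $J$ over an increment of size $\Delta t$. The other is the paper's convention of treating $J_f$ as a generic constant matrix over the algebraically closed field, as it does just before \autoref{proof_mainthm1}. With $J$ constant, bilinearity gives
\[
R_{d\hat Z}(\Delta) = J_{h^{-1}}\, R_{dZ}(\Delta)\, J_{h^{-1}}^{T}.
\]
Since $J_{h^{-1}}$ does not depend on the lag $\Delta$, taking the Fourier transform in $\Delta$ and using linearity yields Eq.~\eqref{spr:5}. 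Because $h$ is real-valued, $J^{T}=J^{H}$, which is consistent with the Hermitian form used in Eq.~\eqref{module_3}.

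The main obstacle is the second step. The Jacobian genuinely depends on $Z_t$, so in general $\mathbb{E}[J(Z_t)\,dZ_t\,dZ_{t+\Delta}^T J(Z_{t+\Delta})^T]$ does not factor. My first attempt would therefore be to show that the fluctuation $J(Z_{t+\Delta})-J(Z_t)$ is of order $dZ$, using $C^d$ regularity of $f$ and of $\hat f$. If that holds, its contribution joins the remainder $\mathcal{R}_t$ already controlled by the moment bounds. If this cannot be made uniform in $\Delta$, I would state the identity for the first-order linear degeneration of \autoref{injective_ifd_assumption} and read the Fact as holding at that order.
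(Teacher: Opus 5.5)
The gap is your second step. The power spectral density is the Fourier transform of $R_{d\hat Z}(\tau)$ over \emph{all} lags $\tau$. So you need
$\mathbb{E}[J(Z_t)\,dZ_t\,dZ_{t+\tau}^{T}J(Z_{t+\tau})^{T}]=J\,\mathbb{E}[dZ_t\,dZ_{t+\tau}^{T}]\,J^{T}$
with one fixed matrix $J$ at every $\tau$, and this fails for a state-dependent Jacobian:
\begin{itemize}
\item For a non-infinitesimal lag, $J(Z_{t+\tau})-J(Z_t)$ is of the size of $Z_{t+\tau}-Z_t$, not of $dZ$, so it cannot be absorbed into $\mathcal{R}_t$.
\item Even at $\tau=0$, a random $J(Z_t)$ produces an average of congruences $\mathbb{E}[J(Z_t)\,\Sigma\,J(Z_t)^{T}]$, which is generally not a single congruence $J\Sigma J^{T}$.
\end{itemize}
The local-constancy route therefore breaks down. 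Your two fallbacks do not rescue it: declaring $J$ constant by notational convention assumes the conclusion, and the first-order reading replaces the statement with an approximation. The remainder control in your first step is also unsupported. \autoref{app:lem3} only transfers boundedness from higher to lower moments and gives no smallness. Moreover, the increments of $Z^{\Delta}$ here are $O(1)$ jumps or noise draws, not infinitesimal, so the second-order Taylor terms are not negligible.

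The missing idea is the role the Laplacian hypothesis actually plays. In the paper's proof it is not used for moment bounds or symmetry. Instead, the paper appeals to the sparse-transition identifiability result of \citet{klindt_towards_2021}: Laplacian noise forces the reconstruction map $h$, and hence $h^{-1}$, to be affine. Then $J_{h^{-1}}$ is a genuine constant matrix and $d\hat Z_t=J_{h^{-1}}\,dZ_t$ holds exactly. Bilinearity gives $R_{d\hat Z}(\tau)=J_{h^{-1}}R_{dZ}(\tau)J_{h^{-1}}^{T}$ at every lag with no remainder, and the Fourier transform carries this to the claimed PSD identity. Once you insert the step ``Laplacian noise $\Rightarrow$ $h$ affine,'' your Taylor expansion and moment estimates become unnecessary, and your final step (Wiener--Khinchin plus linearity of the Fourier transform) coincides with the paper's.
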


\begin{proof}
From the definition of the power spectral density, we have:
\[
S_{Z}(f) = \frac{1}{2\pi} \int_{0}^{\infty} R_Z(\Delta) e^{-i \Delta f} \, d\Delta,
\]
where $R_Z(\Delta) := \mathbb{E}[Z_t Z_{t+\Delta}^T]$ denotes the autocorrelation function. For the transformed process $h(Z_t)$, its autocorrelation is:
\[
R_{h(Z)}(\tau) = \mathbb{E}[h(Z_t) h(Z_{t+\tau})^T].
\]

Applying the change-of-variable formula to the differential $dh(Z_t)$, we need:
\[
\mathbb{E}[dh(Z_t) dh(Z_{t+\tau})^T] = J_h \, \mathbb{E}[dZ_t dZ_{t+\tau}^T] \, J_h^T.
\]
That holds when the map $h$ is an affine, meaning $h^{-1}$ must be a linear map, a result adapted from ~\cite{klindt_towards_2021}.
Therefore, the autocorrelation function of the transformed variable is linearly related to that of $Z_t$ through the Jacobian, and so is the power spectral density via the Fourier transform. This proves the identity in Eq.~\eqref{spr:5}.
\end{proof}

By the derived $\mathrm{PSD}$ equivalence formula, a similar condition as in a regular time-domain probability space is satisfied if a learner encoder matches the distribution between the estimated and ground variable:
\begin{align}
      S_{\hat Z:=h(Z)}(f) =  S_{Z}(f) \label{spr:6}
\end{align}
Plug in all terms we have derived beforehand:
\begin{align}
    \mathcal{F}\{(I-\Phi_t)^{-1}\}(S_{\epsilon}+uu^T\delta(f)) \mathcal{F}\{(I-\Phi_t)^{-1}\}^{H}= J_hS_{Z}J_h^{T}
\end{align}
Getting each entry of this relation:
\begin{align}
   & \left [
    \begin{array}{ccc}
    K_{f,11} &  \dots & \dots  \\
    \vdots &  \ddots & \vdots\\
   K_{f,1p} & \dots & K_{f,pp}
\end{array}
    \right]\left [
    \begin{array}{ccc}
   S_{\epsilon}(f)_{11}+u_1^2\delta(f) &  \dots & \dots  \\
    \vdots &  \ddots & \vdots\\
  S_{\epsilon}(f)_{p1}+u_pu_1\delta(f)& \dots & S_{\epsilon}(f)_{pp}+u_pu_p\delta(f)
\end{array}
    \right]\left [
    \begin{array}{ccc}
    \overline{ K_{f,11}} &  \dots & \dots  \\
    \vdots &  \ddots & \vdots\\
    \overline{K_{f,1p}} & \dots & \overline{K_{f,pp}}
\end{array}
    \right]^{T}\\
    &= J_hS_{Z}J_h^{T}\label{spr:7}
\end{align}

where the $\overline{X(f)} $ is the complex conjugate of the spectrum with respect to $f$  and $\phi_{ij}$ the filter kernel function. Next, we focus on the identifiability of the ground true stochastic process $Z_t$.

If the immigrant parameter $u=0$ and the noise processes are white, that means their variance is not time-varying. Then $S_R$ is a diagonal matrix in that Eq.\eqref{spr:1.1} has a more explicit form:
    \begin{align}
    &||K||_F^2\frac{1}{2\pi}\int_{0}^{\infty}\mathbb{E}[R_tR_{t+\Delta}]e^{-i\Delta f}\ d\Delta \notag\\
    &= ||K||_F^2\frac{1}{2\pi}\int_{G}^{}\mathbb{E}[(\epsilon_t)\epsilon_{t+\Delta})]\delta(\Delta)e^{-i\Delta f}\ d\Delta \label{spr:2}\\
    &=||K||_F^2\frac{1}{2\pi}\{ \int_{G=0}^{}\mathbb{E}[(R_t)(R_{t+\Delta})]\delta(t)e^{-i\Delta f}\ d\Delta+ \int_{G\ne 0}^{}\mathbb{E}[(R_t)(R_{t+\Delta})]\delta(t)e^{-i\Delta f}\ d\Delta\} \label{spr:3}\\
    &= \mathcal{F}\{(I-\Phi_t)^{-1}\}\Sigma_{\epsilon_t}\mathcal{F}\{(I-\Phi_t)^{-1}\}^H \label{spr:4}
\end{align}
Therefore, to make the condition hold, the only solution is to make $K$ (consequently $K^H$) a diagonal matrix,  causing $K$ and $K^T$ should be a permutation scaling matrix. Namely, we have:
\begin{align}
    P_{\pi} \left [
    \begin{array}{ccc}
    K_{f,11} &  \dots & \dots  \\
    \vdots &  \ddots & \vdots\\
  0 & \dots & K_{f,pp}
\end{array}
    \right]\Sigma\left [
    \begin{array}{ccc}
    K_{f,11} &  \dots & \dots  \\
    \vdots &  \ddots & \vdots\\
  0 & \dots & K_{f,pp}
\end{array}
    \right]^HP_{\sigma}=J_hS_{Z}J_h^{T}
\end{align}
The inverse of $K$ is still a permutation scaling matrix, so it means $\mathcal{F}\{(I-\Phi_t)^{-1}\}$ also has only one non-zero value in each column and row.  A stochastic delayed process should at least be correlated to itself, so $[(I-\Phi_t)^{-1}_{f}]_{ii}\ne 0$ and make $I-\Phi_f$ a permutation scaling matrix only when its $(i,j)$ entry is 0. Therefore, $I-\Phi_f$ and its inverse $K$ are both diagonal and
\begin{align}
    \phi_{f,ij}=0,i\ne j  
\end{align}
which is equivalent to $\int_{G}^{}\phi(t)_{ij}e^{2\pi it}dt = 0$ and means $\phi(t)_{ij}=0$. By construction, the RHS should thus be diagonal, and so is $S_{\hat Z}$.  It reduces the condition to:
\begin{align}
    \Lambda_Z= J_h\Lambda_{Z} J_h^{T}
\end{align}
We can always left multiply $J_h^{-1}$ and right multiply $J_h^{-T}$ to get $J_h^{-1}\Lambda_Z J_h^{-T}=\Lambda_Z$. Since $\Lambda_Z$ is a diagonal matrix, then $J_h$ must not have more than one non-zero element, indicating $J_h$ can be written as $P_{\sigma}\mathrm{diag}(k_1,k_2,k_3,\dots,k_p)$ and thus a mixing of permutation and a scaling matrix. We now show that the identifiability can be further improved. Since we have established that $K_fS_RK_f^H = J_hK_fS_RK_f^HJ_h^T$ and fixed both sides to be diagonal. If we first consider a real permutation scaling matrix of order 2, then we have:
\begin{align}
    P_{\pi}D(K_{11},K_{22},K_{33})\Lambda (\sigma_{11},\sigma_{22},\sigma_{33})D(K_{11},K_{22},K_{33})P_{\sigma}=Q\Sigma Q^T
\end{align}
Since $P_{\sigma}=P_{\pi}^T$, we also get $PD\Lambda DP^T=(PD)\Lambda(DP^T)=(DP)\Lambda(P^TD)=D(P\Lambda P^T)D$.  The resulting matrix is a similarity transformation that scales by the value of $K_{ii} $ each entry in the original noise spectrum density matrix. This relationship still holds when considering a complex-valued spectrum density matrix as in our setting, because what only needs to be changed is to replace $a$ with $a+bi$, resulting in each entry of $\Lambda$ scaled by $(a+pi)\overline{(a+pi)}=a^2+p^2=A,(b+mi)\overline{(b+mi)}=b^2+m^2=B$, and similarly $C$:
\begin{align}
      &\left [
    \begin{array}{ccc}
    a+pi &  0 &  0 \\
    0 &  0 & c+qi\\
  0 & b+mi& 0
\end{array}
    \right]\left[\begin{array}{ccc}
    \sigma^2_{11} &  0 &  0 \\
    0 &  \sigma^2_{22} & 0\\
  0 & 0&\sigma^2_{33}
\end{array}\right]\left [
    \begin{array}{ccc}
    a-pi &  0 &  0 \\
    0 &  0 & b-mi\\
  0 & c-qi& 0
\end{array}
    \right]\\
    &=\left[\begin{array}{ccc}
    A\sigma^2_{11} &  0 &  0 \\
    0 &  C\sigma^2_{33} & 0\\
  0 & 0& B\sigma^2_{22}
\end{array}\right]
\end{align}
We know the fact that $J_h$ is a permutation scaling matrix such that $\Lambda=J_h\Lambda J_h^T$. This means the Jacobian cannot change the values in the main diagonal. Leveraging a permutation scaling matrix magnifies the main diagonal entry and reorders the elements. Let the entry of the permutation scaling Jacob be $A,B$, and $C$. We can conclude $A^2=B^2=C^2=1$. This leads to the component-wise identifiability up to a permutation and sign flipping. The permutation is due to the random labeling of each process, for example, letting $K_{11}$ to $K_{22}$, which does not change the diagonal form of the matrix but just re-numbers the process.

\subsection{Proof of \autoref{mainprop2}}

As one of the most interesting results for identifiability under a generic map $f$, we show our theorem covers the autoregressive model as a special case but requires less strict conditions on $f$. We adapt notations from~\citep{yao_temporally_2022} and recall some important notations for $f$, $g$ and latent variable $z_t=\{z_t^i\}_{i\in p}$:
\begin{align}
    x_t = g(z_t); z_{t}^i= f_i(\{z_{j,t-\tau}|z_{j,t-\tau} \in \textbf{Pa}(z_{it})\},\epsilon_{t}^i,\theta_r^c,\theta_r^o)
\end{align}
where $f$ is a generic map and $g= \sum\limits_{\tau=1}^{p}B_{\tau}Z_{t-\tau}+ E_t$ is an autoregressive causal model of order $\tau$ (i.e., conditional independence holds for every $t-\tau$ time stamps). Since it is a VAR ($p$) model, no convergence is needed for identification of the latent model. 

To connect our work to a sufficient number of prior works on causal representation learning and time-delayed causal models, we present an analogue, time-varying time series and filter systems to standard autoregressive time-delayed causal models.  We view a continuous-time series $X_t$ as a source signal passed to a filter $A(L)$, which is a real-valued matrix or matrix-value functions, to generate a new signal series:
\begin{align}\label{eq1}
    Y_t = A(L)X_t + B_t
\end{align}
that can be expanded more explicitly as $Y_t = \sum\limits_{k=1}^{T}A_kX_{t-k}+ B_t$. 
$A_k$ is a matrix whose values may vary with time, and $B_t$ is another random, noisy process. In many applications, $B_t$ represents a Brownian motion or a stationary $I(0)$ Poisson process.  These processes are widely considered in stochastic differential equation systems that reflect the random motion of a physical object, such as particles or molecules. One more special case is to choose $B_t:= \mathrm{Bernoulli}(p,q)$ which changes $Y_t$ into an integer-value autoregressive process ($\mathrm{INAR}$) accompanied with a thinning operator $A\circ Y$~\citep{weis_thinning_2008}.  No dependence between the filter matrix and time leads to a reduced form of autoregressive models. 

Our theory supports VAR models with any fixed time lag by identifying the VAR($\infty$) first and letting $\tau_{\ge p}=0$. Since the second step of our proof induces a $\GL$ representation and thus $0$ will always be mapped to the original $0$ in the kernel, completing the proof.

\section{Detailed MUTATE Configuration}

\subsection{Neural PSD}

From this section, we assume the vector embedding has been obtained from the encoder.  Recall that the Neural PSD requires an independent noise embedding to apply the group effect $G$. Therefore, we now focus on the case in which non-Gaussianity does not hold throughout the time process. When the noise is Gaussian, $\kappa_d(R_t) = 0$ for all $d \ge 3$, which leads to any variety $\kappa_d(G,R_t)$ to a zero tensor $\mathbf{0} \in \mathbb{R}^{p^{\times d}}$. The solution of decomposition is infinite, thus $G$ cannot be recovered up to a column scaling and permutation, nor can the latent transition graph $\mathcal{G}$.  We argue that preserving only the $d$- order cumulant of order $d \le 2$ is a minimal building block for identification.  $\kappa_2 (R_t)$ is an order $2$ spectrum matrix, which is what the Neural PSD module suggests, as established in~\autoref{Neural PSD}.
\begin{corollary}[Identifiability via Wilson Spectral Factorization]\label{Neural PSD}
Let $\mathcal{S}^{(3)} \in \mathbb{C}^{d} \otimes \mathbb{C}^{d} \otimes \mathbb{C}^{T}$ denote the third-order tensor obtained by stacking the order-2 power spectral density (PSD) matrices $\{S_{\epsilon_s}\}_{s=1}^T$ along the frequency index.
Assume that for each frequency $s$, the PSD slice admits a Wilson spectral factorization
\[
\mathcal{S}^{(3)}_{:,:,s}
=
\tilde G_s \tilde G_s^{H},
\]
where $\tilde G_s = G \Sigma_R^{1/2}$, and the matrix-valued function $G(\omega)$ is analytic and outer (minimum-phase) in the frequency domain.

Then the spectral factor $G$ is uniquely identifiable up to a unitary constant factor, and consequently its columns are identifiable up to permutation and scaling.
\end{corollary}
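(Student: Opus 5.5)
The proof rests on the classical uniqueness theory of spectral factorization for positive-definite matrix-valued spectral densities (Wiener--Masani, with Wilson's algorithm as the constructive form). The argument runs frequency by frequency and then glues the pieces using analyticity.

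\textbf{Step 1: reduce to a comparison of two minimum-phase factors.} Suppose two factors $G$ and $G'$ are both analytic and outer, and suppose they produce the same slices:
\[
G(\omega)\Sigma_R G(\omega)^H = G'(\omega)\Sigma_R' G'(\omega)^H
\]
for every frequency index $s$. Assume the PSD is nonsingular almost everywhere, which holds when $\Sigma_R \succ 0$, and assume $\log\det$ is integrable (the Szeg\H{o} condition). Then $\tilde G_s$ and $\tilde G'_s$ are invertible for almost every $\omega$.

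Define $W(\omega) := \tilde G(\omega)^{-1}\tilde G'(\omega)$. The slice equality gives $W(\omega)W(\omega)^H = \mathbb{I}$, so $W$ is unitary on the unit circle.

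\textbf{Step 2: show that $W$ is constant.} This is the heart of the proof. Because $\tilde G$ and $\tilde G'$ are outer, $\tilde G^{-1}$ extends analytically to the disc, so $W$ is an analytic inner function. By the symmetric argument, $W^{-1} = W^H$ is also analytic inside the disc. Its boundary values are the conjugates of boundary values of a function analytic outside. The reflection principle, or Liouville applied after expanding $W$ in a Fourier series with only nonnegative and only nonpositive modes, then forces $W \equiv U$, a constant unitary matrix.

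This is the step I expect to be the main obstacle. It needs the Szeg\H{o}/Masani regularity to hold uniformly over the stacked slices $\{S_{\epsilon_s}\}_{s=1}^T$. For a finite stack of frequencies, I would first interpolate to a rational spectral density and then invoke the finite-dimensional (rational) version of the uniqueness theorem.

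\textbf{Step 3: pass from ``up to unitary'' to ``up to permutation and scaling.''} A unitary ambiguity in $G\Sigma_R^{1/2}$ is not yet a permutation-scaling ambiguity. Two further ingredients remove it.

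First, the noise components are mutually independent and the group action preserves this independence. So $U$ must map the independent source coordinates to independent coordinates. By non-Gaussianity (nonvanishing $\kappa_d$ for some $d \ge 3$, as in \autoref{Linear_idf_Assmption}), a Darmois--Skitovich-type argument, or equivalently the unique CP decomposition in \autoref{col:cumulant_cp_unique} applied to the stacked tensor $\mathcal{S}^{(3)}$, forces $U = DP$.

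Second, the stacking along frequency supplies a third mode. Jennrich-type uniqueness then applies whenever the diagonal slices $\Sigma_R(\omega_s)$ vary across $s$. This is the same Kruskal rank condition used in \autoref{intersect}, with $d=3$ and $k$-rank at least $2$ in each mode. Hence the columns of $G$ are fixed up to a permutation $P$ and diagonal scaling $D$.

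Finally, \autoref{GL_lemma} transports this indeterminacy back to the time domain unchanged.
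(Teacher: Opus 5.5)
Your Steps 1--2 reproduce the classical Wiener--Masani/Wilson result that outer factors are unique up to a constant unitary, which is exactly what the paper's proof invokes. The paper cites it; your inner-function sketch is fine when the density is known on the whole circle. For a finite stack, interpolating to a rational density does not help, because the interpolant is not determined by $T$ samples; you would need an a priori degree bound on $G$.

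The genuine gap is Step 3, and it cannot be closed from the stated hypotheses. The paper simply asserts that the shared unitary ``acts by column permutation and scaling''. You rightly doubt this, and in fact it is false. Take any constant unitary $U$ (orthogonal in the real case) and set $G' := G\,\Sigma_R^{1/2}U\Sigma_R^{-1/2}$. Then $G'$ is analytic and outer, and $G'(\omega_s)\Sigma_R\, G'(\omega_s)^{H} = G(\omega_s)\Sigma_R\, G(\omega_s)^{H}$ for every $s$, with the same diagonal $\Sigma_R$. Yet the columns of $G'$ are a permutation--scaling of those of $G$ only when $U$ is itself a scaled permutation. For Gaussian white noise, the compensating noise $\Sigma_R^{1/2}U^{H}\Sigma_R^{-1/2}\epsilon_t$ again has covariance $\Sigma_R$, hence independent components. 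So the two models are indistinguishable in full distribution, not just in $\mathcal{S}^{(3)}$.

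Neither of your repairs works within the corollary.
\begin{itemize}
\item \textbf{Jennrich/Kruskal step.} This needs slices of the form $A D_s A^{H}$ with a common factor $A$ and varying diagonal $D_s$ (the SOBI setting of colored sources). Here the slices are $G(\omega_s)\Sigma_R G(\omega_s)^{H}$: the outer factor varies with $s$, while the diagonal middle factor is constant because the driving noise is white. Also, \autoref{col:cumulant_cp_unique} concerns $\kappa_d$ with $d\ge 3$, not the second-order stack.
\item \textbf{Darmois--Skitovich/Comon step.} This is valid, but only after importing non-Gaussian noise \emph{and} access to its higher-order cumulants, i.e.\ data beyond $\mathcal{S}^{(3)}$. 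The paper places this corollary precisely in the regime where non-Gaussianity fails, and claims the conclusion is independent of the noise law. You would therefore be proving a different statement, one in which the CP argument of \autoref{intersect}, not Wilson factorization, does the work.
\end{itemize}
If you want a purely second-order result, you must add a normalization that removes $U$ altogether. For example, impose $G_0=\mathbb{I}$, which holds for $(\mathbb{I}-\Phi)^{-1}$ with a strictly causal kernel. The monic outer factor is then unique (Wold/innovations form). Within the family above this forces $\Sigma_R^{1/2}U\Sigma_R^{-1/2}=\mathbb{I}$, i.e.\ $U=\mathbb{I}$, and $G$ is identified outright.
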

\begin{proof}
We interpret the identifiability problem in terms of the action of a matrix-valued function group rather than via algebraic ideals.

For each frequency index $s$, the order-2 spectrum slice
\[
\mathcal{S}^{(3)}_{:,:,s} = S_{\epsilon_s}
\]
admits a factorization
\[
S_{\epsilon_s} = H_s \Sigma H_s^H,
\]
which is not unique due to the invariance under right multiplication by an all-pass matrix-valued function
\[
H_s \mapsto H_s U_s,
\qquad U_s^H U_s = I.
\]
Thus, second-order spectra alone determine $H_s$ only up to an equivalence class induced by this unitary gauge group.

Wilson spectral factorization resolves this non-identifiability by restricting $H(\omega)$ to be analytic and outer (minimum-phase). Under this constraint, the spectral factor is unique up to a constant unitary matrix independent of $\omega$. Consequently, the ambiguity is reduced from a frequency-dependent function group to a finite-dimensional constant unitary group.

Stacking the PSD matrices across frequencies,
\[
\mathcal{S}^{(3)} = \operatorname{con}(S_{\epsilon_1}, \dots, S_{\epsilon_T}),
\]
does not introduce new algebraic relations among the slices, but enforces consistency of the same spectral factor $H(\omega)$ across all frequencies. As a result, the residual ambiguity must be shared across all slices and therefore cannot depend on $s$.

This shared unitary ambiguity acts by column permutation and scaling on the factor $G = H \Sigma^{1/2}$. Hence, the columns of $G$ are identifiable up to permutation and scaling.

Finally, the construction is unaffected by the distribution of the noise $\epsilon_t$, as the Wilson factorization depends solely on the second-order spectral structure. Once $FH$ is fixed up to permutation and scaling, the induced action on higher-order cumulants leaves the associated equivalence class invariant.
\end{proof}

\subsection{Prior decomposition of time-adaptive module}\label{exp_MUTATE}

Without loss of generality, we consider non-finite steps for a latent stochastic generative process, as discussed in \autoref{prop:1_2}, where $\Delta t \rightarrow 0$. This induces an equivalence that the intrinsic history—the filtration $\mathcal{F}_t := \sigma\left( \bigcup_{0 < t < T} \sigma(Z_t^{\Delta}) \right)$—ensures that the process $Z_t^{(\Delta)}$ is $\mathcal{F}_t$-adaptive and measurable.

We decompose the ELBO objective as follows:
\begingroup
\setlength{\abovedisplayskip}{6pt}
\setlength{\belowdisplayskip}{6pt}
    \begin{align*}    
    \text{ELBO} &=  \log p(O) - D_{\text{KL}}(q_{\phi}(Z|O)\|p (Z)) \\
    & ~ = \mathbb{E}_{z\sim q(Z_t|O_t)}[\log p(O_t|Z_t)] +  \mathbb{E}_{z\sim q(Z_t|O_t)}\left[\log \frac{q(Z_t|O_t)}{p(Z_t)}\right] \\
    &= \mathbb{E}_{z\sim q(Z_t|O_t)}[\log p(O_t|Z_t)] - \mathbb{E}_{z\sim q(Z_t|O_t)}\left[\log q(Z_t|O_t) - \log p(Z_t) \right] \\
    &= \mathbb{E}_{z\sim q(Z_t|O_t)}\left[\log p(O_t|Z_t)-\log q(Z_t|O_t) \right] + \mathbb{E}_{z\sim q(Z_t|O_t)}\left[ \log p(Z_t) \right] \\
    &= \mathbb{E}_{z\sim q(Z_t|O_t)}\left[\log p(O_t|Z_t) - \log q(Z_t|O_t) \right] + \mathbb{E}_{z\sim q(Z_t|O_t)}\left[ \sum\limits_{\mathcal{F}_0^+}^{\mathcal{F}_T} \log p (Z_t^{(\Delta)}|\mathcal{F}_t) \right]
    \end{align*}

\endgroup
The reason we can segment the increasing filtration in the last term is due to the nice property of $\mathcal{F}_t$-measurable sequence.  We can show the filtration of $Z_t|Z_s,R_t$ and $Z_t|R_s$ is equal because it is well known that any $p$-order INAR sequence with stationary increments admits a moving average (MA) representation.  The further construction of their filtration $\mathcal{\tilde F}_t(\mathrm{resp.} R_{s<t})$ and  $\mathcal{F}_t(\mathrm{resp.} Z_{s<t},R_t)$ can show
\begingroup
\setlength{\abovedisplayskip}{4pt}
\setlength{\belowdisplayskip}{4pt}
\begin{align}
\mathcal{\tilde F}_t = \mathcal{ F}_t \notag
\end{align}
\endgroup
We prove the result in the sequel.  For $\mathcal{\tilde F}_t$, $Z_t$ is a measurable function for $s<t$. By causality of the convolution kernel $\Psi=(I-\Phi)^{-1}$ satisfying $\Psi_{\tau}=0$ for $\tau <0$, which indicates $Z_t \in \sigma(R_s:s<t)$. Then, we construct another filtration $\mathcal{\tilde F}_s:\sigma(R_u:u<s)$. By adaptivity, $\mathcal{\tilde F}_s:\sigma(R_u:u<s)\subseteq \mathcal{\tilde F}_t:\sigma(R_u:u<t) $. Therefore, $Z_s$ is also $\sigma(R_u:u<t)$-measurable. Since the minimal $\sigma$-algebra of the original $\mathcal{F}_t$ measurable function must be contained in its $\sigma$-algebra, we have $\sigma(Z_s)\subseteq \sigma (R_u:u<t)$ and $\sigma(\bigcup\limits_{s\le t}^{} \sigma(Z_s))\subseteq \sigma (R_u:u<t)$. For $\mathcal{F}_t$, $R_t=Z_t - \Psi\star Z_t$ so $R_t$ is $\sigma(Z_s:s\le t)$ measurable. Therefore, by a similar construction, it is evident that $\sigma(\bigcup\limits_{s<t}^{}\sigma( R_s))\subseteq \sigma(Z_s:s\le t)$. Therefore, because $\mathcal{\tilde F}_t \subseteq \mathcal{ F}_t$ and $\mathcal{F}_t \subseteq\mathcal{\tilde F}_t$, there must be $\mathcal{\tilde F}_t = \mathcal{ F}_t$. 

Following this set-up, the prior becomes:
\begingroup
\setlength{\abovedisplayskip}{4pt}
\setlength{\belowdisplayskip}{4pt}
\[
Z_t \mid \mathcal{F}_t \sim \mathcal{N} \left( 
\left[
\begin{smallmatrix}
u_1(t) \\
u_2(t) \\
\vdots \\
u_p(t)
\end{smallmatrix}
\right] 
\sum_{t' < t} (I - \Phi)^{-1},\;
\sum_{t' < t} (I - \Phi)^{-1} \Sigma_{t'} (I - \Phi)^{-T}
\right)
\]
\endgroup
The latents are generated by \(Z_t = (I-\Phi) \star R_t\), where \(R_t\) is modeled as isotropic Gaussian noise with mean \(U\) and variance \(\Sigma\). Note that the variance matrix $\Sigma_{Z_t}$ is zero for any $t-t'\ne 0$.  By Wiener-Khinchin Theorem~\citep{wiener1949}, we have the covariance matrix $C_{Z_t}(0) = \frac{1}{N}\sum\limits_{k=0}^{N-1}S_z(w_k)$, we drop the sub-index $Z_t$ whenever the context is clear. Now we can derive the decomposition of the convolution prior as
\begingroup
\setlength{\abovedisplayskip}{6pt}
\setlength{\belowdisplayskip}{6pt}
    \begin{align}
       &\mathbb{E}_{z\sim q(Z_t|O_t)}\left\{ \sum\limits_{\mathcal{F}_0^+,Z_t}^{\mathcal{F}_T} \log p (Z_t^{(\Delta)}|\mathcal{F}_t)\right\} \notag\\
&  =\mathbb{E}_{z\sim q(Z_t|O_t)}\left\{ \sum\limits_{\mathcal{F}_0^+,Z_t}^{\mathcal{F}_T} \log p\left[(I-\Phi_{t})\star \hat R_{t}^{(\Delta)}\right]\right\} = \mathbb{E}_{z\sim q(Z_t|O_t)}\left\{ \sum\limits_{\mathcal{F}_0^+,Z_t}^{\mathcal{F}_T} \log p\left[\int_{0}^{t}(I-\Phi_{t-t'}) \hat R_{t'}^{(\Delta)}\ dt \right]\right\} \notag\\
&= \mathbb{E}_{z\sim q(Z_t|O_t)}\left\{ \sum\limits_{\mathcal{F}_0^+,Z_t}^{\mathcal{F}_T} \log p\left[\mathcal{N}(\hat U_R, \sum\limits_{}^{} H_t^{-1}\Sigma_{\hat R_t'} H_t^{-T})\right]\right\} \notag \\
&=  \mathbb{E}_{z\sim q(Z_t|O_t)}\left\{ \sum\limits_{\mathcal{F}_0^+,Z_t}^{\mathcal{F}_T} \log p\left[\mathcal{N}(\hat U_R, \underbrace{\sum\limits_{}^{} H_t^{-1}(PSD_{\hat Z_t})\Sigma_{\hat R_t'} H_t^{-T}(PSD_{\hat Z_t}))}_{C_{p(Z_t)}(0)}\right]\right\} \notag\\
& =\mathbb{E}_{z\sim q(Z_t|O_t)}\left\{ \sum\limits_{\mathcal{F}_0^+,Z_t}^{\mathcal{F}_T} \log p\left[\mathcal{N}(\hat U\sum\limits_{t'<t}\underbrace{(I-\Phi(t-t'))^{-1}}_{(1-\Phi)(w_k)^{-1}\Sigma (1-\Phi)(w_k)^{-H}=S_{Z}(w_k)}, \frac{1}{N}\sum\limits_{k=0}^{N-1} S_{Z_t}(w_k))\right]\right\}\\
&=\mathbb{E}_{z\sim q(Z_t|O_t)}\left\{ \sum\limits_{\mathcal{F}_0^+,Z_t}^{\mathcal{F}_T} \log p\left[\mathcal{N}(\hat U\underbrace{\sum\limits_{\tau>0,w=0}(I-\Phi(\tau))^{-1}e^{-jw\tau}}_{\textnormal{the inverse Fourier at}~\ w=0}, \frac{1}{N}\sum\limits_{k=0}^{N-1} S_{Z_t}(w_k))\right]\right\} \notag \\
&=\mathbb{E}_{z\sim q(Z_t|O_t)}\left\{ \sum\limits_{\mathcal{F}_0^+,Z_t,N\in(N_0,T)}^{\mathcal{F}_T} \log p\left[\mathcal{N}(\hat U \mathrm{PSD}_{Z_t}(H(0)), \frac{1}{N}\sum\limits_{k=0}^{N-1} S_{Z_t}(w_k))\right]\right\} \label{final_ELBO}
    \end{align}

\endgroup

\subsection{Explicit control for convolution prior}
\begin{figure}[H]
  \centering
  \tdplotsetmaincoords{120}{230}

  \begin{tikzpicture}[tdplot_main_coords, scale=1.2]

    % 层数、纸张大小、Z轴间距
    \def\n{2}
    \def\dz{0.5}
    \def\dx{0.2}
    \def\xsize{3}
    \def\ysize{3}

    % 颜色列表（你可以替换为你喜欢的配色）
    \definecolor{pale1}{RGB}{255,235,238}
    \definecolor{pale2}{RGB}{248,225,235}
    \definecolor{pale3}{RGB}{232,234,246}
    \definecolor{pale4}{RGB}{224,242,241}
    \definecolor{pale5}{RGB}{255,249,196}
    \definecolor{pale6}{RGB}{225,245,254}
    \definecolor{mycyan}{RGB}{240,250,250}
    \def\colors{{"pale1","pale2","pale3","pale4","pale5","pale6","white"}}

    % 绘制彩色纸张
    \foreach \i in {0,...,\n} {
      \pgfmathsetmacro{\z}{\i*\dz}
      %\pgfmathparse{\colors[\i]} \edef\currentcolor{\pgfmathresult}
      
      \draw[fill= mycyan, draw=black!40, line width=0.5pt, rounded corners=1pt,opacity=0.5]
        (0,0,\z) -- (\xsize,0,\z) -- (\xsize,\ysize,\z) -- (0,\ysize,\z) -- cycle;

      % 标签可留可删
      %\node[tdplot_main_coords, anchor=south east, scale=0.9, text=black!70]
        %at (\xsize-0.1,\ysize-0.1,\z) {$t=\i$};
    }
    \pgfmathsetmacro{\xshift}{\n * \dx}
    \pgfmathsetmacro{\zshift}{\n * \dz}

     \foreach \i in {1,...,5} {
      \pgfmathsetmacro{\x}{\xshift + \i * \xsize/6}
      \draw[black, thin,dashed,-] (\x, 0, \zshift) -- ++(0, \ysize, 0);
    }

    % 在每列的顶部中央填入 Z 向量名
\foreach \i in {0,...,5} {
  \pgfmathsetmacro{\x}{\xshift + \i * \xsize/6 + \xsize/12}
  \pgfmathsetmacro{\y}{\ysize + 0.1}  % 高于纸面一点点
   \pgfmathtruncatemacro{\zindex}{\i + 1}
  \node[tdplot_main_coords, anchor=south, scale=0.8, draw=none] 
    at (\x, \y, \zshift){\( Z_{[{w_{\zindex}}]}^{d} \)};
}

    \draw[<->, thick] 
      (0,-0.3,0) -- (\xsize,-0.3,0)
      node[midway, below=2pt, scale=0.9, draw=none] {\( T=N\)};

    % 垂直方向 (y轴方向) 高度为 T
    \draw[<->, thick] 
      (\xsize + 0.3,0,0) -- (\xsize + 0.3,\ysize,0)
      node[midway, left=2pt, scale=0.9,draw=none] {\( T=N \)};

  \end{tikzpicture}
  \caption{Visually Time-adaptive PSD Computation}
\end{figure}
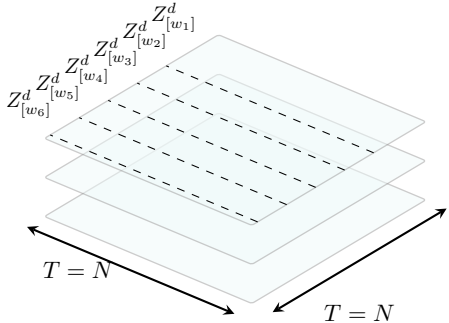

\paragraph{Remark.} The summation of kernel products and integrated noise variables is guaranteed to converge to the true time-adaptive process under $\mathcal{F}_t$, provided that the time discretization is sufficiently dense. The latent variable $Z_t^{\Delta}$ is sampled from the encoder distribution $q_{\phi}$ and passed to the PSD decomposition module to compute the frequency-domain representation of the full kernel matrix $F_{w}[1 - \Phi_t]$ and the power spectral density $S_{\hat R_t}$. 
We further remark that the key step, spectrum decomposition, is completed for the entire encoded trajectory $\hat{Z}_{t_0:T}$ , and the prior structure is ensured by segmenting filtration. This highlights a major difference from prior work, which recursively constructs an equal-length sliding window for each latent. Filtration segmentation can use causal masks, which a more expressive encoder leverages.  Note that transformer modules are not a required component for shorter sequences, i.e., $T<100$. However, when the sequence is extremely long, as in the conventional class of stochastic point processes, a transformer can be used in place of a standard MLP encoder to learn more expressive latent embeddings by leveraging filtration attention over arbitrarily long past events.

\paragraph{Overall training loss}~~To encourage sparsity in transferring kernels, we follow the widely-used penalty to jointly optimize:
\begin{align}
    \mathcal{L}_{Total}= \mathcal{L}-\lambda |G|-\omega \mathcal{L}_{\mathrm{PSD}}
\end{align}
This training objective ensures that the learned latent process is driven by a family of generalized white processes. In the Encoder-PSD flow, the decomposition is enforced by the prescribed isotropic noise, which omits any discriminator module, as used in ~\cite{yao_learning_2022}. The coefficients for the sparsity loss and PSD accuracy are treated as tunable hyperparameters.

\subsection{Comparison to methods of learning latent causal variables}

Throughout this paper, the identifiability guarantees hold for any generic map $f$. In particular, our framework allows multiple latent processes $Z_t^{i,j,k}$ to be mapped to a smaller number of observed variables $O_t^{s,m}$, as well as a single latent process $Z_t^i$ to be mapped to multiple observations $O_t$. We intend to compare our model with those that focus on recovering latent causal variables. Therefore, we also use an invertible mixing in simulations.

\subsection{Simulation regime}\label{detail_simulation}

We demonstrate the generative process for INAR equivalent classes.  For a fair comparison with baselines that mainly address step-wise conditional independence, we generate data for both time-step and denser dynamics by setting the kernel width to very short values, $\tau\in (0.001,0.01) $, corresponding to $ t-t'$. We generate stochastic point processes from three basic kernel response functions:
\begin{align*}
&\mathrm{\phi_{exponential}}(t)\ = \alpha e^{-\beta t'}, \alpha\sim \mathrm{uniform}(0.1,0.5) ~\mathrm{and}~ \beta\sim \mathrm{uniform}[0.5,2) \\~~
&\phi_{\mathrm{powerlaw}}(t) = \frac{\alpha}{(t + c)^{\beta}} \cdot \mathbf{1},\alpha\sim \mathrm{uniform}(0.5,1.2) , \beta\sim \mathrm{uniform}[0.1,0.8)~\mathrm{and}~ \gamma \in \mathrm{uniform}(1,3,1.8)\\
&\phi_{\mathrm{rectangular}}(t) = \frac{1}{T-T'} \cdot \mathbf{1}_{\{t' \leq T\}}
\end{align*}
The baseline intensity $u_0$ is sampled from $\mathrm{uniform}(0,1,0.2)$. All parameters of the basic kernel are uniformly sampled by ensuring $\alpha < \beta$  with exponential response, $\alpha < \gamma$ with a power-law response, respectively, to satisfy the stationary increment condition such that $|\phi|<1$. In simulations, we also consider two extreme cases for simple nonlinear intensity and nonparametric intensity. We construct the conditional intensity function by mixing latent features through a linear transformation followed by a non-linear activation. Specifically, we first compute a log-linear intensity using the expression
\[
\lambda_t = \log(1 + \exp(z_\ell - r_\ell[:, \Delta, :]))
\]
that ensures positivity and controls the scale of the output through a smoothed ReLU (i.e., softplus).In an alternative setting (\texttt{kernel == "np"}), we learn the intensity function using a small neural network (MLP): a two-layer perceptron with ReLU activation, followed by a Softplus activation to ensure positive outputs. This setup enables flexible, data-driven modeling of intensity dynamics beyond purely additive or linear forms. We define the mixing intensity function using a two-layer feedforward neural network with ReLU and Softplus activations. Formally, the architecture is given by:
\begin{equation}
\lambda_t = \sigma_+\left( W_2 \cdot \mathrm{ReLU}(W_1\lambda_t(l)  + b_1) + b_2 \right),
\end{equation}
where
\begin{itemize}
  \item $\lambda_t(l) \in \mathbb{R}^{d}$ is the input linear basic intensity at time $t$,
  \item $W_1 \in \mathbb{R}^{64 \times d},\; b_1 \in \mathbb{R}^{64}$ are the weights and bias of the first layer,
  \item $W_2 \in \mathbb{R}^{d \times 64},\; b_2 \in \mathbb{R}^{d}$ are the weights and bias of the second layer,
  \item $\sigma_+(x) := \log(1 + e^x)$ denotes the Soft-plus activation.
\end{itemize}

This design ensures the output $\lambda_t$ remains strictly positive and can model complex dependencies in the latent dynamics while maintaining numerical stability. 

We model the transformation from the latent variable \( Z_t \in \mathbb{R}^d \) to the observational space via a multi-layer mixing network. Specifically, for each layer \( l = 1, \dots, L-1 \), the transformation is given by \( Z_t^{(l)} = \mathbf{A}^{(l)} \cdot \sigma_{\text{leaky}}(Z_t^{(l-1)}) \), where \( \mathbf{A}^{(l)} \in \mathbb{R}^{d \times d} \) is an orthogonal mixing matrix and \( \sigma_{\text{leaky}} \) denotes the leaky ReLU activation with slope \( \alpha = 0.2 \). The initial input is \( Z_t^{(0)} = Z_t \), and the final output \( Z_t^{(L-1)} \) represents the observation-space signal. 

\section{Detailed Experimental results}

\begin{table}[htbp]
\centering
\scriptsize
\renewcommand{\arraystretch}{0.2} % 缩小行距
\setlength{\tabcolsep}{8pt} % 缩小列间距
\caption{Reporting best performance for each baseline}
\label{tab:performance_best}
\scalebox{1.0}{
\resizebox{1.0\linewidth}{2.5cm}{
\begin{tabular}{@{}lllccccc@{}}
\toprule
\textbf{Method} & \textbf{Metric} & \textbf{Kernel Ave.} & \textbf{Exp} & \textbf{Power.} & \textbf{Rect.} & \textbf{Nonlin.} & \textbf{Nonpar.} \\
\midrule \vspace{-2.5pt}
TDRL & 
\makecell[l]{MCC \\ $\mathcal{L}_{vae}$} & 
\makecell[l]{0.657\\\cellcolor{gray!20}\textbf{0.449}}& \makecell[c]{0.629\\\cellcolor{gray!20}\textbf{0.308}} & \makecell[c]{0.653\\\cellcolor{gray!20}\textbf{0.302}}& \makecell[c]{\textbf{0.773}\\0.302}& \makecell[c]{0.584\\0.871}  & \makecell[c]{\cellcolor{gray!20}\textbf{0.644}\\\cellcolor{gray!20}\textbf{0.461}} \\
\midrule \vspace{-2.5pt}
BetaVAE & 
\makecell[l]{MCC \\ $\mathcal{L}_{vae}$} & 
\makecell[l]{0.419\\9.480}& \makecell[c]{0.395\\8.538} & \makecell[c]{0.414\\7.533} & \makecell[c]{0.420\\8.424}& \makecell[c]{0.433\\11.683}  & \makecell[c]{0.433\\11.220} \\
\midrule \vspace{-2.5pt}
SlowVAE & 
\makecell[l]{MCC \\ $\mathcal{L}_{vae}$} & 
\makecell[l]{0.410\\362.890} & \makecell[c]{0.384\\395.107}& \makecell[c]{0.405\\448.105} & \makecell[c]{0.420\\452.472} & \makecell[c]{0.425\\238.520} & \makecell[c]{0.412\\280.247} \\
\midrule \vspace{-2.5pt}
PCL & 
\makecell[l]{MCC \\ $\mathcal{L}_{vae}$(train)} & 
\makecell[l]{0.440\\0.693} & \makecell[c]{0.469\\0.693} & \makecell[c]{0.379\\0.694} & \makecell[c]{0.430\\0.693} & \makecell[c]{\cellcolor{gray!20}\textbf{0.474}\\\cellcolor{gray!20}\textbf{0.693}} & \makecell[c]{0.449\\0.693} \\
\midrule \vspace{-2.5pt}
\textbf{MUTATE} & 
\makecell[l]{MCC \\ $\mathcal{L}_{vae}$} & 
\makecell[l]{\cellcolor{gray!20}\textbf{0.811}\\0.670}& \makecell[c]{\cellcolor{gray!20}\textbf{0.922}\\0.448}& \makecell[c]{\cellcolor{gray!20}\textbf{0.784}\\0.508} & \makecell[c]{\cellcolor{gray!20}\textbf{0.964} \\\cellcolor{gray!20}\textbf{0.253}}& \makecell[c]{\cellcolor{gray!20}\textbf{0.885}\\0.942} & \makecell[c]{0.501\\1.201} \\
\bottomrule
\end{tabular}
}
 }
\end{table}

\section{Extended Discussion}

\subsection{Connection to Three Latent Dynamic Processes}

The identifiability guarantee is built on the proper weak convergence to finite-dimensional distributions, which reflects a two-way path between each pair of processes.  We link them by the diagram of coupling and degeneration shown in \ref{fig:causal_hawkes}.

\subsection{Different Level Identification }

Although causal representation learning is often regarded as resolved through identifiability guarantees, a deeper understanding of identifiability itself has been largely overlooked in the current literature. Here, we emphasize the distinction among different identifiability objectives, each presenting unique challenges, and argue that identifiability can be categorized into three major lines of research. 

\paragraph{Identifying $\mathcal{G}$.}~Recovering the causal structure from observations is widely considered the most fundamental goal in causal representation learning—the very task that gave the field its name. Identifying $\mathcal{G}$ is central, as knowledge of the causal structure is often sufficient to uncover the underlying mechanisms, particularly for predicting post-intervention distributions. Given a set of representations for causal variables, the causal structure is fully recovered if the associated conditional independence constraints are uniquely determined. From a modeling perspective, the encoder outputs the distribution of a random variable $\hat{Z}$, which may not coincide exactly with the true causal variable $Z$. Nevertheless, this potentially “misaligned” representation still induces a valid causal structure, providing a principled way to decompose the observed distribution. Research along this direction is commonly referred to as latent causal structure learning~\citep{jiang_learning_2023,jin_learning_2023,NEURIPS2023_9d3a4cdf}.

\paragraph{Identifying latent $Z$.}~This objective is more ambitious than merely recovering the latent structure $\mathcal{G}$, as it requires an exact component-wise correspondence for each causal variable. In general, the assumptions necessary for identifying the full latent variables tend to be stronger and less realistic. To recover $Z$, we assume that the unknown mixing function $f$ is noiseless and diffeomorphic. Leveraging information from the entire distribution allows us to guarantee an exact alignment between the estimated and true causal variables, thereby ensuring that the Jacobian of the construction map $f^{-1}\circ \hat f$ exhibits a sparse, permutation-like form. Most of prior works follow this line of research; we just name several representatives of them~\citep{yao_learning_2022,zhang_causal_2024,song_temporally_2023,hyvarinen_unsupervised_2016}

\paragraph{Identifying full parameters and mixing $f$.}~Finally, we conclude this section by comparing parameter-level identification, the most challenging one,  to the two aforementioned goals. Note that identifying only $\mathcal{G}$ or $Z$,  under some reduced conditions, could overlap with full parameter identification since $\mathcal{G}$ must be obtained if all causal parameters are identified. If the mixing is invertible or injective, one can easily recover latent variables by simply recovering the mixing function $f$ from the observation.
However, when nothing is linear in both latent causal models and the mixing function, recovery of parameters means recovery of the entire generative model, which is an exacting task since the parameter space lives in an arbitrarily large ambient space but is not equipped with any closed form. 

\subsection{Discussion on nonparametric intensity}\label{app_npintensity}

For completeness in identifiability theory, we complement our Theorem 2 and Theorem 3 with a discussion of a stochastic process featured by a nonlinear, flexible conditional intensity. Note that for real-world applications, \autoref{mainthm1} and \autoref{mainprop2} suffice.  Following our reasoning, we can reformulate the nonlinear intensity process as follows:
\begin{align}
    O_t &= f(Z^{\Delta}_t)\\
    Z_t^{\Delta} &= \psi(\lambda_t ) + \epsilon_t
\end{align}
One may argue that it bears resemblance to the nonlinear time series process mostly addressed in~\cite{song_temporally_2023,yao_learning_2022,yao_temporally_2022} , such a model $z_t = f(\mathrm{Pa}(z_t),z_{t-1,j})+\epsilon_t$.  A conjecture is that the nonlinear mixing of linear intensity may or may not override the influence of the drastically increasing $\mathcal{F}_t$ up to the current sequence. Also, the intensity may be defined with an arbitrarily dense interval, which reflects the kernel effect. Therefore, we do not have the guarantee of strict identifiability for a rather long $\mathcal{F}_t$ -predictive process.  Using the spectrum method can be a beneficial direction in future work. 
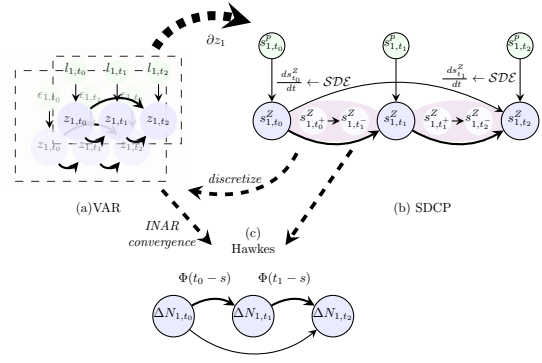
\begin{figure}
    \centering
    \begin{tikzpicture}[scale = 0.5, transform shape, node distance=1.1cm, align=center ]
        \node (Z1) [fill=blue!8,minimum size = 10mm, text height=0.6mm, text depth=0.2mm,inner sep = 0.2mm, draw=none, opacity = 0.5]{$z_{1,t_0}$};
        \node (Z2) [right of= Z1,fill=blue!8,minimum size = 10mm, text height=0.6mm, text depth=0.2mm,inner sep = 0.2mm, draw=none, opacity = 0.5,]{$z_{1,t_1}$};
        \node (Z3) [right of= Z2, fill=blue!8,minimum size = 10mm, text height=0.6mm, text depth=0.2mm,inner sep = 0.2mm, draw=none, opacity = 0.5]{$z_{1,t_2}$};
        \path[thick] (Z1) edge [out=-60,in= -130](Z2);
        \path[thick] (Z2) edge[out=-60,in= -130] (Z3);
        \path[thick] (Z1) edge[out=40, in=140] (Z3);
%add perturbance
         \node (p1) [fill=green!6, above of = Z1, minimum size = 2mm, text height=0.05mm, text depth=0.1mm,inner sep = 0.2mm, node distance = 1.3cm, draw=none,opacity = 0.5]{$\epsilon_{1,t_0}$};
          \node (p2) [fill=green!6, above of = Z2, minimum size = 2mm, text height=0.05mm, text depth=0.1mm,inner sep = 0.2mm, node distance = 1.3cm, draw=none,opacity = 0.5]{$\epsilon_{1,t_1}$};
           \node (p3) [fill=green!6, above of = Z3, minimum size = 2mm, text height=0.05mm, text depth=0.1mm,inner sep = 0.2mm, node distance = 1.3cm, draw=none,opacity = 0.5]{$\epsilon_{1,t_2}$};
        \path[] (p1) edge (Z1);
        \path[] (p2) edge (Z2);
        \path[] (p3) edge (Z3);
%add observation
%enclosed
\begin{pgfonlayer}{background}
  \node (block1)[dashed, fit=(Z1)(Z2)(Z3)(p1)(p2)(p3), rectangle, minimum width=0.2cm, minimum height=0.1cm, inner sep = 2mm, fill opacity = 0.2] {};
\end{pgfonlayer}
%add stochastic
 \node (L1) [at = (Z1), fill=blue!8,minimum size = 10mm, text height=0.6mm, text depth=0.2mm,inner sep = 0.2mm, draw=none, shift= {(0.7,0.7)}, opacity = 0.9]{$z_{1,t_0}$};
        \node (L2) [at = (Z2),fill=blue!8,minimum size = 10mm, text height=0.6mm, text depth=0.2mm,inner sep = 0.2mm, draw=none, shift= {(0.7,0.7)}, opacity = 0.9]{$z_{1,t_1}$};
        \node (L3) [at = (Z3), fill=blue!8,minimum size = 10mm, text height=0.6mm, text depth=0.2mm,inner sep = 0.2mm, draw=none, shift= {(0.7,0.7)}, opacity = 0.9]{$z_{1,t_2}$};
        \path[thick] (L1) edge [out=-60,in= -130](L2);
        \path[thick] (L2) edge[out=-60,in= -130] (L3);
        \path[thick] (L1) edge[out=40, in=140] (L3);
%add perturbance
         \node (l1) [fill=green!6, at = (p1) , minimum size = 2mm, text height=0.05mm, text depth=0.1mm,inner sep = 0.2mm, node distance = 1.3cm, draw=none, shift= {(0.7,0.7)},opacity = 0.9]{$l_{1,t_0}$};
          \node (l2) [fill=green!6, at = (p2), minimum size = 2mm, text height=0.05mm, text depth=0.1mm,inner sep = 0.2mm, node distance = 1.3cm, draw=none, shift= {(0.7,0.7)},opacity = 0.9]{$l_{1,t_1}$};
           \node (l3) [fill=green!6, at = (p3), minimum size = 2mm, text height=0.05mm, text depth=0.1mm,inner sep = 0.2mm, node distance = 1.3cm, draw=none, shift= {(0.7,0.7)},opacity = 0.9]{$l_{1,t_2}$};
        \path[] (l1) edge (L1);
        \path[] (l2) edge (L2);
        \path[] (l3) edge (L3);
\begin{pgfonlayer}{background}
  \node (block2)[dashed, fit=(L1)(L2)(L3)(l1)(l2)(l3), rectangle, minimum width=0.2cm, minimum height=0.1cm, inner sep = 0.5mm] {};
\end{pgfonlayer}

\begin{pgfonlayer}{background}
  \node (blockA)[fit=(current bounding box), fill = pink!20, rectangle, minimum width=0.2cm, minimum height=0.1cm, inner sep = 1mm,opacity = 0.2, draw=none] {};
\end{pgfonlayer}

%right
\node (S1) [right = 2cm of L3, fill=blue!8,minimum size = 10mm, text height=0.6mm, text depth=0.2mm,inner sep = 0.2mm]{$s^Z_{1,t_0}$};
        \node (ib2) [fill=blue!2, right of = S1, minimum size = 1mm, text height=0.1mm, text depth=0.3mm,inner sep = 0.2mm, draw=none]{$s^Z_{1,t_0^+}$};
        \node (ib1) [fill=blue!2, right of = ib2, minimum size = 1mm, text height=0.1mm, text depth=0.3mm,inner sep = 0.2mm,draw=none]{$s^Z_{1,t_1^-}$};
        \path[] (ib2) edge (ib1);
        \node (S2) [right of= ib1,fill=blue!8,minimum size = 10mm, text height=0.6mm, text depth=0.2mm,inner sep = 0.2mm]{$s^Z_{1,t_1}$};
         \node (ib4) [fill=blue!2, right of = S2, minimum size = 1mm, text height=0.1mm, text depth=0.3mm,inner sep = 0.2mm, draw=none]{$s^Z_{1,t_1^+}$};
        \node (ib3) [fill=blue!2, right of = ib4, minimum size = 1mm, text height=0.1mm, text depth=0.3mm,inner sep = 0.2mm, draw=none]{$s^Z_{1,t_2^-}$};
        \path[] (ib4) edge (ib3);
        \node (S3) [right of= ib3, fill=blue!8,minimum size = 10mm, text height=0.6mm, text depth=0.2mm,inner sep = 0.2mm]{$s^Z_{1,t_2}$};
        \path[thick] (S1) edge [out=-30,in= -150](S2);
        \path[thick] (S2) edge[out=-30,in= -150] (S3);
        \path[] (S1) edge[out=28, in=152] (S3);
%add perturbance
         \node (s1) [fill=green!6, above of = S1, minimum size = 2mm, text height=0.05mm, text depth=0.1mm,inner sep = 0.2mm, node distance = 2cm]{$s^p_{1,t_0}$};
          \node (s2) [fill=green!6, above of = S2, minimum size = 2mm, text height=0.05mm, text depth=0.1mm,inner sep = 0.2mm, node distance = 2cm]{$s^p_{1,t_1}$};
           \node (s3) [fill=green!6, above of = S3, minimum size = 2mm, text height=0.05mm, text depth=0.1mm,inner sep = 0.2mm, node distance = 2cm]{$s^p_{1,t_2}$};
        \path[] (s1) edge (S1);
        \path[] (s2) edge (S2);
        \path[] (s3) edge (S3);
%enclosed
\begin{pgfonlayer}{background}
  \node (block1)[fit=(ib2)(ib1), fill=violet!10,ellipse, minimum width=0.2cm, minimum height=0.1cm, inner sep = 0mm, draw = none] {};
  \node (block2)[fit=(ib4)(ib3), fill=violet!10,ellipse,minimum width=0.2cm, minimum height=0.1cm,inner sep =0mm,draw = none] {};
\end{pgfonlayer}
%add stochastic
 \node (s)[above of = ib2, draw=none]{$\frac{\ d s^Z_{t_0}}{\ dt}\leftarrow \mathcal{SDE}$};
 \node (s)[above of = ib3, draw=none, shift={(0,0.1)}]{$\frac{\ ds^Z_{t_1}}{\ dt}\leftarrow \mathcal{SDE}$};
 %add enclose
 \begin{pgfonlayer}{background}
  \node (blockB)[fit=(S2)(S1)(S3)(s1)(s2)(s3)(s)(ib1)(ib2)(ib3)(ib3)(block1)(block2),rectangle, minimum width=0.2cm, minimum height=0.1cm, inner sep = 1mm,draw=none] {};
\end{pgfonlayer}
\node (c1) [right = 0.2cm of blockA, rectangle, minimum size = 2mm, text height=0.05mm, text depth=0.1mm,inner sep = 0.3mm,shift={(0.2,2)},draw=none, text height = 0.1mm]{$\partial z_1$};
\node (c2) [below = 3.5cm of c1, rectangle, minimum size = 2mm, text height=0.05mm, text depth=0.1mm,inner sep = 0.3mm,shift= {(0.5,0)},draw=none, text height = 0.1mm]{\textit{discretize}};
\path[->, thick, bend left,line width=1.2mm,dashed](blockA) edge [out = 45,in = 150] (blockB);
\node (newstart) [below = 0.2cm of ib2, draw=none,shift={(-0.3,0)}]{};
\path[->, bend left,line width=0.5mm,dashed](newstart) edge [out = 45,in = 150] (blockA);

%caption
\node (t1) [below = 0.2cm of blockA,rectangle, draw=none]{(a)VAR};
\node (t2) [right = 7cm of t1,rectangle, draw=none]{(b) SDCP};

%block C: point process:hawkes process
        \node (N1) [below = 2cm of t1, fill=blue!8,minimum size = 10mm, text height=0.6mm, text depth=0.2mm,inner sep = 0.2mm, shift= {(2,0)}]{$\Delta N_{1,t_0}$};
         \node (N2) [right = 1cm of N1,fill=blue!8,minimum size = 10mm, text height=0.6mm, text depth=0.2mm,inner sep = 0.2mm]{$\Delta N_{1,t_1}$};
        \node (N3) [right = 1cm of N2, fill=blue!8,minimum size = 10mm, text height=0.6mm, text depth=0.2mm,inner sep = 0.2mm]{$\Delta N_{1,t_2}$};
        \path[thick] (N1) edge [out=30,in= 150](N2);
        \path[thick] (N2) edge[out=30,in= 150] (N3);
        \path[] (N1) edge[out=-40, in=-140] (N3);

        \node (k1) [right = 0.1cm of N1, shift= {(-0.7,1)},draw=none]{$\Phi(t_0-s)$};
        \node (k2) [right = 0.1cm of N2, shift= {(-0.7,1)},draw=none]{$\Phi(t_1-s)$};
     \begin{pgfonlayer}{background}
  \node (blockC)[fit=(N1)(N2)(N3)(k1)(k2),rectangle, minimum width=0cm, minimum height=0.1cm, inner sep = 0mm,draw=none] {};
  \path[->,line width=0.5mm,dashed](blockA) edge (blockC);
  \path[->,line width=0.5mm,dashed](blockB) edge (blockC);
\end{pgfonlayer}
\node (c3) [above = 0.2cm of N1, rectangle, minimum size = 2mm, text height=0.05mm, text depth=0.1mm,inner sep = 0.3mm,shift={(-0.3,1)},draw=none, text height = 0.1mm]{\textit{INAR}\\ \textit{convergence}};
\node (t3) [above = 1cm of N2,rectangle, draw=none]{(c) \\Hawkes};
   \end{tikzpicture}
    \caption{Connect three causal processes: revolution and degration of causal process.\\
    (a) classic autoregressive model that allows time-delayed causal influences. (b) causal process featured by a stochastic differential dynamics. (c) Hawkes process, a special self-exciting process.NE}.
    \label{fig:causal_hawkes}
\end{figure}

\end{document}